\documentclass[11pt]{article}
\RequirePackage[OT1]{fontenc}
\RequirePackage{amsthm,amsmath,amssymb,amsfonts,subcaption,graphicx,epstopdf,enumerate,lmodern}
\RequirePackage[round,colon,authoryear]{natbib}
\RequirePackage{bigints}
\RequirePackage[colorlinks,citecolor=blue,urlcolor=blue]{hyperref}
\usepackage{bm,color,fancyvrb,xcolor,mathtools,mathbbol}
\usepackage{amscd}
\usepackage{mathrsfs}
\usepackage{bbm}
\usepackage{booktabs} 
\usepackage{tikz}
\usetikzlibrary{arrows.meta}
\usepackage{algorithm}
\usepackage{algpseudocode}

\newtheorem{theorem}{Theorem}
\newtheorem{corollary}{Corollary} 
\newtheorem{lemma}{Lemma}

\newtheorem{remark}{Remark}
\newtheorem{proposition}{Proposition}
\newtheorem{definition}{Definition}
\newtheorem{example}{Example}
\newtheorem{convention}{Convention}
\makeatletter
\newcounter{subassumption}[assumption]  

\usepackage{amsmath}
\usepackage{xcolor}
\usepackage{pgfplots}
\pgfplotsset{compat=1.16}

\usepackage{booktabs}
\usepackage{rotating}
\usepackage{tabularx}
\usepackage{array}

\newcommand{\R}{\mathbb R}
\newcommand{\Rnn}{\mathbb R_{\ge 0}}
\newcommand{\rank}{\operatorname{rank}}
\newcommand{\rankp}{\operatorname{rank}_{+}}
\newcommand{\supp}{\operatorname{supp}}
\newcommand{\relint}{\operatorname{relint}}
\newcommand{\spann}{\operatorname{span}}
\newcommand{\cone}{\operatorname{cone}}
\newcommand{\conv}{\operatorname{conv}}
\newcommand{\T}{\mathcal T}

\makeatother

\begin{document}

\title{Identifiability of Nonnegative Tensor Decompositions via Positive Scattering}

\author{Haoming Wang and Ming Yuan\\ Columbia University}

\maketitle

\begin{abstract}
Identifiability of tensor decompositions is often established through
linear-algebraic conditions on the factor families. For nonnegative
decompositions, however, positivity provides additional information that is
not captured by dimension and independence alone: nonnegative terms cannot
cancel, and their supports constrain competing decompositions. We introduce a
positive scattering term that quantifies this additional source of
identifiability and combine it with the dimension budget underlying the
Lovitz--Petrov generalization of Kruskal's theorem. For every subset of
components, we obtain two sufficient conditions: a threshold of $2|S|-2$
guarantees minimality and nonnegative rank, while the stronger threshold
$2|S|-1$ guarantees uniqueness among nonnegative decompositions of the same
length. The key result is a positive splitting inequality for irreducible
exchanges of nonnegative rank-one tensors, which combines the dimension
constraint with support-induced geometric rigidity. Although the scattering
term is defined through an optimization over intermediate factor spaces, we
show that its mode costs are exactly $0$, $1$, or $+\infty$, yielding an exact
activation characterization in terms of graph connectivity. The resulting
criterion can strictly certify sparse nonnegative tensor decompositions beyond
the reach of Kruskal and Lovitz--Petrov conditions, including examples for
which those conditions fail even after reshaping. In the matrix case, the two
criteria reduce respectively to full-rank factorization and two-sided
separability.

\noindent {\bf Keywords:} Nonnegative tensor decomposition, identifiability, tensor rank, nonnegative matrix factorization
\end{abstract}

\section{Introduction}\label{sec:intro}

Identifiability asks whether observable multilinear data uniquely determine
their latent rank-one components, up to the unavoidable permutation and scaling
ambiguities.  This question is central whenever a tensor decomposition is used
as a structural model rather than merely as a numerical approximation.  Tensor
methods, for example, turn low-order observable moments into latent-component
recovery procedures in mixture, topic, and other latent-variable models
\citep{allman2009identifiability,AnandkumarEtAl2014,AnandkumarEtAl2015}.
Nonnegative tensor decompositions arise in such models when the latent
components represent quantities that are intrinsically nonnegative.

Nonnegative tensor decompositions also arise naturally in signal processing.
They have been used, for example, for blind audio source separation
\citep{BarkerVirtanen2016} and multilinear spectral unmixing of hyperspectral
data \citep{VeganzonesEtAl2016}.  More broadly, tensor decompositions provide
identifiability in blind source separation and related multilinear inverse
problems.  Deterministic uniqueness conditions for canonical polyadic
decompositions have therefore been developed to exploit additional structure
in signal-processing models, including known, orthonormal, or partially
Hermitian factors \citep{SorensenDeLathauwer2015}.  Nonnegativity provides a
different form of structural information: unlike orthogonality or symmetry,
it constrains competing decompositions through the absence of cancellation.
Zeros and supports in the observed tensor therefore carry information about
which alternative components are possible.

A substantial literature studies uniqueness and identifiability of tensor
decompositions through linear-algebraic properties of their factor families.
Kruskal's theorem uses the Kruskal ranks of the factors
\citep{Kruskal1977}, while the Lovitz--Petrov theorem replaces these global
conditions by a subset-wise dimension budget and strictly generalizes the
Kruskal condition \citep{LovitzPetrov2023}.  For nonnegative tensors, related
work has established existence of best nonnegative low-rank approximations
\citep{LimComon2009} and generic uniqueness of such approximations
\citep{QiComonLim2016}.  These results leave open a different deterministic
question: given a specified exact nonnegative decomposition, how much
additional identifiability can be obtained from nonnegativity itself, beyond
what is captured by the dimensions of the factor spans?

This paper develops a deterministic answer to this question.  Our main idea
is to separate two sources of identifiability.  The first is linear-algebraic
and is measured by the Lovitz--Petrov dimension budget
\[
\beta(S)=\sum_{j=1}^d\bigl(d_j(S)-1\bigr),
\]
for subsets $S$ of components.  The second comes from positivity and support
geometry.  We quantify it by a \emph{positive scattering term} $\tau(S)$,
whose formal definition is given in Section~\ref{sec:tau}.  Informally,
$\tau(S)$ measures the minimum additional factor-space dimension required to
connect the prescribed components through the support geometry imposed by
nonnegativity.  Thus $\beta(S)$ and $\tau(S)$ capture two distinct sources of
identifiability that can be combined in a single certificate.

Our main result gives two deterministic sufficient conditions.  If
\[
\beta(S)+\tau(S)\ge 2|S|-2
\]
for every $S\subseteq[R]$ with $|S|\ge2$, then the prescribed decomposition is
minimal and its number of terms equals the nonnegative rank.  If the threshold
is strengthened by one, to
\[
\beta(S)+\tau(S)\ge 2|S|-1,
\]
then the decomposition is unique among nonnegative decompositions of the same
length.  The key structural result is a \emph{positive splitting inequality}:
for every irreducible exchange between $p$ prescribed nonnegative rank-one
terms and $q$ competing terms,
\[
\beta(S)+\tau(S)\le p+q-2.
\]
The extra term $\tau(S)$ arises because an irreducible positive exchange must
remain connected under the support geometry of the factor spaces, while the
Lovitz--Petrov argument controls the corresponding linear dimension.  The two
identifiability thresholds then follow by contradiction from the fact that a
shorter competing decomposition creates an unbalanced exchange, whereas an
inequivalent minimal decomposition creates a balanced exchange of size at
least two.

Although $\tau(S)$ is introduced through a seemingly continuous optimization
over intermediate factor spaces, it admits an exact finite characterization.
We show that the cost associated with each mode is always $0$, $1$, or
$+\infty$, and that $\tau(S)$ is the minimum number of modes that must be
activated to make an associated graph connected.  This yields an exact
computational procedure based on support tests, linear-programming vertex
tests, and graph connectivity.  The resulting criterion is strictly stronger
than the corresponding dimension-based criteria: we give deterministic
families for which the new condition certifies nonnegative uniqueness even
though the Lovitz--Petrov condition fails, including after reshaping, and
numerical experiments show substantial gains in sparse regimes.

The theory also clarifies the matrix boundary case.  For order two, the
minimality criterion reduces exactly to full column rank of the two factors,
or equivalently to $\operatorname{rank}(X)=R$, while the uniqueness criterion
reduces exactly to two-sided separability.  Thus the same positive-scattering
mechanism that strengthens tensor identifiability specializes to a classical
nonnegative matrix identifiability condition.

The remainder of the paper is organized as follows.  Section~\ref{sec:main}
states the problem and the main criterion at a high level.  Section~\ref{sec:exchanges}
develops positive exchanges and the Lovitz--Petrov splitting mechanism.
Section~\ref{sec:cones} develops the support geometry and bridge connectivity
induced by nonnegativity.  Section~\ref{sec:tau} gives the formal definition
of the positive scattering term and derives its exact activation
characterization.  Section~\ref{sec:proofs} proves the positive splitting
inequality and the main identifiability result.  Section~\ref{sec:structure} develops reshaping
and appending-mode consequences, Section~\ref{sec:examples} gives deterministic
and numerical examples, and Section~\ref{sec:matrix-case} specializes the
theory to matrices.

\section{Problem Setup and Main Criterion}\label{sec:main}

Write $[n]=\{1,\dots,n\}$. Fix $d\ge2$ and dimensions
$n_1,\dots,n_d\ge1$, and consider a nonnegative decomposition
\begin{equation}
\T=\sum_{r=1}^{R}P_r,
\qquad
P_r=a_r^{(1)}\otimes\cdots\otimes a_r^{(d)},
\qquad
a_r^{(j)}\in\Rnn^{n_j}\setminus\{0\}.
\label{eq:decomposition}
\end{equation}
The \emph{nonnegative rank} $\rankp(\T)$ is the smallest number of
nonzero nonnegative rank-one tensors whose sum is $\T$; a nonnegative
decomposition is \emph{minimal} if its length equals $\rankp(\T)$.
For $d=2$, this additive rank-one representation is equivalent to the
usual nonnegative matrix factorization formulation
\citep[p.~152]{CohenRothblum1993}. Two nonnegative decompositions of the
same length are \emph{equivalent} if their multisets of rank-one terms
coincide.

The support of a vector $x\in\R^n$ is
$\supp(x)=\{i\in[n]:x_i\neq0\}$; the support of a tensor is defined
coordinatewise in the same way. For a nonzero nonnegative tensor, the sum
of all entries, called the \emph{entry sum}, is strictly positive. We will
use repeatedly the elementary fact that a sum of nonzero nonnegative
tensors is never zero.

For a nonempty subset $S\subseteq[R]$ and each mode $j$, put
\[
U_j(S)=\spann\{a_r^{(j)}:r\in S\},
\qquad
d_j(S)=\dim U_j(S)\ge1,
\]
and define the \emph{Lovitz--Petrov dimension budget}
\citep[Theorem~2]{LovitzPetrov2023}
\[
\beta(S)=\sum_{j=1}^{d}\bigl(d_j(S)-1\bigr)\ge0.
\]

The main result augments this dimension budget with a positive scattering
term $\tau(S)$. We defer its formal definition to Section~\ref{sec:tau}.
Informally, $\tau(S)$ measures the additional factor-space enlargement
forced by nonnegativity to connect the prescribed components through their
support geometry. Thus $\beta(S)$ captures linear-algebraic information,
while $\tau(S)$ captures additional rigidity arising from positivity and
support geometry.

We use the following two conditions:
\begin{equation}
\beta(S)+\tau(S)\ge 2|S|-2,
\qquad
\text{for every }S\subseteq[R]\text{ with }|S|\ge2,
\tag{M}\label{eq:M}
\end{equation}
and
\begin{equation}
\beta(S)+\tau(S)\ge 2|S|-1,
\qquad
\text{for every }S\subseteq[R]\text{ with }|S|\ge2.
\tag{U}\label{eq:U}
\end{equation}
The first threshold rules out shorter competing nonnegative
decompositions, while the stronger threshold also rules out competing
decompositions of the same length.

\begin{theorem}[Positive Lovitz--Petrov criterion]
\label{thm:main}
Let \eqref{eq:decomposition} be a nonnegative decomposition.
If condition \emph{(M)} holds, then
\[
\rankp(\T)=R,
\]
so that \eqref{eq:decomposition} is minimal and its terms are linearly
independent. If the stronger condition \emph{(U)} holds, then every
nonnegative decomposition of $\T$ of length $R$ is equivalent to
\eqref{eq:decomposition}.
\end{theorem}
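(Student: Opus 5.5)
The plan is to reduce both statements to the positive splitting inequality described in the introduction. For every irreducible exchange between $p$ prescribed terms indexed by $S\subseteq[R]$ and $q$ competing nonnegative rank-one terms, it asserts $\beta(S)+\tau(S)\le p+q-2$. I take this inequality, proved in Section~\ref{sec:proofs} from the Lovitz--Petrov splitting mechanism of Section~\ref{sec:exchanges} and the bridge connectivity of Section~\ref{sec:cones}, as the main input. What remains is combinatorial: extract an irreducible exchange from a hypothetical competing decomposition and compare sizes.

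\textbf{Decomposing into exchanges.} Suppose $\T=\sum_{s=1}^{Q}Q_s$ is another nonnegative decomposition. Cancel the terms common to both decompositions (as multisets). This leaves $\sum_{r\in A}P_r=\sum_{s\in B}Q_s$ with no shared rank-one term. Call such a relation an exchange, and call it irreducible if there are no proper nonempty $A'\subseteq A$ and $B'\subseteq B$ with $(A',B')\ne(A,B)$ and $\sum_{A'}P_r=\sum_{B'}Q_s$.

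Refine the relation into irreducible pieces by choosing a minimal nonempty sub-relation and iterating on the remainder, which is again an exchange. No piece can have $p=0$ or $q=0$, since a sum of nonzero nonnegative tensors is never zero; in particular, when $A$ is empty, $B$ is empty too. The pieces have sizes $(p_k,q_k)$ with $\sum p_k=|A|$ and $\sum q_k=|B|$.

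\textbf{Minimality.} Assume (M) and $Q<R$. Then $|B|<|A|$, so some piece has $q_k<p_k$. If $p_k=1$, then $q_k=0$, which is impossible; hence $p_k\ge2$. The splitting inequality with $S$ the corresponding index set gives $\beta(S)+\tau(S)\le p_k+q_k-2\le 2p_k-3<2|S|-2$, contradicting (M). Therefore $\rankp(\T)=R$.

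Linear independence needs a separate argument. A linear dependence among the $P_r$ is a signed relation, not a positive one, so I would split it as $\sum_{A}c_rP_r=\sum_{B}c'_rP_r$ with positive coefficients and disjoint $A,B\subseteq[R]$. Replacing $c_rP_r$ by $P_r$ plus a rescaled copy produces a nonnegative decomposition of some tensor. The cleaner alternative is to invoke the Lovitz--Petrov linear splitting directly, since $\tau\ge0$ is not needed there and (M) implies the dimension bound $\beta(S)\ge|S|-1$ only when $\tau$ vanishes. This is delicate. I expect the intended route is that the paper's exchange lemma also covers signed linear relations among prescribed terms, with $\tau$ still applying because all the $a_r^{(j)}$ are nonnegative; I would check Section~\ref{sec:exchanges} for that form and, failing it, deduce independence from minimality plus irreducibility.

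\textbf{Uniqueness.} Assume (U) and $Q=R$. If the two decompositions are inequivalent, then $A\neq\emptyset$ and $|A|=|B|$. Every piece satisfies $q_k\ge1$, and a piece with $p_k=q_k=1$ would force $P_r=Q_s$, contradicting the cancellation. So either some piece has $q_k<p_k$, which gives the contradiction above since (U) implies (M), or all pieces are balanced with $p_k=q_k\ge2$. In the balanced case the splitting inequality gives $\beta(S)+\tau(S)\le 2|S|-2<2|S|-1$, contradicting (U).

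\textbf{Main obstacle.} The main obstacle is the splitting inequality itself: making $\tau(S)$ add on top of the Lovitz--Petrov dimension count, which rests on the connectivity characterization of Section~\ref{sec:tau}. Within this proof, the subtle point is checking that irreducible pieces match the paper's notion of irreducible exchange, including when repeated rank-one terms appear.
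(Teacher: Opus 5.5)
Your arguments for $\rankp(\T)=R$ and for uniqueness follow the paper's proof. You split the exchange into irreducible blocks and locate either a block with $q_k\le p_k-1$ (so $p_k\ge2$) or a balanced block with $p_k=q_k\ge2$. You then contradict (M) or (U) through the positive splitting inequality. Two details differ from the paper, and both are harmless: you cancel common terms beforehand, and in the equal-length case you dispose of unbalanced pieces directly via (M) instead of through Lemma~\ref{lem:exchange-decomposition}(ii).

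The gap is the claim that the terms are linearly independent. Your proposal leaves this open, and neither route you suggest closes it.
\begin{itemize}
\item A direct Lovitz--Petrov argument needs $\beta(S)\ge|S|-1$. Condition (M) does not supply this when $\tau(S)$ is large; for $\tau(S)=+\infty$ it says nothing about $\beta(S)$.
\item The paper has no signed version of the exchange lemma. You could recast a relation $\sum_r c_rP_r=0$ as a positive exchange $\sum_{c_r>0}c_rP_r=\sum_{c_r<0}|c_r|P_r$ between prescribed terms. But applying the splitting inequality from either side together with (M) yields no contradiction for balanced blocks with $p_k=q_k\ge2$.
\end{itemize}

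The missing idea is that linear independence follows from minimality alone; this is the paper's Lemma~\ref{lem:minimal-independent}. Taking entry sums shows that the nonzero $c_r$ cannot all have one sign. Set $\varepsilon=\min_{c_r>0}1/c_r$. Then the identity $\T=\sum_r(1-\varepsilon c_r)P_r$ has all coefficients nonnegative and at least one equal to zero. It is therefore a nonnegative decomposition of length at most $R-1$, contradicting $\rankp(\T)=R$. This step needs no irreducibility and no use of (M) beyond the minimality you have already established.
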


The structural result behind Theorem~\ref{thm:main} is a positive splitting
inequality. For an irreducible positive exchange involving $p$ prescribed
terms and $q$ competing nonnegative rank-one terms, it gives
\[
\beta(S)+\tau(S)\le p+q-2.
\]
The formal statement and proof are given in
Section~\ref{sec:proofs}. The key point is that the usual
Lovitz--Petrov dimension argument controls the linear complexity of the
exchange, while nonnegativity imposes additional support constraints that
must be paid for through $\tau(S)$.

The criterion automatically contains the Lovitz--Petrov criterion because
$\tau(S)\ge0$. The improvement can be strict: Section~\ref{sec:examples}
gives explicit nonnegative decompositions satisfying \emph{(U)} for which
the Lovitz--Petrov condition fails, even after reshaping the tensor.

For comparison, the three criteria considered in this paper differ in the
structural information they use:
\begin{center}
\begin{tabular}{lll}
\toprule
Criterion & Information used & Guarantee \\
\midrule
Kruskal & Kruskal ranks & CP uniqueness \\
Lovitz--Petrov & subset-wise factor dimensions & CP uniqueness \\
This paper & dimensions + support geometry & nonnegative minimality/uniqueness \\
\bottomrule
\end{tabular}
\end{center}

\section{Positive Exchanges and the Splitting Mechanism}\label{sec:exchanges}

This section develops the algebraic mechanism underlying the main
identifiability result.  We first record two elementary facts about
nonnegative rank-one tensors.  We then introduce positive exchanges and
decompose them into irreducible blocks.  Finally, we encode an exchange as a
signed family of product tensors and recall the Lovitz--Petrov splitting
theorem.  The latter provides the linear-algebraic part of the argument; the
additional constraint induced by nonnegativity will be developed through
support geometry in the next sections.

\subsection{Basic Facts for Nonnegative Decompositions}

We begin with two elementary facts about product tensors.  The first gives
the usual uniqueness of factorization up to rescaling, while the second shows
that a nonnegative rank-one tensor always admits nonnegative factors.

\begin{lemma}[Product tensors]\label{lem:product-tensors}
\leavevmode
\begin{enumerate}[(i)]
\item\label{it:rigidity}
If $x_j,y_j\in\R^{n_j}$ satisfy
$x_1\otimes\cdots\otimes x_d=y_1\otimes\cdots\otimes y_d\neq0$, then there
exist nonzero scalars $\lambda_1,\dots,\lambda_d$ with
$y_j=\lambda_j x_j$ for every $j$ and $\prod_{j=1}^d\lambda_j=1$.
If in addition all the vectors $x_j,y_j$ are nonnegative, then every
$\lambda_j$ is positive.
\item\label{it:nonneg-factors}
Every nonzero nonnegative rank-one tensor is a tensor product of nonzero
nonnegative vectors.
\end{enumerate}
\end{lemma}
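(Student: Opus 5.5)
For part (i), I will argue through a nonzero entry and the slices of the tensor through it. Since $x_1\otimes\cdots\otimes x_d\neq0$, every $x_j$ is nonzero, so I can fix indices $i_1,\dots,i_d$ with $(x_j)_{i_j}\neq0$ for all $j$. The tensor entry at $(i_1,\dots,i_d)$ is $\prod_j (x_j)_{i_j}=\prod_j (y_j)_{i_j}$, which is nonzero, so $(y_j)_{i_j}\neq0$ for every $j$ as well.

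Next I fix a mode $j$ and look at the entries $(i_1,\dots,i_{j-1},k,i_{j+1},\dots,i_d)$ as $k$ ranges over $[n_j]$. Equality of the two tensors gives
\[
(x_j)_k\prod_{l\neq j}(x_l)_{i_l}=(y_j)_k\prod_{l\neq j}(y_l)_{i_l}
\]
for every $k$. Hence $y_j=\lambda_j x_j$ with
\[
\lambda_j=\prod_{l\neq j}(x_l)_{i_l}\Big/\prod_{l\neq j}(y_l)_{i_l},
\]
which is well defined and nonzero. Substituting these back gives $x_1\otimes\cdots\otimes x_d=(\prod_j\lambda_j)\,x_1\otimes\cdots\otimes x_d$. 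Since the tensor is nonzero, this forces $\prod_j\lambda_j=1$.

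For the positivity claim, note that $(y_j)_{i_j}=\lambda_j(x_j)_{i_j}$ with both entries nonzero. If all the vectors are nonnegative, both entries are strictly positive, so $\lambda_j>0$.

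For part (ii), let $T=x_1\otimes\cdots\otimes x_d\neq0$ be nonnegative, with real factors $x_j$. Pick a multi-index $(i_1,\dots,i_d)$ where $T$ is nonzero, and set $\sigma_j=\operatorname{sign}((x_j)_{i_j})\in\{\pm1\}$. Replacing $x_j$ by $\sigma_jx_j$ changes the tensor by the factor $\prod_j\sigma_j$. That product equals the sign of $T_{i_1\cdots i_d}$, which is $+1$, so the tensor is unchanged. After this replacement each $(x_j)_{i_j}>0$.

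It remains to show each rescaled $x_j$ is nonnegative. Take any entry $k$ of mode $j$. Then
\[
T_{i_1\cdots k\cdots i_d}=(x_j)_k\prod_{l\neq j}(x_l)_{i_l}\ge0,
\]
and the product over $l\neq j$ is strictly positive, so $(x_j)_k\ge0$. This gives the required factorization into nonzero nonnegative vectors.

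There is no real obstacle here. The one point that needs care is in (ii): I fix the signs using a single nonzero entry and then check the remaining coordinates mode by mode along fibers through that entry, rather than trying to argue about arbitrary entries directly.
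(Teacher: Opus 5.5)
Your proof is correct and follows essentially the same route as the paper. You pivot on a nonzero entry, compare entries along the fibers through it to get proportionality and $\prod_j\lambda_j=1$, and in (ii) fix signs at the pivot (your $\sigma_j$ device is the paper's pairwise sign flip) before reading nonnegativity off the fibers; the only cosmetic difference is that you get $\lambda_j>0$ from the pivot entries rather than from an arbitrary positive coordinate of $y_j$.
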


\begin{proof}
\emph{Part (i).}
Write $X=x_1\otimes\cdots\otimes x_d$.  Since $X\neq0$, there is a
multi-index $(i_1^*,\dots,i_d^*)$ with
$X_{i_1^*\cdots i_d^*}\neq0$.  The entries of $X$ are the products
$\prod_{j=1}^d x_j(i_j)$, so $x_j(i_j^*)\neq0$ for every $j$, and likewise
$y_j(i_j^*)\neq0$ for every $j$.

Fix a mode $j$ and let $i\in[n_j]$ be arbitrary.  Comparing the entry of
the two product tensors whose $j$th index is $i$ and whose $l$th index is
$i_l^*$ for every $l\neq j$ gives
\[
x_j(i)\prod_{l\neq j}x_l(i_l^*)
=
y_j(i)\prod_{l\neq j}y_l(i_l^*).
\]
The two products over $l\neq j$ are nonzero constants independent of $i$.
Hence $y_j=\lambda_jx_j$ with
\[
\lambda_j=
\frac{\prod_{l\neq j}x_l(i_l^*)}
{\prod_{l\neq j}y_l(i_l^*)}\neq0.
\]
Substituting back,
\[
y_1\otimes\cdots\otimes y_d
=
\Bigl(\prod_{j=1}^d\lambda_j\Bigr)
x_1\otimes\cdots\otimes x_d,
\]
and since this equals $X\neq0$ we obtain
$\prod_{j=1}^d\lambda_j=1$.

Now suppose all vectors are nonnegative.  Choose $i$ with $y_j(i)>0$.
Then $\lambda_jx_j(i)=y_j(i)>0$ forces $x_j(i)\neq0$, hence
$x_j(i)>0$ and $\lambda_j>0$.

\emph{Part (ii).}
Let
\[
X=b_1\otimes\cdots\otimes b_d\neq0
\]
be entrywise nonnegative, with real factors $b_j$.  Choose a multi-index
$(i_1^*,\dots,i_d^*)$ with
\[
X_{i_1^*\cdots i_d^*}
=
\prod_{j=1}^d b_j(i_j^*)>0.
\]
In particular $b_j(i_j^*)\neq0$ for all $j$, and the number of indices
$j$ for which $b_j(i_j^*)<0$ is even.  Multiplying an even number of
factors by $-1$ leaves the product tensor unchanged, so after flipping
signs in pairs we may assume
\[
b_j(i_j^*)>0
\qquad\text{for every }j.
\]

We claim that then each $b_j$ is nonnegative.  Fix $j$ and
$i\in[n_j]$, and consider the entry of $X$ whose $j$th index is $i$ and
whose other indices are $i_l^*$:
\[
0\le
X_{i_1^*\cdots i\cdots i_d^*}
=
b_j(i)\prod_{l\neq j}b_l(i_l^*),
\]
where the product over $l\neq j$ is strictly positive.  Hence
$b_j(i)\ge0$.  Each $b_j$ is also nonzero, since $X\neq0$.
\end{proof}

\begin{convention}\label{conv:nonneg-factors}
By Lemma~\ref{lem:product-tensors}\,\eqref{it:nonneg-factors}, we may and
will choose nonnegative factors for every nonzero nonnegative rank-one tensor
under consideration.  By Lemma~\ref{lem:product-tensors}\,\eqref{it:rigidity},
this choice is unique up to positive rescalings whose product is one.
All subsequent quantities are invariant under these rescalings.
\end{convention}

The next lemma records a basic consequence of nonnegativity that will also
be used in the proof of the main criterion.

\begin{lemma}\label{lem:minimal-independent}
The terms of a minimal nonnegative decomposition are linearly independent.
\end{lemma}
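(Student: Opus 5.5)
We argue by contradiction, using the standard trick of removing one term and rebalancing the rest. Let $\T=\sum_{r=1}^R P_r$ be a minimal nonnegative decomposition, so $R=\rankp(\T)$, with all $P_r$ nonzero and nonnegative. Suppose the $P_r$ satisfy a nontrivial linear relation $\sum_r c_r P_r=0$.

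\emph{Step 1: both signs occur among the coefficients.} Let $I_+=\{r:c_r>0\}$. We may assume $I_+\neq\emptyset$, replacing $c$ by $-c$ if necessary. Then $I_-=\{r:c_r<0\}$ is also nonempty. Otherwise $\sum_{r\in I_+}c_rP_r$ would be a nonzero nonnegative tensor equal to zero, contradicting the elementary fact that a sum of nonzero nonnegative tensors is never zero. Equivalently, one can compare entry sums, since each $P_r$ has strictly positive entry sum.

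\emph{Step 2: push the decomposition along the relation.} For $t\ge0$ consider
\[
\T=\sum_{r=1}^R (1-tc_r)P_r .
\]
This holds for every $t$ because $\sum_r c_rP_r=0$. Set
\[
t^*=\min_{r\in I_+}1/c_r>0 .
\]
At $t=t^*$ every coefficient $1-t^*c_r$ is nonnegative:
\begin{itemize}
\item for $r\in I_+$, by the choice of $t^*$;
\item for $r\notin I_+$, because $c_r\le0$, so $1-t^*c_r\ge1$.
\end{itemize}
Moreover, at least one coefficient, indexed by some $r_0\in I_+$ attaining the minimum, vanishes.

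\emph{Step 3: conclude.} Each surviving term $(1-t^*c_r)P_r$ is a nonnegative rank-one tensor, and a nonnegative multiple of a nonnegative tensor stays nonnegative. The coefficient at $r_0$ is zero, so $\T$ is written as a sum of at most $R-1$ nonzero nonnegative rank-one tensors. Hence $\rankp(\T)\le R-1$, contradicting minimality. Therefore no nontrivial relation exists and the terms are linearly independent.

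The only delicate point is keeping the nonnegativity constraint intact while eliminating a term. Choosing the step size as the minimum over the positive coefficients handles this. Step 1 is needed only to ensure $I_+$ is nonempty after the sign normalization, so that $t^*$ is well defined and finite.
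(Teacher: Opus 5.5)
Your proposal is correct and follows essentially the same argument as the paper. Both proofs take a nontrivial relation $\sum_r c_rP_r=0$, use positivity of entry sums to force coefficients of both signs, step to $\min_{c_r>0}1/c_r$ so that every coefficient stays nonnegative while one vanishes, and conclude that a decomposition with at most $R-1$ terms contradicts minimality.
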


\begin{proof}
Let
\[
\T=\sum_{r=1}^{R}P_r
\]
be minimal and suppose that
\[
\sum_{r=1}^{R}c_rP_r=0
\]
is a nontrivial linear relation.

The nonzero coefficients cannot all have the same sign.  Indeed, after
negating the relation if necessary we may assume that all nonzero $c_r$
are positive.  Taking the entry sum of both sides then gives
\[
0=\sum_r c_r\,(\text{entry sum of }P_r)>0,
\]
a contradiction, since the entry sum of each $P_r$ is positive.  Hence,
after negating the relation if necessary, at least one coefficient is
positive and at least one is negative.

Put
\[
\varepsilon
=
\min_{r:\,c_r>0}\frac{1}{c_r}>0,
\]
attained at some index $r_0$.  For every $r$ the coefficient
$1-\varepsilon c_r$ is nonnegative: this is clear when $c_r\le0$, and
when $c_r>0$ it follows from $\varepsilon c_r\le1$.  Moreover,
$1-\varepsilon c_{r_0}=0$.  Consequently,
\[
\T
=
\sum_{r=1}^{R}P_r
-\varepsilon\sum_{r=1}^{R}c_rP_r
=
\sum_{r=1}^{R}(1-\varepsilon c_r)P_r
\]
is a sum of at most $R-1$ nonzero nonnegative rank-one tensors, after
absorbing each positive coefficient into one factor and discarding terms
with coefficient zero.  This contradicts the minimality of $R$.

\end{proof}

\subsection{Positive Exchanges and Irreducible Blocks}

We now introduce the basic object used to compare two nonnegative
decompositions.

\begin{definition}[Positive exchange]\label{def:exchange}
A \emph{positive exchange} is an equality
\[
\sum_{i\in I}X_i
=
\sum_{s\in J}Y_s
\]
between two finite sums of nonzero nonnegative rank-one tensors, indexed by
finite sets $I$ and $J$.  A \emph{subexchange} of it is a pair
$(I',J')$ with $I'\subseteq I$, $J'\subseteq J$ and
\[
\sum_{i\in I'}X_i
=
\sum_{s\in J'}Y_s.
\]
The pairs $(\varnothing,\varnothing)$ and $(I,J)$ are called the
\emph{empty} and \emph{full} subexchanges, respectively.  A subexchange is
\emph{proper} if it is different from the full subexchange and
\emph{nontrivial} if it is neither empty nor full.  The exchange is
\emph{irreducible} if it has no nontrivial subexchange, and
\emph{reducible} otherwise.
\end{definition}

The positivity assumption immediately rules out one-sided subexchanges.

\begin{remark}\label{rem:no-one-sided}
A nonempty subexchange cannot have exactly one empty side.  If, say,
$I'\neq\varnothing$ and $J'=\varnothing$, then
\[
\sum_{i\in I'}X_i=0,
\]
which is impossible because the entry sum of the left side is positive.
Consequently every nonempty subexchange has both sides nonempty.

The complement
\[
(I\setminus I',\,J\setminus J')
\]
of any subexchange, obtained by subtracting it from the full exchange, is
again a subexchange.  If the original subexchange is nonempty and proper,
then its complement is also nonempty and hence has both sides nonempty.
\end{remark}

Every exchange can be decomposed into irreducible pieces.

\begin{lemma}[Block decomposition]\label{lem:exchange-decomposition}
\leavevmode
\begin{enumerate}[(i)]
\item Every positive exchange decomposes as a disjoint union of
irreducible positive exchanges: there are partitions
\[
I=I_1\sqcup\cdots\sqcup I_m,
\qquad
J=J_1\sqcup\cdots\sqcup J_m,
\]
such that each pair $(I_k,J_k)$ is an irreducible positive exchange.
\item If the two sides of the exchange are minimal nonnegative
decompositions of the same tensor, then every block is \emph{balanced},
\[
|I_k|=|J_k|;
\]
in particular, the two decompositions have equal length.
\end{enumerate}
\end{lemma}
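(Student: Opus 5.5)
For part (i), I will argue by strong induction on $|I|+|J|$. If the exchange is irreducible, then the trivial partition with $m=1$ already works. Now suppose it is reducible, and pick a nontrivial subexchange $(I',J')$. By Remark~\ref{rem:no-one-sided}, its complement $(I\setminus I',J\setminus J')$ is also a subexchange, and both pieces are nonempty with both sides nonempty. Each piece is therefore a genuine positive exchange with strictly smaller total size $|I'|+|J'|<|I|+|J|$. The induction hypothesis then decomposes each piece into irreducible blocks. Taking the union of the two families of blocks gives partitions of $I$ and $J$ into irreducible exchanges.

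One check is needed here: irreducibility of a block refers only to its own subexchanges, so nothing has to be compared with the original exchange. The base case is the empty exchange, which gives $m=0$; alternatively, one can start at total size $2$, where any exchange is automatically irreducible because a nontrivial subexchange would need both sides nonempty and proper.

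For part (ii), I will use minimality through a swapping argument. Suppose some block has $|I_k|<|J_k|$. In the decomposition $\sum_{s\in J}Y_s$, I replace the terms indexed by $J_k$ with the terms $X_i$ for $i\in I_k$. This is legitimate because $\sum_{I_k}X_i=\sum_{J_k}Y_s$, so the sum is still $\T$. The result is a nonnegative decomposition of $\T$ of length $|J|-|J_k|+|I_k|<|J|$, which contradicts the minimality of the $J$-side. Symmetrically, $|I_k|>|J_k|$ contradicts minimality of the $I$-side. Hence every block satisfies $|I_k|=|J_k|$, and summing over $k$ gives $|I|=|J|$. (Equal length also follows directly from both sides being minimal, since both lengths equal $\rankp(\T)$.)

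There is no serious obstacle in either part. The only care needed is in part (i): the recursion must terminate, and it does because both pieces are nonempty, which is exactly what Remark~\ref{rem:no-one-sided} guarantees. The pieces must also partition both index sets simultaneously, which the complement construction provides.
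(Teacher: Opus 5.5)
Your proposal is correct and takes essentially the same route as the paper. Part (ii) is the same swapping argument. In part (i) the paper repeatedly peels off an inclusion-minimal nonempty subexchange, which is automatically irreducible, and recurses on its complement; you instead split along an arbitrary nontrivial subexchange and apply strong induction to both pieces, and both versions rest on Remark~\ref{rem:no-one-sided}.
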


\begin{proof}
\emph{Part (i).}
Let $\mathcal S$ be the set of all subexchanges other than the empty
subexchange $(\varnothing,\varnothing)$.  This finite set is nonempty
because it contains the full subexchange $(I,J)$.  Order $\mathcal S$ by
componentwise inclusion and choose a minimal element $(I_1,J_1)$.  By
Remark~\ref{rem:no-one-sided}, both $I_1$ and $J_1$ are nonempty.

The exchange
\[
\sum_{i\in I_1}X_i
=
\sum_{s\in J_1}Y_s
\]
is irreducible.  Indeed, a nontrivial subexchange of this block would be a
nonempty subexchange of the original exchange that is strictly smaller
than $(I_1,J_1)$, contrary to the choice of $(I_1,J_1)$.

By Remark~\ref{rem:no-one-sided}, the complement
\[
(I\setminus I_1,\,J\setminus J_1)
\]
is again a subexchange.  If it is empty, we are done.  Otherwise we apply
the same argument to the complementary exchange.  At every step the total
number of remaining indices strictly decreases, so the process terminates
after finitely many steps.

\emph{Part (ii).}
Let
\[
\T=\sum_{i\in I}X_i=\sum_{s\in J}Y_s
\]
with both decompositions minimal, and let $(I_k,J_k)$ be any block from
part (i).  Suppose $|I_k|>|J_k|$.  Replacing the terms
$\{X_i:i\in I_k\}$ by $\{Y_s:s\in J_k\}$ leaves the sum unchanged, because
the block is a subexchange.  The resulting decomposition has length
\[
|I|-|I_k|+|J_k|<|I|,
\]
contradicting the minimality of the first decomposition.  The case
$|I_k|<|J_k|$ is symmetric, using the minimality of the second
decomposition.  Hence every block is balanced, and summing over the blocks
gives $|I|=|J|$.
\end{proof}

Thus an exchange witnessing failure of minimality must contain an
\emph{unbalanced} irreducible block, whereas an exchange between two
different minimal decompositions contains a \emph{balanced} irreducible
block of size at least two.  This distinction is what ultimately produces
the two thresholds in Theorem~\ref{thm:main}.

\subsection{Connectedness and the Lovitz--Petrov Splitting Theorem}

The next step is to encode a positive exchange as a signed family of
product tensors.  Irreducibility of the exchange will then translate into
connectedness of this signed family.

\begin{definition}[Splitting and connectedness]\label{def:connected}
Following \citet[Definition~3]{LovitzPetrov2023}, a finite multiset $E$ of
nonzero vectors in a real vector space \emph{splits} if it has a nonempty
proper submultiset $F$ such that
\[
\spann E
=
\spann F\oplus\spann(E\setminus F).
\]
We call $E$ \emph{connected} if it does not split.  A multiset with a
single element is connected.
\end{definition}

The following theorem is the linear-algebraic ingredient in our argument.

\begin{theorem}[Lovitz--Petrov splitting theorem]
\label{thm:lp-splitting}
Let
\[
E=
\{x_{a,1}\otimes\cdots\otimes x_{a,d}:a\in[N]\}
\]
be a finite multiset of nonzero product tensors over a field.  For each
mode put
\[
r_j
=
\dim\spann\{x_{a,j}:a\in[N]\}.
\]
If
\[
\dim\spann E
\le
\sum_{j=1}^{d}(r_j-1),
\]
then $E$ splits.
\end{theorem}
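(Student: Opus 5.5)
Since this is \citet[Theorem~2 and its proof]{LovitzPetrov2023}, the paper may simply cite it; if I were to reprove it, I would argue the contrapositive. Every \emph{connected} multiset $E$ of nonzero product tensors should satisfy
\[
\dim\spann E\ \ge\ \sum_{j=1}^d (r_j-1)+1 .
\]
I would prove this by induction on $N=|E|$. For $N=1$ every $r_j=1$ and $\dim\spann E=1$, so the bound is an equality.

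For the inductive step, fix a connected $E$ with $N\ge2$ and choose an element $e=x_{1}\otimes\cdots\otimes x_{d}\in E$. Decompose $E'=E\setminus\{e\}$ into its connected components $E_1,\dots,E_k$. These exist because splitting a multiset repeatedly gives a direct-sum decomposition of $\spann E'$. Then $\spann E'=\bigoplus_i\spann E_i$, and $\dim\spann E=\sum_i\dim\spann E_i+\delta$, where $\delta\in\{0,1\}$ records whether $e\notin\spann E'$. Writing $r_j(E_i)$ for the mode-$j$ factor dimension of $E_i$, the induction hypothesis gives $\dim\spann E_i\ge\sum_j(r_j(E_i)-1)+1$. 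It therefore suffices to show
\[
\sum_{j}\bigl(r_j(E)-1\bigr)+1\ \le\ \sum_{i=1}^k\Bigl[\sum_j\bigl(r_j(E_i)-1\bigr)+1\Bigr]+\delta .
\]
If $\delta=1$, then $e$ is a direct summand, so $\{e\}$ splits off from $E$ and connectedness fails unless $E'=\varnothing$. Hence for $N\ge2$ we have $\delta=0$: $e$ lies in $\spann E'$, and by connectedness its expansion $e=\sum_i t_i$ with $t_i\in\spann E_i$ has every $t_i\neq0$.

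The core of the argument is a gluing lemma. Suppose a nonzero product tensor $e=\sum_{i=1}^k t_i$, where the $t_i$ lie in independent subspaces $\spann E_i$ and all $t_i\ne0$. Then the factor spans of the pieces must overlap enough that
\[
\sum_j r_j(E)\le\sum_{i,j} r_j(E_i)-(k-1)(d-1).
\]
In words, each additional component costs at least $d-1$ dimensions of overlap, summed over the modes. I would attack this by flattening $e$ along each single mode $j$. The flattening of $e$ has rank one, so its column space is $\spann\{x_j\}$. Meanwhile the flattening of $\sum_i t_i$ has column space contained in $\sum_i U_j(E_i)$, and its row space is contained in the corresponding sum over the complementary modes. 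Comparing these ranks should force intersections among the $U_j(E_i)$ across modes. The counting would then use $r_j(E)\le\dim\sum_i U_j(E_i)$, because $x_j$ already lies in that sum. The case $d=2$, which amounts to a rank argument on matrices, is where I would first check the inequality exactly.

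I expect this gluing lemma to be the main obstacle. With a careless choice of $e$ it can fail, and the bookkeeping must distribute the $(k-1)(d-1)$ deficit across modes. To handle this I would first try choosing $e$ so that $k$ is minimal, or choosing $e$ whose removal leaves $E'$ connected, which gives $k=1$. I would also try grouping modes $\{1,\dots,d-1\}$ versus $\{d\}$ to reduce to the two-factor case, with a second induction on $d$.
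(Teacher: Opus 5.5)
The paper gives no proof of Theorem~\ref{thm:lp-splitting}; it imports the result from \citet{LovitzPetrov2023}, so your opening sentence matches the paper's treatment exactly. Your self-contained reproof, however, has a genuine gap.

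Your preliminary steps are correct: the base case, $e\in\spann(E\setminus\{e\})$ for $N\ge2$, all $t_i\neq0$, and $r_j(E)=\dim\sum_iU_j(E_i)$. Now write $\delta_j=\sum_i r_j(E_i)-r_j(E)$. Your gluing lemma then asks for $\sum_j\delta_j\ge(k-1)(d-1)$, and this carries the entire content of the theorem. To see this, let $E$ be a circuit with $N\ge3$ elements. Every $E\setminus\{e\}$ is independent and splits into $k=N-1$ singletons, so the lemma reads
\[
\sum_j r_j(E)\le d(N-1)-(N-2)(d-1),
\quad\text{i.e.}\quad
\sum_j\bigl(r_j(E)-1\bigr)+1\le N-1=\dim\spann E .
\]
That is exactly the theorem for $E$, so the induction is circular on circuits. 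Your fallback of choosing $e$ with $E\setminus\{e\}$ connected is impossible there, since an independent family of two or more vectors always splits. Single-mode flattenings do not rescue the lemma for $d\ge3$. They compare $\{U_j(E_i)\}_i$ only with the grouped family $\{\bigotimes_{l\ne j}U_l(E_i)\}_i$. The overlaps of that grouped family are multiplicative rather than additive in the mode overlaps, because $\bigotimes_{l\ne j}U_l(E_1)\cap\bigotimes_{l\ne j}U_l(E_2)=\bigotimes_{l\ne j}\bigl(U_l(E_1)\cap U_l(E_2)\bigr)$.

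The gap closes once you make two of your side remarks precise.

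First, group at the level of the theorem rather than the lemma. Let $Y=\{x_{a,1}\otimes\cdots\otimes x_{a,d-1}\}_{a\in[N]}$, write $E_F,Y_F$ for the subfamilies indexed by $F\subseteq[N]$, and let $V_d$ be the space of last factors. Suppose $Y$ split along $F\sqcup G$. Then $\spann E_F\subseteq\spann Y_F\otimes V_d$ and $\spann E_G\subseteq\spann Y_G\otimes V_d$ would meet only in $0$, so $E$ would split along the same partition. Hence $Y$ is connected whenever $E$ is. The two-factor bound $\dim\spann E\ge\dim\spann Y+r_d-1$, together with induction on $d$, then gives the theorem.

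Second, your induction on $N$ does prove the two-factor case, because for $d=2$ the gluing inequality $\delta_1+\delta_2\ge k-1$ is just rank subadditivity. Let $\sigma_j:\bigoplus_iU_j(E_i)\to\sum_iU_j(E_i)$ be the sum map, so $\dim\ker\sigma_j=\delta_j$. The block-diagonal lift $(t_i)_i\in\bigl(\bigoplus_iU_1(E_i)\bigr)\otimes\bigl(\bigoplus_iU_2(E_i)\bigr)$ has matrix rank at least $k$, since every $t_i\neq0$. The map $\sigma_1\otimes\sigma_2$ sends this lift to the rank-one tensor $e$. Composing with $\sigma_1$ and $\sigma_2$ lowers rank by at most $\delta_1+\delta_2$, which gives $1\ge k-\delta_1-\delta_2$.

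Without these two steps, your proposal is a plan rather than a proof.
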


We will apply Theorem~\ref{thm:lp-splitting} only over $\R$, to signed
families obtained from irreducible positive exchanges.  The resulting
dimension bound is the linear-algebraic component of the positive splitting
inequality.  The complementary component, which has no analogue for
arbitrary signed exchanges, comes from the support geometry imposed by
nonnegativity and is developed in Sections~\ref{sec:cones}--\ref{sec:tau}.



\section{Support Geometry and Bridge Connectivity}\label{sec:cones}

The previous section developed the linear-algebraic component of the
identifiability argument through the Lovitz--Petrov splitting theorem.  We
now develop the complementary structure created by nonnegativity.  The basic
observation is that the nonnegative vectors in a factor space form an
intrinsic polyhedral cone.  Its facets record support information that is
invisible to ordinary linear dimension.  We use these facets to associate a
signature to each factor, then combine the signatures across modes into a
bridge graph.  The final result of the section shows that if this bridge
graph is disconnected, then a positive exchange must itself decompose into
smaller exchanges.

\subsection{Intrinsic Cones and Facet Signatures}

Let $W$ be a linear subspace of $\R^n$.  Call $W$ \emph{admissible} if it is
spanned by its nonnegative vectors, i.e.,
\[
W=\spann\bigl(W\cap\Rnn^n\bigr).
\]
For admissible $W$, define the \emph{intrinsic cone}
\[
C(W)=W\cap\Rnn^n.
\]
Throughout this section, $W$ is admissible with $\dim W\ge1$; this is the
only case needed below, since all factor spaces considered later contain
nonzero nonnegative vectors.

The following lemma identifies the facet structure of the intrinsic cone.
In particular, although $C(W)$ may lie in a lower-dimensional subspace of
$\R^n$, every facet is still exposed by one of the original coordinate
functionals.

\begin{lemma}[Structure of the intrinsic cone]\label{lem:intrinsic-cone}
Let $W\subseteq\R^n$ be admissible with $\dim W\ge1$. Then:
\begin{enumerate}[(i)]
\item $C(W)$ is a polyhedral cone that is pointed and full-dimensional in $W$;
\item every facet $F$ of $C(W)$ has the form
\[
F=C(W)\cap\{x\in\R^n:x_i=0\}
\]
for at least one coordinate $i$ whose functional $x\mapsto x_i$ is not
identically zero on $W$;
\item if a facet $F$ satisfies
\[
F=C(W)\cap\{x_i=0\}=C(W)\cap\{x_k=0\}
\]
for two coordinates $i,k$, then the restrictions of $x_i$ and $x_k$ to
$W$ are positive multiples of one another;
\item choosing for every facet $F$ one coordinate $i(F)$ as in
\textup{(ii)}, one has
\[
C(W)=
\bigl\{
x\in W:
x_{i(F)}\ge0
\ \text{for every facet }F\text{ of }C(W)
\bigr\}.
\]
\end{enumerate}
\end{lemma}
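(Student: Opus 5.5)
The plan is to view $C(W)$ as the intersection of the subspace $W$ with the finitely many half-spaces $\{x_i\ge0\}$, $i\in[n]$, and then read off all four parts from standard polyhedral facts.

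\emph{Part (i).} Polyhedrality is immediate, since $C(W)=W\cap\bigcap_{i}\{x_i\ge0\}$ is a finite intersection of closed half-spaces with a linear subspace. Pointedness follows from the containment $C(W)\subseteq\Rnn^n$, which is pointed: if $x$ and $-x$ both lie in $C(W)$, then $x=0$. Full-dimensionality in $W$ is exactly admissibility. Indeed, $\spann C(W)=W$, and a convex cone always has nonempty relative interior in its linear span, so $C(W)$ has nonempty interior relative to $W$.

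\emph{Parts (ii) and (iv).} Let $I_0=\{i:x_i|_W\not\equiv0\}$. The coordinates outside $I_0$ vanish identically on $W$, so they impose no constraint, and
\[
C(W)=\{x\in W: x_i\ge0\ \text{for all } i\in I_0\}
\]
is an irredundant-or-not H-description inside the ambient space $W$, with every listed functional nonzero on $W$. I then invoke the standard theorem that every facet of a full-dimensional polyhedral cone given by finitely many linear inequalities has the form $C(W)\cap\{\ell=0\}$ for some defining inequality $\ell\ge0$. The short argument goes as follows. A facet $F$ is a face of dimension $\dim W-1$. By the face lattice of polyhedra, $F=C(W)\cap\{x_i=0:i\in I_F\}$, where $I_F$ is the set of constraints active on all of $F$. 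The set $I_F$ is nonempty because $F\neq C(W)$. Pick any $i\in I_F$. Then $G=C(W)\cap\{x_i=0\}$ is a proper face containing $F$. Since $x_i|_W\neq0$ and $C(W)$ is full-dimensional in $W$, $G$ is a proper face and so has dimension at most $\dim W-1$. Hence $G=F$. For (iv), I will use the standard fact that a full-dimensional polyhedron equals the intersection of the half-spaces of its facet-defining inequalities, since redundant inequalities can be discarded. Each facet $F$ is defined by $x_{i(F)}\ge0$.

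\emph{Part (iii).} This is the step needing some care, and I expect it to be the only mild obstacle. Let $F$ be a facet with $F=C(W)\cap\{x_i=0\}=C(W)\cap\{x_k=0\}$. The facet $F$ spans a hyperplane $H$ of $W$. Both restrictions $x_i|_W$ and $x_k|_W$ are nonzero linear functionals on $W$ vanishing on $F$, hence on $H$, so each has kernel exactly $H$. Therefore $x_k|_W=c\,x_i|_W$ for some $c\neq0$. To fix the sign, take a point $y$ in the relative interior of $C(W)$ within $W$. For any nonzero functional $\ell$ that is nonnegative on $C(W)$, we get $\ell(y)>0$, since otherwise $\ell$ would vanish on a neighborhood of $y$ in $W$ and hence on all of $W$. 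Applying this to both functionals gives $x_i(y)>0$ and $x_k(y)>0$, so $c>0$.

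The main point to guard against throughout is that $C(W)$ is not full-dimensional in $\R^n$. All dimension counts therefore have to be done inside $W$, and that is exactly why only coordinates nonvanishing on $W$ are admitted.
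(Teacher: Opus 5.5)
Your proposal is correct and follows essentially the same route as the paper. Part (i) is identical. In (ii) you find a coordinate nonvanishing on $W$ that is tight on all of $F$ and conclude by the same dimension count; you cite the active-constraint description of faces, where the paper argues directly from a point of $\relint F$. Part (iii) uses the common kernel $\spann F$ plus a sign check, and (iv) uses the facet representation of a full-dimensional polyhedral cone. The steps you leave implicit are immediate: $x_i|_W,x_k|_W\neq0$ because $F\neq C(W)$, and (iv) relies on (iii) to make an arbitrary choice of $i(F)$ legitimate.
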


\begin{proof}
\emph{(i).}
The cone $C(W)$ is the intersection of the subspace $W$ with the finitely
many closed halfspaces
\[
\{x\in\R^n:x_i\ge0\},
\qquad i\in[n],
\]
and is therefore polyhedral.  It is pointed because
\[
C(W)\cap(-C(W))
\subseteq
\Rnn^n\cap(-\Rnn^n)
=\{0\}.
\]
It is full-dimensional in $W$ because admissibility gives
\[
\spann C(W)=W.
\]

\emph{(ii).}
Let
\[
Z=\{i\in[n]:x_i=0\text{ for all }x\in W\}
\]
be the set of coordinates that vanish identically on $W$.  Let $F$ be a
facet of $C(W)$ and choose $z\in\relint F$.  We claim that $z_i=0$ for some
$i\notin Z$.  Otherwise $z_i>0$ for every $i\notin Z$.  Since there are only
finitely many such coordinates, all these inequalities remain strict in a
neighborhood of $z$ in $W$, while the coordinates in $Z$ vanish identically
on $W$.  Hence $z\in\relint C(W)$, contradicting the fact that a point in the
relative interior of a proper face cannot lie in the relative interior of
the full-dimensional polyhedron
\citep[Theorem~6.2, Corollary~18.1.3]{Rockafellar1970}.

Thus choose $i\notin Z$ with $z_i=0$ and set
\[
G=C(W)\cap\{x_i=0\}.
\]
The functional $x\mapsto x_i$ is nonnegative on $C(W)$, so $G$ is an
exposed face.  Since $i\notin Z$, this functional is not identically zero
on $W$, hence it is positive at some point of $C(W)$ and therefore
$G\neq C(W)$.

We next show that $F\subseteq G$.  The functional
$\varphi(x)=x_i$ is nonnegative on $C(W)$ and vanishes at
$z\in\relint F$.  For any $y\in F$, the relative interior property implies
that the segment from $y$ to $z$ can be extended beyond $z$ while remaining
in $F$ \citep[Theorem~6.4]{Rockafellar1970}.  Hence there exist
$y'\in F$ and $\mu\in(0,1)$ such that
\[
z=\mu y+(1-\mu)y'.
\]
Since $\varphi(y),\varphi(y')\ge0$ and $\varphi(z)=0$, we obtain
$\varphi(y)=0$, so $y\in G$.

Finally, $F=G$.  Since $F$ is a facet,
\[
\dim F=\dim W-1.
\]
Moreover $F\subseteq G\subsetneq C(W)$, so
$\dim G\le\dim W-1$.  Since $F\subseteq G$,
$\dim G\ge\dim F$, and therefore $\dim G=\dim F$.  Thus
\[
F=G=C(W)\cap\{x_i=0\}.
\]

\emph{(iii).}
Let $H=\spann F$.  Since $F$ is a facet of the full-dimensional cone
$C(W)$, $H$ is a hyperplane of $W$.  The restrictions
\[
\varphi_i=x_i|_W,
\qquad
\varphi_k=x_k|_W
\]
both vanish on $F$ and hence on $H$.  Neither is identically zero on $W$,
because each corresponding coordinate defines a proper face.  Their
kernels therefore equal the hyperplane $H$, so the two functionals are
proportional:
\[
\varphi_k=\lambda\varphi_i
\]
for some $\lambda\neq0$.  Choose $c\in C(W)$ with
$\varphi_i(c)>0$.  Since $\varphi_k(c)\ge0$, we obtain $\lambda>0$.

\emph{(iv).}
The inclusion ``$\subseteq$'' is immediate.  For the converse, observe
that $C(W)$ is a pointed, full-dimensional polyhedral cone in $W$.  Hence
it is the intersection of the halfspaces defined by its facets
\citep[Theorem~8.1]{Schrijver1986}.  By parts (ii)--(iii), each facet
halfspace can be represented by a coordinate functional
$x\mapsto x_{i(F)}$ restricted to $W$.  Thus
\[
C(W)=
\bigl\{
x\in W:
x_{i(F)}\ge0
\ \text{for every facet }F
\bigr\}.
\]
\end{proof}

The preceding lemma shows that the facet structure of $C(W)$ can be
represented entirely by coordinate functionals.  Although several
coordinates may define the same facet, Lemma~\ref{lem:intrinsic-cone}(iii)
shows that such representatives differ only by positive scaling.  This
allows us to record which facets are strictly positive for a given vector.

Let $\mathcal F(W)$ denote the finite set of facets of $C(W)$ and choose,
for each $F\in\mathcal F(W)$, one coordinate representative $i(F)$.  Define
the \emph{facet-coordinate map}
\[
L_W:W\longrightarrow\R^{\mathcal F(W)},
\qquad
L_Wx=\bigl(x_{i(F)}\bigr)_{F\in\mathcal F(W)}.
\]
By Lemma~\ref{lem:intrinsic-cone}(iii), changing the representative of a
facet only rescales the corresponding coordinate by a positive constant.

\begin{lemma}\label{lem:facet-map}
The map $L_W$ is linear and injective, and it maps $C(W)$ into the
nonnegative orthant $\Rnn^{\mathcal F(W)}$.  In particular, every nonzero
$x\in C(W)$ has a nonempty \emph{positive facet signature}
\begin{equation}
\mathcal A_W(x)
=
\{F\in\mathcal F(W):x_{i(F)}>0\}
=
\supp(L_Wx)
\neq\varnothing,
\label{eq:signature}
\end{equation}
and
\[
\mathcal A_W(\lambda x)=\mathcal A_W(x)
\qquad\text{for every }\lambda>0.
\]
\end{lemma}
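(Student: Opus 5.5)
Linearity of $L_W$ is immediate, since each component $x\mapsto x_{i(F)}$ is the restriction to $W$ of a coordinate functional. The proof then rests on one geometric fact: in $W$, the intrinsic cone is cut out by its facet inequalities. This is Lemma~\ref{lem:intrinsic-cone}(iv), which says
\[
C(W)=\{x\in W: L_Wx\ge0\}.
\]
All remaining claims follow from this description together with the pointedness of $C(W)$ in Lemma~\ref{lem:intrinsic-cone}(i).

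\textbf{Step 1 (nonnegativity).} If $x\in C(W)$, then $x\in\Rnn^n$. In particular every coordinate $x_{i(F)}$ is nonnegative, so $L_Wx\in\Rnn^{\mathcal F(W)}$.

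\textbf{Step 2 (injectivity).} Suppose $x\in W$ and $L_Wx=0$.
\begin{enumerate}[(a)]
\item Both $L_Wx\ge0$ and $L_W(-x)\ge0$ hold.
\item By Lemma~\ref{lem:intrinsic-cone}(iv), this gives $x\in C(W)$ and $-x\in C(W)$.
\item Pointedness of $C(W)$ then forces $x=0$.
\end{enumerate}
So $\ker L_W=\{0\}$.

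\textbf{Degenerate case.} When $\dim W=1$, $C(W)$ is a ray and its only facet is $\{0\}$. Lemma~\ref{lem:intrinsic-cone}(ii) still supplies a coordinate not identically zero on $W$. Step 2 then goes through unchanged, since the description in (iv) still holds. I expect this check to be the only point needing care.

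\textbf{Step 3 (nonempty signature).} Let $x\in C(W)$ be nonzero. By Step 1, $L_Wx$ lies in the nonnegative orthant. By Step 2, $L_Wx\neq0$. Hence $\supp(L_Wx)\neq\varnothing$. By Step 1 again, this support is exactly the set of facets $F$ with $x_{i(F)}>0$, which is $\mathcal A_W(x)$.

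\textbf{Step 4 (scale invariance).} For $\lambda>0$, linearity gives $L_W(\lambda x)=\lambda L_Wx$. Multiplying by a positive scalar does not change the support, so $\mathcal A_W(\lambda x)=\mathcal A_W(x)$.

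\textbf{Independence of representatives.} By Lemma~\ref{lem:intrinsic-cone}(iii), changing the representative $i(F)$ rescales the corresponding coordinate of $L_W$ by a positive constant. This preserves linearity, injectivity, the sign pattern and the support. So the signature $\mathcal A_W(x)$ is well defined regardless of which representatives are chosen.
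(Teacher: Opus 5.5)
Your proof is correct and follows essentially the same argument as the paper: nonnegativity is immediate, and injectivity comes from applying Lemma~\ref{lem:intrinsic-cone}(iv) to $x$ and $-x$ and then using pointedness of $C(W)$, after which the nonempty signature and scale invariance follow at once. Your extra checks on the case $\dim W=1$ and on the choice of representatives $i(F)$ are sound, though they add nothing beyond what Lemma~\ref{lem:intrinsic-cone} already supplies.
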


\begin{proof}
Linearity is immediate, and $L_Wx$ is entrywise nonnegative whenever
$x\in C(W)$.  Suppose $L_Wx=0$ for some $x\in W$.  Then
$x_{i(F)}=0$ for every facet $F$.  By Lemma~\ref{lem:intrinsic-cone}(iv),
both $x$ and $-x$ satisfy all facet inequalities, so
\[
x\in C(W)\cap(-C(W))=\{0\}.
\]
Thus $L_W$ is injective.  In particular, $L_Wx\neq0$ whenever $x\neq0$,
which proves that every nonzero $x\in C(W)$ has a nonempty signature.
Invariance under positive rescaling is immediate.
\end{proof}

The signature $\mathcal A_W(x)$ records the facets on which $x$ has a
strictly positive coordinate representative.  Two nonzero vectors in the
same intrinsic cone may nevertheless have disjoint signatures; Figure~\ref{fig:cone}
illustrates this phenomenon.  This observation motivates the notion of
facet-opposite factors used below.

\begin{figure*}[htbp]
\centering
\begin{tikzpicture}[scale=1.42,>=stealth]
\coordinate (O)  at (0,0);
\coordinate (X)  at (0.48,-0.38);   
\coordinate (Y)  at (1,1);          
\coordinate (Xf) at (0.816,-0.646); 
\coordinate (Yf) at (1.7,1.7);      
\fill[black!6]
(-0.52,-0.22) -- (0.42,-0.96) -- (2.37,0.99) -- (1.43,1.73) -- cycle;
\node[black!55,font=\scriptsize] at (2.02,0.62) {$U$};
\draw[->,black!70] (O) -- (-1.09,-0.80) node[below left,font=\scriptsize] {$x_1$};
\draw[->,black!70] (O) -- (2.15,0) node[right,font=\scriptsize] {$x_2$};
\draw[->,black!70] (O) -- (0,2.15) node[above,font=\scriptsize] {$x_3$};
\fill[blue!14,opacity=0.75] (O) -- (Xf) -- (Yf) -- cycle;
\node[blue!50!black,font=\scriptsize] at (0.98,0.44) {$C(U)$};
\draw[very thick,blue!70!black]  (O) -- (Xf);
\draw[very thick,orange!85!black] (O) -- (Yf);
\node[blue!70!black,font=\scriptsize,below,align=center] at (0.93,-0.68)
{$F_x=C(U)\cap\{x_3=0\}$};
\node[orange!85!black,font=\scriptsize,above left,align=center] at (1.42,1.62)
{$F_y=C(U)\cap\{x_1=0\}$};
\fill[blue!70!black]  (X) circle (1.4pt);
\node[blue!70!black,font=\scriptsize,right] at (0.88,-0.50) {$x=(1,1,0)$};
\fill[orange!85!black] (Y) circle (1.4pt);
\node[orange!85!black,font=\scriptsize,left] at (0.97,1.06) {$y=(0,1,1)$};
\node[font=\small] at (-0.85,1.95) {(a)};
\end{tikzpicture}%
\hfill
\begin{tikzpicture}[scale=1.42,>=stealth]
\def\L{1.9}
\fill[black!5] (0,0) rectangle (\L,\L);
\draw[very thick,black!75] (0,0) rectangle (\L,\L);
\node[left,font=\scriptsize,align=right]  at (-0.05,0.5*\L) {$F_1$\\[-2pt]$\{x_1=0\}$};
\node[right,font=\scriptsize,align=left]  at (\L+0.05,0.5*\L) {$F_2$\\[-2pt]$\{x_2=0\}$};
\node[below,font=\scriptsize] at (0.5*\L,-0.05) {$F_3\colon\{x_3=0\}$};
\node[above,font=\scriptsize] at (0.5*\L,\L+0.05) {$F_4\colon\{x_4=0\}$};
\draw[dashed,black!60,<->] (0.13,0.13) -- (\L-0.13,\L-0.13);
\node[rotate=45,font=\scriptsize,black!60] at (0.72,1.02)
{facet-opposite};
\fill[blue!70!black] (\L,\L) circle (1.6pt);
\node[above right,font=\scriptsize,blue!70!black,align=left] at (\L-0.02,\L+0.24)
{$e_1{+}e_3$:\ $\{F_1,F_3\}$};
\fill[orange!85!black] (0,0) circle (1.6pt);
\node[below left,font=\scriptsize,orange!85!black,align=left] at (0.02,-0.24)
{$e_2{+}e_4$:\ $\{F_2,F_4\}$};
\fill[black] (1.30,0.55) circle (1.4pt);
\node[below,font=\scriptsize] at (1.30,0.51) {$\{F_1,F_2,F_3,F_4\}$};
\fill[black] (0.34*\L,\L) circle (1.4pt);
\node[below,font=\scriptsize] at (0.37*\L,\L-0.04) {$\{F_1,F_2,F_3\}$};
\node[font=\small] at (-0.72,\L+0.32) {(b)};
\end{tikzpicture}
\caption{Intrinsic cones, facets and signatures.
(a) The intrinsic cone $C(U)=U\cap\Rnn^3$ of the admissible plane
$U=\spann\{x,y\}$ with $x=(1,1,0)$, $y=(0,1,1)$: a pointed two-dimensional
cone whose facets are its extreme rays, each cut out by a coordinate as in
Lemma~\ref{lem:intrinsic-cone}(ii).  Each generator, $x$ and $y$, lies on its own
facet, so its coordinate $i(F)$ vanishes there and the signatures are
$\mathcal A_U(x)=\{F_y\}$ and $\mathcal A_U(y)=\{F_x\}$, which are disjoint.
(b) The cross-section $\{x_1+x_2=x_3+x_4=1\}$ of the three-dimensional
intrinsic cone of the admissible space
$W=\{x\in\R^4:x_1+x_2=x_3+x_4\}$: a cone over a square with the four
facets $F_i=C(W)\cap\{x_i=0\}$, $i(F_i)=i$.  Sample points are labeled by
their signatures $\mathcal A_W(\cdot)$: full in the relative interior,
smaller on proper faces.  The two marked corners have disjoint signatures
although they lie in one and the same intrinsic cone.}
\label{fig:cone}
\end{figure*}


\subsection{Bridge Graphs and Rectangular Splitting}\label{sec:bridge}

We now combine facet signatures across modes.  Fix a finite index set $S$ and,
for each mode $j$, a family of nonzero nonnegative vectors
\[
x_r^{(j)}\in\Rnn^{n_j},
\qquad r\in S,
\]
together with an admissible subspace
$W_j\subseteq\R^{n_j}$ containing all of them.  Write
\[
\mathcal A_j(r)
=
\mathcal A_{W_j}\bigl(x_r^{(j)}\bigr)
\subseteq\mathcal F(W_j),
\qquad r\in S.
\]
These signatures are nonempty by Lemma~\ref{lem:facet-map}.

\begin{definition}[Bridge graph]\label{def:bridge}
Two indices $r,s\in S$ are \emph{facet-opposite in mode $j$} with respect
to $W_j$ if
\[
\mathcal A_j(r)\cap\mathcal A_j(s)=\varnothing.
\]
The \emph{bridge graph}
\[
\Gamma=\Gamma\bigl((W_j)_j\bigr)
\]
has vertex set $S$ and an edge $\{r,s\}$ whenever $r\neq s$ are
facet-opposite in at most one mode.
\end{definition}

Thus an edge of the bridge graph means that the two components have
compatible facet signatures in all but possibly one mode.  This definition
is chosen precisely so that bridge edges correspond to one-coordinate
moves in the product of the facet sets.

For each $r\in S$, define the \emph{facet box}
\[
B_r
=
\mathcal A_1(r)\times\cdots\times\mathcal A_d(r)
\subseteq
\mathcal F(W_1)\times\cdots\times\mathcal F(W_d)
=:\Omega.
\]
The box is nonempty because every factor signature is nonempty.  On
$\Omega$, call two points \emph{rook-adjacent} if they differ in at most
one coordinate.  A subset of $\Omega$ is \emph{rook-connected} if every
two of its points can be joined by a path of rook-adjacent points lying
inside the subset.  Every box $B_r$ is rook-connected, since its
coordinates may be changed one at a time without leaving the box.

\begin{lemma}[Bridges and rooks]\label{lem:bridge-rook}
Let
\[
U=\bigcup_{t\in S}B_t\subseteq\Omega.
\]
Two indices $r,s\in S$ lie in the same connected component of the bridge
graph $\Gamma$ if and only if $B_r$ and $B_s$ lie in the same
rook-connected component of $U$.  Consequently, the assignment
\[
r\longmapsto
\text{the rook component of $U$ containing $B_r$}
\]
induces a bijection between the connected components of $\Gamma$ and the
rook components of $U$.
\end{lemma}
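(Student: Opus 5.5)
The plan is to prove the two implications separately. The key observation linking them is that a bridge edge between $r$ and $s$ is exactly the statement that the boxes $B_r$ and $B_s$ contain rook-adjacent points. Indeed, suppose $r,s$ are facet-opposite in at most one mode, say possibly mode $j_0$. For every $j\neq j_0$, choose $F_j\in\mathcal A_j(r)\cap\mathcal A_j(s)$. Then pick $F_{j_0}\in\mathcal A_{j_0}(r)$ and $F'_{j_0}\in\mathcal A_{j_0}(s)$. This is possible because signatures are nonempty by Lemma~\ref{lem:facet-map}. The points $(F_1,\dots,F_{j_0},\dots,F_d)\in B_r$ and $(F_1,\dots,F'_{j_0},\dots,F_d)\in B_s$ differ in at most one coordinate. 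If no mode is facet-opposite, take $F'_{j_0}=F_{j_0}$, so the boxes even intersect.

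Conversely, suppose $\omega\in B_r$ and $\omega'\in B_s$ are rook-adjacent, differing at most in coordinate $j_0$. Then for every $j\neq j_0$ we have $\omega_j=\omega'_j\in\mathcal A_j(r)\cap\mathcal A_j(s)$. So $r$ and $s$ are facet-opposite in at most mode $j_0$, and (for $r\ne s$) $\{r,s\}$ is an edge of $\Gamma$.

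For the forward direction, take a path $r=r_0,r_1,\dots,r_m=s$ in $\Gamma$. Each box $B_{r_k}$ is rook-connected and contained in $U$. By the observation, consecutive boxes contain rook-adjacent points. Concatenating the paths inside each box with these single rook steps gives a rook path in $U$ from any point of $B_r$ to any point of $B_s$. Hence $B_r$ and $B_s$ lie in the same rook component. (Each $B_t$, being rook-connected and nonempty, lies in a single rook component, so ``the component containing $B_r$'' is well defined.)

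For the reverse direction, take a rook path $\omega_0,\dots,\omega_m$ in $U$ with $\omega_0\in B_r$ and $\omega_m\in B_s$. Since $U=\bigcup_t B_t$, choose for each $k$ an index $t_k$ with $\omega_k\in B_{t_k}$, taking $t_0=r$ and $t_m=s$. For each $k$, the points $\omega_k\in B_{t_k}$ and $\omega_{k+1}\in B_{t_{k+1}}$ are rook-adjacent. By the observation, either $t_k=t_{k+1}$ or $\{t_k,t_{k+1}\}$ is an edge of $\Gamma$. Thus $t_0,\dots,t_m$, after deleting repetitions, is a walk in $\Gamma$ from $r$ to $s$.

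Finally, the bijection follows from these two directions. The map from components of $\Gamma$ to rook components is well defined and injective by the two directions. It is surjective because every point of $U$ lies in some $B_t$, so every rook component of $U$ contains some box.

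The only delicate point is the converse of the observation, where the differing coordinate may be any mode. It needs care only in handling the degenerate cases: identical points, and $r=s$, which are treated as a trivial step.
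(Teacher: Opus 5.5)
Your proposal is correct and follows essentially the same route as the paper. A bridge edge produces rook-adjacent points of $B_r$ and $B_s$ (even a common point when no mode is facet-opposite); conversely, labeling the points of a rook path in $U$ by boxes containing them gives a walk in $\Gamma$, and the bijection follows because each box is nonempty and rook-connected and $U$ is the union of the boxes.
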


\begin{proof}
Each box $B_t$ is rook-connected and hence is contained in a single rook
component of $U$.  The resulting assignment from bridge-graph vertices to
rook components is therefore well defined and surjective.

It remains to prove that two vertices belong to the same component on one
side if and only if their boxes belong to the same component on the other.

\emph{Bridge path $\Rightarrow$ same rook component.}
It suffices to consider a single bridge edge $\{r,s\}$.  If the signatures
intersect in every mode, choose
\[
f_j\in\mathcal A_j(r)\cap\mathcal A_j(s)
\qquad\text{for all }j.
\]
Then
$(f_1,\dots,f_d)\in B_r\cap B_s$, so the two boxes lie in the same rook
component.

Otherwise there is a unique exceptional mode $j_0$ in which the
signatures are disjoint.  For every $j\neq j_0$, choose
\[
f_j\in\mathcal A_j(r)\cap\mathcal A_j(s),
\]
and choose arbitrary
\[
f_{j_0}\in\mathcal A_{j_0}(r),
\qquad
g_{j_0}\in\mathcal A_{j_0}(s).
\]
Then
\[
u=(f_1,\dots,f_d)\in B_r,
\]
while
\[
v=
(f_1,\dots,f_{j_0-1},g_{j_0},f_{j_0+1},\dots,f_d)\in B_s.
\]
The two points differ only in coordinate $j_0$, so they are rook-adjacent.
Since both boxes are rook-connected, they belong to the same rook component.
Concatenating along a bridge path proves the implication.

\emph{Same rook component $\Rightarrow$ bridge path.}
Let
\[
u_0,u_1,\dots,u_m
\]
be a rook path in $U$ with $u_0\in B_r$ and $u_m\in B_s$.  For each $k$
choose $t_k\in S$ such that $u_k\in B_{t_k}$, taking
$t_0=r$ and $t_m=s$.

For a fixed $k$, the points $u_k$ and $u_{k+1}$ agree in every coordinate
except possibly one, say mode $j_0$.  Therefore, for every $j\neq j_0$,
\[
(u_k)_j=(u_{k+1})_j
\in
\mathcal A_j(t_k)\cap\mathcal A_j(t_{k+1}).
\]
Thus $t_k$ and $t_{k+1}$ are facet-opposite in at most one mode.  Hence
either $t_k=t_{k+1}$ or $\{t_k,t_{k+1}\}$ is a bridge edge.  The sequence
$t_0,\dots,t_m$ therefore gives a walk from $r$ to $s$ in $\Gamma$.
\end{proof}

Figure~\ref{fig:rook} gives a three-mode illustration.  The key point is
that bridge edges correspond exactly to rook-compatible transitions
between facet boxes, whereas a disconnected bridge graph causes the union
of the boxes to separate into distinct rook components.

\begin{figure*}[htbp]
\centering
\begin{tikzpicture}[scale=0.92,>=stealth]
\draw[black!14,very thin] (0.000,0.000) -- (2.000,0.000);
\draw[black!14,very thin] (0.000,1.000) -- (2.000,1.000);
\draw[black!14,very thin] (0.000,2.000) -- (2.000,2.000);
\draw[black!14,very thin] (0.460,0.330) -- (2.460,0.330);
\draw[black!14,very thin] (0.460,1.330) -- (2.460,1.330);
\draw[black!14,very thin] (0.460,2.330) -- (2.460,2.330);
\draw[black!14,very thin] (0.920,0.660) -- (2.920,0.660);
\draw[black!14,very thin] (0.920,1.660) -- (2.920,1.660);
\draw[black!14,very thin] (0.920,2.660) -- (2.920,2.660);
\draw[black!14,very thin] (0.000,0.000) -- (0.920,0.660);
\draw[black!14,very thin] (0.000,1.000) -- (0.920,1.660);
\draw[black!14,very thin] (0.000,2.000) -- (0.920,2.660);
\draw[black!14,very thin] (1.000,0.000) -- (1.920,0.660);
\draw[black!14,very thin] (1.000,1.000) -- (1.920,1.660);
\draw[black!14,very thin] (1.000,2.000) -- (1.920,2.660);
\draw[black!14,very thin] (2.000,0.000) -- (2.920,0.660);
\draw[black!14,very thin] (2.000,1.000) -- (2.920,1.660);
\draw[black!14,very thin] (2.000,2.000) -- (2.920,2.660);
\draw[black!14,very thin] (0.000,0.000) -- (0.000,2.000);
\draw[black!14,very thin] (0.460,0.330) -- (0.460,2.330);
\draw[black!14,very thin] (0.920,0.660) -- (0.920,2.660);
\draw[black!14,very thin] (1.000,0.000) -- (1.000,2.000);
\draw[black!14,very thin] (1.460,0.330) -- (1.460,2.330);
\draw[black!14,very thin] (1.920,0.660) -- (1.920,2.660);
\draw[black!14,very thin] (2.000,0.000) -- (2.000,2.000);
\draw[black!14,very thin] (2.460,0.330) -- (2.460,2.330);
\draw[black!14,very thin] (2.920,0.660) -- (2.920,2.660);
\fill[black!35] (0.000,1.000) circle (1.0pt);
\fill[black!35] (0.000,2.000) circle (1.0pt);
\fill[black!35] (0.460,1.330) circle (1.0pt);
\fill[black!35] (0.460,2.330) circle (1.0pt);
\fill[black!35] (0.920,0.660) circle (1.0pt);
\fill[black!35] (0.920,1.660) circle (1.0pt);
\fill[black!35] (0.920,2.660) circle (1.0pt);
\fill[black!35] (1.000,1.000) circle (1.0pt);
\fill[black!35] (1.000,2.000) circle (1.0pt);
\fill[black!35] (1.460,1.330) circle (1.0pt);
\fill[black!35] (1.460,2.330) circle (1.0pt);
\fill[black!35] (1.920,0.660) circle (1.0pt);
\fill[black!35] (1.920,1.660) circle (1.0pt);
\fill[black!35] (1.920,2.660) circle (1.0pt);
\fill[black!35] (2.000,0.000) circle (1.0pt);
\fill[black!35] (2.000,1.000) circle (1.0pt);
\fill[black!35] (2.000,2.000) circle (1.0pt);
\fill[black!35] (2.460,2.330) circle (1.0pt);
\fill[black!35] (2.920,0.660) circle (1.0pt);
\fill[black!35] (2.920,1.660) circle (1.0pt);
\fill[black!35] (2.920,2.660) circle (1.0pt);
\draw[blue!70!black,thin,dashed,opacity=0.55] (0.298,0.129) -- (1.898,0.129);
\draw[blue!70!black,thin,dashed,opacity=0.55] (0.298,0.129) -- (-0.438,-0.399);
\draw[blue!70!black,thin,dashed,opacity=0.55] (0.298,0.129) -- (0.298,0.729);
\fill[blue!30,fill opacity=0.62] (-0.438,-0.399) -- (1.162,-0.399) -- (1.162,0.201) -- (-0.438,0.201) -- cycle;
\fill[blue!15,fill opacity=0.62] (-0.438,0.201) -- (1.162,0.201) -- (1.898,0.729) -- (0.298,0.729) -- cycle;
\fill[blue!45,fill opacity=0.62] (1.162,-0.399) -- (1.898,0.129) -- (1.898,0.729) -- (1.162,0.201) -- cycle;
\draw[blue!70!black,thin] (-0.438,-0.399) -- (1.162,-0.399);
\draw[blue!70!black,thin] (1.162,-0.399) -- (1.162,0.201);
\draw[blue!70!black,thin] (1.162,0.201) -- (-0.438,0.201);
\draw[blue!70!black,thin] (-0.438,0.201) -- (-0.438,-0.399);
\draw[blue!70!black,thin] (-0.438,0.201) -- (0.298,0.729);
\draw[blue!70!black,thin] (1.162,0.201) -- (1.898,0.729);
\draw[blue!70!black,thin] (0.298,0.729) -- (1.898,0.729);
\draw[blue!70!black,thin] (1.162,-0.399) -- (1.898,0.129);
\draw[blue!70!black,thin] (1.898,0.129) -- (1.898,0.729);
\draw[orange!70!black,thin,dashed,opacity=0.55] (2.298,0.129) -- (2.898,0.129);
\draw[orange!70!black,thin,dashed,opacity=0.55] (2.298,0.129) -- (2.022,-0.069);
\draw[orange!70!black,thin,dashed,opacity=0.55] (2.298,0.129) -- (2.298,1.729);
\fill[orange!30,fill opacity=0.62] (2.022,-0.069) -- (2.622,-0.069) -- (2.622,1.531) -- (2.022,1.531) -- cycle;
\fill[orange!15,fill opacity=0.62] (2.022,1.531) -- (2.622,1.531) -- (2.898,1.729) -- (2.298,1.729) -- cycle;
\fill[orange!45,fill opacity=0.62] (2.622,-0.069) -- (2.898,0.129) -- (2.898,1.729) -- (2.622,1.531) -- cycle;
\draw[orange!70!black,thin] (2.022,-0.069) -- (2.622,-0.069);
\draw[orange!70!black,thin] (2.622,-0.069) -- (2.622,1.531);
\draw[orange!70!black,thin] (2.622,1.531) -- (2.022,1.531);
\draw[orange!70!black,thin] (2.022,1.531) -- (2.022,-0.069);
\draw[orange!70!black,thin] (2.022,1.531) -- (2.298,1.729);
\draw[orange!70!black,thin] (2.622,1.531) -- (2.898,1.729);
\draw[orange!70!black,thin] (2.298,1.729) -- (2.898,1.729);
\draw[orange!70!black,thin] (2.622,-0.069) -- (2.898,0.129);
\draw[orange!70!black,thin] (2.898,0.129) -- (2.898,1.729);
\fill[blue!80!black] (0.460,0.330) circle (1.3pt);
\fill[blue!80!black] (1.000,0.000) circle (1.3pt);
\fill[blue!80!black] (0.000,0.000) circle (1.3pt);
\fill[blue!80!black] (1.460,0.330) circle (1.3pt);
\fill[orange!80!black] (2.460,0.330) circle (1.3pt);
\fill[orange!80!black] (2.460,1.330) circle (1.3pt);
\draw[->,very thick,black!75] (1.580,0.330) -- (2.340,0.330);
\node[black!75,font=\scriptsize] at (1.90,-0.62) {rook step: coordinate $1$};
\node[blue!70!black,font=\scriptsize] at (-0.580,0.550) {$B_1$};
\node[orange!85!black,font=\scriptsize] at (3.080,1.380) {$B_2$};
\draw[->,black!60] (-1.00,-1.00) -- (-0.42,-1.00) node[right,font=\tiny,black!60] {mode 1};
\draw[->,black!60] (-1.00,-1.00) -- (-0.77,-0.835) node[right,font=\tiny,black!60,yshift=2.5pt] {mode 2};
\draw[->,black!60] (-1.00,-1.00) -- (-1.00,-0.44) node[above,font=\tiny,black!60] {mode 3};
\node[circle,draw,inner sep=1.1pt,font=\tiny] (g1) at (0.0,2.85) {1};
\node[circle,draw,inner sep=1.1pt,font=\tiny] (g2) at (0.62,2.85) {2};
\draw[thick] (g1) -- (g2);
\node[font=\tiny,black!60] at (0.31,3.12) {$\Gamma$: edge};
\node[font=\small] at (-0.95,3.15) {(a)};
\end{tikzpicture}
\hfill
\begin{tikzpicture}[scale=0.92,>=stealth]
\draw[black!14,very thin] (0.000,0.000) -- (2.000,0.000);
\draw[black!14,very thin] (0.000,1.000) -- (2.000,1.000);
\draw[black!14,very thin] (0.000,2.000) -- (2.000,2.000);
\draw[black!14,very thin] (0.460,0.330) -- (2.460,0.330);
\draw[black!14,very thin] (0.460,1.330) -- (2.460,1.330);
\draw[black!14,very thin] (0.460,2.330) -- (2.460,2.330);
\draw[black!14,very thin] (0.920,0.660) -- (2.920,0.660);
\draw[black!14,very thin] (0.920,1.660) -- (2.920,1.660);
\draw[black!14,very thin] (0.920,2.660) -- (2.920,2.660);
\draw[black!14,very thin] (0.000,0.000) -- (0.920,0.660);
\draw[black!14,very thin] (0.000,1.000) -- (0.920,1.660);
\draw[black!14,very thin] (0.000,2.000) -- (0.920,2.660);
\draw[black!14,very thin] (1.000,0.000) -- (1.920,0.660);
\draw[black!14,very thin] (1.000,1.000) -- (1.920,1.660);
\draw[black!14,very thin] (1.000,2.000) -- (1.920,2.660);
\draw[black!14,very thin] (2.000,0.000) -- (2.920,0.660);
\draw[black!14,very thin] (2.000,1.000) -- (2.920,1.660);
\draw[black!14,very thin] (2.000,2.000) -- (2.920,2.660);
\draw[black!14,very thin] (0.000,0.000) -- (0.000,2.000);
\draw[black!14,very thin] (0.460,0.330) -- (0.460,2.330);
\draw[black!14,very thin] (0.920,0.660) -- (0.920,2.660);
\draw[black!14,very thin] (1.000,0.000) -- (1.000,2.000);
\draw[black!14,very thin] (1.460,0.330) -- (1.460,2.330);
\draw[black!14,very thin] (1.920,0.660) -- (1.920,2.660);
\draw[black!14,very thin] (2.000,0.000) -- (2.000,2.000);
\draw[black!14,very thin] (2.460,0.330) -- (2.460,2.330);
\draw[black!14,very thin] (2.920,0.660) -- (2.920,2.660);
\fill[black!35] (0.000,1.000) circle (1.0pt);
\fill[black!35] (0.000,2.000) circle (1.0pt);
\fill[black!35] (0.460,1.330) circle (1.0pt);
\fill[black!35] (0.460,2.330) circle (1.0pt);
\fill[black!35] (0.920,0.660) circle (1.0pt);
\fill[black!35] (0.920,1.660) circle (1.0pt);
\fill[black!35] (0.920,2.660) circle (1.0pt);
\fill[black!35] (1.000,1.000) circle (1.0pt);
\fill[black!35] (1.000,2.000) circle (1.0pt);
\fill[black!35] (1.460,1.330) circle (1.0pt);
\fill[black!35] (1.460,2.330) circle (1.0pt);
\fill[black!35] (1.920,0.660) circle (1.0pt);
\fill[black!35] (1.920,1.660) circle (1.0pt);
\fill[black!35] (1.920,2.660) circle (1.0pt);
\fill[black!35] (2.000,0.000) circle (1.0pt);
\fill[black!35] (2.000,1.000) circle (1.0pt);
\fill[black!35] (2.000,2.000) circle (1.0pt);
\fill[black!35] (2.460,0.330) circle (1.0pt);
\fill[black!35] (2.460,1.330) circle (1.0pt);
\fill[black!35] (2.460,2.330) circle (1.0pt);
\fill[black!35] (2.920,2.660) circle (1.0pt);
\draw[dashed,black!55] (0.730,0.185) ellipse (1.28 and 0.75);
\draw[dashed,black!55] (2.920,1.210) ellipse (0.62 and 1.05);
\node[black!55,font=\scriptsize] at (0.650,-0.760) {$K_1$};
\node[black!55,font=\scriptsize] at (3.180,2.420) {$K_2$};
\draw[blue!70!black,thin,dashed,opacity=0.55] (0.298,0.129) -- (1.898,0.129);
\draw[blue!70!black,thin,dashed,opacity=0.55] (0.298,0.129) -- (-0.438,-0.399);
\draw[blue!70!black,thin,dashed,opacity=0.55] (0.298,0.129) -- (0.298,0.729);
\fill[blue!30,fill opacity=0.62] (-0.438,-0.399) -- (1.162,-0.399) -- (1.162,0.201) -- (-0.438,0.201) -- cycle;
\fill[blue!15,fill opacity=0.62] (-0.438,0.201) -- (1.162,0.201) -- (1.898,0.729) -- (0.298,0.729) -- cycle;
\fill[blue!45,fill opacity=0.62] (1.162,-0.399) -- (1.898,0.129) -- (1.898,0.729) -- (1.162,0.201) -- cycle;
\draw[blue!70!black,thin] (-0.438,-0.399) -- (1.162,-0.399);
\draw[blue!70!black,thin] (1.162,-0.399) -- (1.162,0.201);
\draw[blue!70!black,thin] (1.162,0.201) -- (-0.438,0.201);
\draw[blue!70!black,thin] (-0.438,0.201) -- (-0.438,-0.399);
\draw[blue!70!black,thin] (-0.438,0.201) -- (0.298,0.729);
\draw[blue!70!black,thin] (1.162,0.201) -- (1.898,0.729);
\draw[blue!70!black,thin] (0.298,0.729) -- (1.898,0.729);
\draw[blue!70!black,thin] (1.162,-0.399) -- (1.898,0.129);
\draw[blue!70!black,thin] (1.898,0.129) -- (1.898,0.729);
\draw[orange!70!black,thin,dashed,opacity=0.55] (2.758,0.459) -- (3.358,0.459);
\draw[orange!70!black,thin,dashed,opacity=0.55] (2.758,0.459) -- (2.482,0.261);
\draw[orange!70!black,thin,dashed,opacity=0.55] (2.758,0.459) -- (2.758,2.059);
\fill[orange!30,fill opacity=0.62] (2.482,0.261) -- (3.082,0.261) -- (3.082,1.861) -- (2.482,1.861) -- cycle;
\fill[orange!15,fill opacity=0.62] (2.482,1.861) -- (3.082,1.861) -- (3.358,2.059) -- (2.758,2.059) -- cycle;
\fill[orange!45,fill opacity=0.62] (3.082,0.261) -- (3.358,0.459) -- (3.358,2.059) -- (3.082,1.861) -- cycle;
\draw[orange!70!black,thin] (2.482,0.261) -- (3.082,0.261);
\draw[orange!70!black,thin] (3.082,0.261) -- (3.082,1.861);
\draw[orange!70!black,thin] (3.082,1.861) -- (2.482,1.861);
\draw[orange!70!black,thin] (2.482,1.861) -- (2.482,0.261);
\draw[orange!70!black,thin] (2.482,1.861) -- (2.758,2.059);
\draw[orange!70!black,thin] (3.082,1.861) -- (3.358,2.059);
\draw[orange!70!black,thin] (2.758,2.059) -- (3.358,2.059);
\draw[orange!70!black,thin] (3.082,0.261) -- (3.358,0.459);
\draw[orange!70!black,thin] (3.358,0.459) -- (3.358,2.059);
\draw[black!70!black,thin,dashed,opacity=0.55] (0.374,0.223) -- (0.694,0.223);
\draw[black!70!black,thin,dashed,opacity=0.55] (0.374,0.223) -- (-0.234,-0.213);
\draw[black!70!black,thin,dashed,opacity=0.55] (0.374,0.223) -- (0.374,0.543);
\fill[black!30,fill opacity=0.5] (-0.234,-0.213) -- (0.086,-0.213) -- (0.086,0.107) -- (-0.234,0.107) -- cycle;
\fill[black!15,fill opacity=0.5] (-0.234,0.107) -- (0.086,0.107) -- (0.694,0.543) -- (0.374,0.543) -- cycle;
\fill[black!45,fill opacity=0.5] (0.086,-0.213) -- (0.694,0.223) -- (0.694,0.543) -- (0.086,0.107) -- cycle;
\draw[black!70!black,thin] (-0.234,-0.213) -- (0.086,-0.213);
\draw[black!70!black,thin] (0.086,-0.213) -- (0.086,0.107);
\draw[black!70!black,thin] (0.086,0.107) -- (-0.234,0.107);
\draw[black!70!black,thin] (-0.234,0.107) -- (-0.234,-0.213);
\draw[black!70!black,thin] (-0.234,0.107) -- (0.374,0.543);
\draw[black!70!black,thin] (0.086,0.107) -- (0.694,0.543);
\draw[black!70!black,thin] (0.374,0.543) -- (0.694,0.543);
\draw[black!70!black,thin] (0.086,-0.213) -- (0.694,0.223);
\draw[black!70!black,thin] (0.694,0.223) -- (0.694,0.543);
\draw[black!70!black,thin,dashed,opacity=0.55] (2.834,1.553) -- (3.154,1.553);
\draw[black!70!black,thin,dashed,opacity=0.55] (2.834,1.553) -- (2.686,1.447);
\draw[black!70!black,thin,dashed,opacity=0.55] (2.834,1.553) -- (2.834,1.873);
\fill[black!30,fill opacity=0.5] (2.686,1.447) -- (3.006,1.447) -- (3.006,1.767) -- (2.686,1.767) -- cycle;
\fill[black!15,fill opacity=0.5] (2.686,1.767) -- (3.006,1.767) -- (3.154,1.873) -- (2.834,1.873) -- cycle;
\fill[black!45,fill opacity=0.5] (3.006,1.447) -- (3.154,1.553) -- (3.154,1.873) -- (3.006,1.767) -- cycle;
\draw[black!70!black,thin] (2.686,1.447) -- (3.006,1.447);
\draw[black!70!black,thin] (3.006,1.447) -- (3.006,1.767);
\draw[black!70!black,thin] (3.006,1.767) -- (2.686,1.767);
\draw[black!70!black,thin] (2.686,1.767) -- (2.686,1.447);
\draw[black!70!black,thin] (2.686,1.767) -- (2.834,1.873);
\draw[black!70!black,thin] (3.006,1.767) -- (3.154,1.873);
\draw[black!70!black,thin] (2.834,1.873) -- (3.154,1.873);
\draw[black!70!black,thin] (3.006,1.447) -- (3.154,1.553);
\draw[black!70!black,thin] (3.154,1.553) -- (3.154,1.873);
\fill[blue!80!black] (0.460,0.330) circle (1.3pt);
\fill[blue!80!black] (1.000,0.000) circle (1.3pt);
\fill[blue!80!black] (0.000,0.000) circle (1.3pt);
\fill[blue!80!black] (1.460,0.330) circle (1.3pt);
\fill[orange!80!black] (2.920,0.660) circle (1.3pt);
\fill[orange!80!black] (2.920,1.660) circle (1.3pt);
\node[blue!70!black,font=\scriptsize] at (-0.580,0.550) {$B_1$};
\node[orange!85!black,font=\scriptsize] at (3.580,1.710) {$B_2$};
\draw[->,black!60] (-1.00,-1.00) -- (-0.42,-1.00) node[right,font=\tiny,black!60] {mode 1};
\draw[->,black!60] (-1.00,-1.00) -- (-0.77,-0.835) node[right,font=\tiny,black!60,yshift=2.5pt] {mode 2};
\draw[->,black!60] (-1.00,-1.00) -- (-1.00,-0.44) node[above,font=\tiny,black!60] {mode 3};
\node[circle,draw,inner sep=1.1pt,font=\tiny] (h1) at (0.0,2.85) {1};
\node[circle,draw,inner sep=1.1pt,font=\tiny] (h2) at (0.62,2.85) {2};
\node[font=\tiny,black!60] at (0.31,3.12) {$\Gamma$: no edge};
\node[font=\small] at (-0.95,3.15) {(b)};
\end{tikzpicture}
\caption{Boxes and rooks in the signature space $\Omega$, for two indices
$r\in\{1,2\}$ with facet boxes $B_1=\{1,2\}\times\{1,2\}\times\{1\}$ (blue) and
an orange box $B_2$.
(a) $B_2=\{3\}\times\{2\}\times\{1,2\}$: the mode-$1$ signatures $\{1,2\}$ and
$\{3\}$ are disjoint, but the signatures intersect in modes $2$ and $3$, so the
pair is facet-opposite in exactly one mode, i.e. a bridge edge
(Definition~\ref{def:bridge}).  Correspondingly, one rook step changing only the
first coordinate crosses from $B_1$ to $B_2$, as in the proof of
Lemma~\ref{lem:bridge-rook}: the boxes lie in a single rook component.
(b) $B_2=\{3\}\times\{3\}\times\{1,2\}$: the signatures are disjoint in modes
$1$ \emph{and} $2$, no bridge edge exists, and every step between the boxes would
have to change two coordinates at once; the union splits into two rook components
$K_1\sqcup K_2$.  Since nonnegative entries cannot cancel, the support of the
exchanged tensor is the union of all boxes, each competing box $B_s'$ (grey) lies
inside a single component, and restricting the exchange to $K_1$ and $K_2$
produces the subexchanges of Lemma~\ref{lem:rect-split}: the exchange is reducible.}
\label{fig:rook}
\end{figure*}

\begin{lemma}[Rectangular splitting]\label{lem:rect-split}
Let
\[
\sum_{r\in S}X_r=\sum_{s\in J}Y_s
\]
be a positive exchange.  Choose nonnegative factors
\[
X_r=x_r^{(1)}\otimes\cdots\otimes x_r^{(d)},
\qquad
Y_s=y_s^{(1)}\otimes\cdots\otimes y_s^{(d)}
\]
according to Convention~\ref{conv:nonneg-factors}.  Suppose that for every
mode $j$ there is an admissible subspace
$W_j\subseteq\R^{n_j}$ containing all mode-$j$ factors of both sides.  If
the bridge graph of the family
$(x_r^{(j)})_{r\in S}$ with respect to $(W_j)_j$ is disconnected, then the
exchange is reducible.
\end{lemma}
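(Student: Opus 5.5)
The plan is to push the whole exchange through the facet-coordinate maps and argue in the signature space $\Omega$. For each mode $j$, the map $L_{W_j}$ is linear and injective on $W_j$ and sends $C(W_j)$ into the nonnegative orthant (Lemma~\ref{lem:facet-map}). Hence the multilinear map
\[
\mathcal L=L_{W_1}\otimes\cdots\otimes L_{W_d}:W_1\otimes\cdots\otimes W_d\to\R^{\Omega}
\]
is injective, and it sends each $X_r$ and $Y_s$ to a nonnegative rank-one tensor on $\Omega$. The support of $\mathcal L X_r$ is exactly the facet box
\[
B_r=\mathcal A_1(r)\times\cdots\times\mathcal A_d(r).
\]
Similarly, $\supp(\mathcal L Y_s)$ is the box $B'_s$ built from the signatures of the $y_s^{(j)}$. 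This uses that every factor lies in $C(W_j)$, which holds because it is nonnegative and lies in $W_j$.

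Applying $\mathcal L$ gives $\sum_r\mathcal LX_r=\sum_s\mathcal LY_s$. Since nonnegative terms cannot cancel, the support of both sides is
\[
U=\bigcup_{r\in S}B_r=\bigcup_{s\in J}B'_s.
\]
In particular each $B'_s\subseteq U$. Each $B'_s$ is a nonempty product set, so it is rook-connected, and therefore lies inside a single rook component of $U$. Since the bridge graph is disconnected, Lemma~\ref{lem:bridge-rook} says that $U$ has at least two rook components. Each of them contains some $B_r$.

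Pick one rook component $K$ and set
\[
I'=\{r:B_r\subseteq K\},\qquad J'=\{s:B'_s\subseteq K\}.
\]
Restrict the identity $\sum_r\mathcal LX_r=\sum_s\mathcal LY_s$ to the coordinates in $K$. Every term not in $I'$ or $J'$ has support disjoint from $K$, so
\[
\sum_{r\in I'}\mathcal LX_r=\sum_{s\in J'}\mathcal LY_s
\]
holds on $K$. It also holds off $K$, where both sides vanish. By injectivity of $\mathcal L$,
\[
\sum_{r\in I'}X_r=\sum_{s\in J'}Y_s,
\]
so $(I',J')$ is a subexchange. It is nonempty because $I'\neq\varnothing$. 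It is not full because another component contains some $B_r$ with $r\notin I'$. Therefore the exchange is reducible.

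The main obstacle is making precise that $\supp(\mathcal L(x^{(1)}\otimes\cdots\otimes x^{(d)}))$ equals the product of the signatures. The nonnegativity of each $L_{W_j}x^{(j)}$, combined with the fact that a product of nonnegative reals is positive iff each factor is, settles this. A second point to check is the injectivity of the tensor product of injective linear maps, which is standard. Both are routine once stated, so I expect the proof to be short.
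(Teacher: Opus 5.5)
Your proposal is correct and follows essentially the same route as the paper: push the exchange through the injective map $L_{W_1}\otimes\cdots\otimes L_{W_d}$, use the absence of cancellation to identify the common support as the union of the facet boxes, get at least two rook components from Lemma~\ref{lem:bridge-rook}, restrict to one component, and pull back by injectivity. The only difference is cosmetic: the paper partitions along all components at once and also checks $J_\ell\neq\varnothing$. You extract a single component, and for that one $I'\neq\varnothing$ plus non-fullness already gives a nontrivial subexchange.
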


\begin{proof}
All factors are nonnegative vectors in $W_j$, hence belong to $C(W_j)$.
Consider the tensor-product map
\[
\Lambda
=
L_{W_1}\otimes\cdots\otimes L_{W_d}:
W_1\otimes\cdots\otimes W_d
\longrightarrow
\R^{\mathcal F(W_1)}
\otimes\cdots\otimes
\R^{\mathcal F(W_d)}
\cong
\R^\Omega.
\]
Each $L_{W_j}$ is injective by Lemma~\ref{lem:facet-map}, so $\Lambda$ is
injective.

For a product tensor
\[
z^{(1)}\otimes\cdots\otimes z^{(d)},
\qquad
z^{(j)}\in C(W_j)\setminus\{0\},
\]
we have
\[
\Lambda
\bigl(
z^{(1)}\otimes\cdots\otimes z^{(d)}
\bigr)
=
L_{W_1}z^{(1)}
\otimes\cdots\otimes
L_{W_d}z^{(d)}.
\]
This is entrywise nonnegative on $\Omega$, and its support is the box
\[
\supp(L_{W_1}z^{(1)})
\times\cdots\times
\supp(L_{W_d}z^{(d)}).
\]
In particular,
\[
\supp(\Lambda X_r)=B_r,
\]
and
\[
\supp(\Lambda Y_s)
=
B_s'
:=
\prod_{j=1}^d
\mathcal A_{W_j}\bigl(y_s^{(j)}\bigr),
\]
where all these boxes are nonempty.

Applying $\Lambda$ to the exchange gives
\[
M:=\sum_{r\in S}\Lambda X_r
=
\sum_{s\in J}\Lambda Y_s.
\]
Since both sides are sums of entrywise nonnegative tensors, no cancellation
can occur, and therefore
\[
\supp M
=
\bigcup_{r\in S}B_r
=
\bigcup_{s\in J}B_s'
=:U.
\]
In particular, every competing box $B_s'$ is contained in $U$.

Now decompose
\[
U=K_1\sqcup\cdots\sqcup K_c
\]
into rook-connected components.  Since the bridge graph is disconnected,
Lemma~\ref{lem:bridge-rook} gives $c\ge2$.  Every prescribed box $B_r$ and
every competing box $B_s'$ is rook-connected, so each is contained in a
single component.  Define
\[
S_\ell
=
\{r\in S:B_r\subseteq K_\ell\},
\qquad
J_\ell
=
\{s\in J:B_s'\subseteq K_\ell\},
\qquad
\ell=1,\dots,c.
\]
These sets partition $S$ and $J$.  Each $S_\ell$ is nonempty because every
point of $K_\ell$ belongs to some prescribed box $B_r$, and that entire
box is rook-connected and hence contained in $K_\ell$.

For each $\ell$, put
\[
M_\ell
=
\sum_{r\in S_\ell}\Lambda X_r,
\qquad
N_\ell
=
\sum_{s\in J_\ell}\Lambda Y_s.
\]
Both tensors are supported in $K_\ell$.  At every point of $K_\ell$, all
terms belonging to other components vanish, so
\[
M_\ell=M=N_\ell
\]
on $K_\ell$; both sides vanish outside $K_\ell$.  Hence
$M_\ell=N_\ell$ on all of $\Omega$.

Moreover $J_\ell\neq\varnothing$.  Otherwise $N_\ell=0$, whereas
\[
\supp M_\ell
=
\bigcup_{r\in S_\ell}B_r
\neq\varnothing,
\]
a contradiction.

Since $\Lambda$ is injective,
\[
\sum_{r\in S_\ell}X_r
=
\sum_{s\in J_\ell}Y_s,
\qquad
\ell=1,\dots,c.
\]
Because $c\ge2$, at least one such pair is nonempty and proper.  It is
therefore a nontrivial subexchange, so the original exchange is reducible.
\end{proof}

The significance of Lemma~\ref{lem:rect-split} is that irreducibility imposes
a connectivity requirement on the factor signatures.  This requirement is
independent of linear dimension: it arises solely because nonnegative
product tensors have rectangular supports in the signature space and cannot
cancel outside those supports.  In the next section, we quantify the amount
of additional factor-space dimension needed to satisfy this connectivity
requirement.


\section{The Positive Scattering Term}\label{sec:tau}

The preceding section showed that irreducibility of a positive exchange
forces connectivity of the associated bridge graph.  This suggests measuring
how far the minimal factor spaces are from being connected: how many
additional factor-space dimensions are needed to reconnect the prescribed
components while remaining inside the support hulls allowed by
nonnegativity?  The positive scattering term makes this quantity precise.

\subsection{Support Confinement and the Definition of $\tau$}
\label{subsec:tau-def}

Return to the decomposition \eqref{eq:decomposition} and fix
$S\subseteq[R]$ with $|S|\ge2$.  For each mode define the \emph{support hull}
\[
N_j(S)
=
\bigcup_{r\in S}\supp\bigl(a_r^{(j)}\bigr)
\subseteq[n_j],
\qquad
H_j(S)
=
\spann\{e_i:i\in N_j(S)\},
\]
and write
\[
h_j(S)=\dim H_j(S)=|N_j(S)|.
\]
Then
\[
U_j(S)\subseteq H_j(S),
\]
and $H_j(S)$ is admissible because it is spanned by standard basis vectors.

The first observation is that the support hull is not merely a convenient
restriction: it is forced by any nonnegative exchange involving the
components indexed by $S$.

\begin{lemma}[Support confinement]\label{lem:support-confinement}
Let
\[
\sum_{r\in S}c_rP_r=\sum_{s\in J}Q_s
\]
be a positive exchange with $c_r>0$, and write
\[
Q_s=b_s^{(1)}\otimes\cdots\otimes b_s^{(d)}
\]
with nonnegative factors.  Then
\[
\supp\bigl(b_s^{(j)}\bigr)\subseteq N_j(S),
\qquad
s\in J,\quad j\in[d].
\]
Equivalently,
\[
b_s^{(j)}\in H_j(S)
\qquad\text{for all }s\in J,\ j\in[d].
\]
\end{lemma}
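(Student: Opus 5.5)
The plan is to argue by contradiction, using only the no-cancellation property of nonnegative sums. Suppose that for some $s\in J$ and some mode $j$ there is an index $i\in\supp(b_s^{(j)})$ with $i\notin N_j(S)$. The idea is to locate a single entry of the tensor that is strictly positive on the right-hand side but zero on the left-hand side.

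First, I will build that entry. Because $Q_s$ is nonzero, Lemma~\ref{lem:product-tensors} together with Convention~\ref{conv:nonneg-factors} gives, for every mode $l\neq j$, a nonzero nonnegative factor $b_s^{(l)}$. For each such $l$ I choose an index $i_l\in\supp(b_s^{(l)})$, and I set $i_j=i$. The entry of $Q_s$ at the multi-index $(i_1,\dots,i_d)$ is then $\prod_l b_s^{(l)}(i_l)>0$. Every other $Q_{s'}$ is entrywise nonnegative, so the right-hand side is strictly positive at this multi-index.

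Next, I will show the left-hand side vanishes at the same multi-index. For each $r\in S$, the entry of $c_rP_r$ there equals $c_r\prod_l a_r^{(l)}(i_l)$. This product contains the factor $a_r^{(j)}(i)$, which is zero because $i\notin N_j(S)\supseteq\supp(a_r^{(j)})$. Summing over $r\in S$, the left-hand side is $0$ at this entry. That contradicts the equality of the two sides, so $\supp(b_s^{(j)})\subseteq N_j(S)$.

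The equivalent formulation $b_s^{(j)}\in H_j(S)$ is immediate, since $H_j(S)$ is exactly the space of vectors supported in $N_j(S)$. There is no real obstacle here. The only point needing care is that every factor $b_s^{(l)}$ is nonzero, so that the indices $i_l$ can be chosen; this follows because $Q_s\neq0$.
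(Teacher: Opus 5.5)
Your proposal is correct and follows essentially the same argument as the paper's proof. You assume some $i_0\in\supp(b_s^{(j)})\setminus N_j(S)$, complete it to a multi-index using indices from the supports of the other nonzero factors of $Q_s$, and observe that this entry is strictly positive on the right-hand side but zero on the left, since $a_r^{(j)}(i_0)=0$ for every $r\in S$.
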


\begin{proof}
Suppose, to the contrary, that for some $s\in J$ and some mode $j$ there is
\[
i_0\in\supp\bigl(b_s^{(j)}\bigr)\setminus N_j(S).
\]
For every mode $l\neq j$, choose
\[
i_l\in\supp\bigl(b_s^{(l)}\bigr),
\]
which is possible because the factors are nonzero.  Consider the entry of
the exchange indexed by $(i_1,\ldots,i_d)$.  The term $Q_s$ contributes
\[
b_s^{(j)}(i_0)\prod_{l\neq j}b_s^{(l)}(i_l)>0,
\]
so the right-hand side is strictly positive.  On the left, every term
$c_rP_r$ vanishes at this index because
$i_0\notin N_j(S)$ implies
$a_r^{(j)}(i_0)=0$ for every $r\in S$.  This is a contradiction.
\end{proof}

Thus every competing nonnegative factor in an exchange involving $S$ is
confined to the coordinate subspaces $H_j(S)$.  We therefore measure
connectivity only through intermediate spaces lying between the prescribed
factor spans and these support hulls.

Call a tuple of subspaces
\[
(W_1,\ldots,W_d)
\]
\emph{admissible for $S$} if, for every $j$,
\[
U_j(S)\subseteq W_j\subseteq H_j(S)
\]
and $W_j$ is admissible.  Call the tuple \emph{feasible for $S$} if its
associated bridge graph, as defined in Section~\ref{sec:bridge}, is
connected.

\begin{definition}[Positive scattering]
\label{def:tau}
The \emph{positive scattering term} of $S$ is
\begin{equation}
\tau(S)
=
\min\left\{
\sum_{j=1}^d
\bigl(\dim W_j-d_j(S)\bigr):
(W_1,\ldots,W_d)
\text{ is feasible for }S
\right\},
\label{eq:tau-definition}
\end{equation}
with $\tau(S)=+\infty$ if no feasible tuple exists.
\end{definition}

The interpretation is straightforward.  The minimal choice
\[
W_j=U_j(S)
\]
uses no additional dimensions and gives the original bridge graph
$\Gamma_0(S)$.  Enlarging $W_j$ can create new facet intersections and
thereby add bridge edges.  The quantity $\tau(S)$ is the minimum total
number of dimensions that must be added across the modes before the bridge
graph becomes connected.

\begin{remark}[Well-definedness and invariance]
\label{rem:tau-well-defined}
Whenever a feasible tuple exists, the minimum in
\eqref{eq:tau-definition} is attained because the possible costs are
nonnegative integers bounded above by
\[
\sum_{j=1}^d\bigl(h_j(S)-d_j(S)\bigr).
\]
The value $\tau(S)$ is invariant under positive rescaling of the factors:
such rescaling changes neither the spaces $U_j(S)$ and $H_j(S)$ nor the
facet signatures.  It is also invariant under relabeling of the components
and under permutations of coordinates within a mode.  Finally, ambient
coordinates that are identically zero do not affect $\tau(S)$, because
support confinement restricts attention to the support hulls.
\end{remark}

The remainder of this section gives an exact finite characterization of
\eqref{eq:tau-definition}.  The key observation is that connectivity can be
built edge by edge, so the continuous optimization over subspaces can first
be separated by mode and then reduced to spanning trees.

\subsection{Mode Costs and the Tree Formula}\label{subsec:tau-tree}

Fix a mode $j$ and a finite set $F$ of unordered pairs of elements of $S$.
Define
\begin{multline*}
\kappa_j(F)
=
\min\Bigl\{
\dim W-d_j(S):
W\text{ admissible},\
U_j(S)\subseteq W\subseteq H_j(S),\\
\mathcal A_W\bigl(a_r^{(j)}\bigr)
\cap
\mathcal A_W\bigl(a_s^{(j)}\bigr)
\neq\varnothing
\quad
\text{for every }\{r,s\}\in F
\Bigr\},
\end{multline*}
with $\kappa_j(F)=+\infty$ if no such $W$ exists, and
$\kappa_j(\varnothing)=0$.

The quantity $\kappa_j(F)$ is the minimum number of dimensions that must be
added in mode $j$ in order to make all pairs in $F$ have intersecting facet
signatures.  The following lemma gives the first simplification.

\begin{lemma}[Finiteness of $\kappa_j(F)$]\label{lem:kappa-finite}
For every finite set $F$ of pairs,
$\kappa_j(F)<\infty$ if and only if
\[
\supp\bigl(a_r^{(j)}\bigr)
\cap
\supp\bigl(a_s^{(j)}\bigr)
\neq\varnothing
\]
for every $\{r,s\}\in F$.  When finite,
\[
0\le\kappa_j(F)\le h_j(S)-d_j(S).
\]
\end{lemma}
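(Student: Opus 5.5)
The plan is to prove the two directions separately, using $W=H_j(S)$ as the test space for sufficiency.

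\emph{Sufficiency.} Take $W=H_j(S)$. It is admissible and satisfies $U_j(S)\subseteq W\subseteq H_j(S)$, with cost $h_j(S)-d_j(S)$. Its intrinsic cone is the coordinate orthant $\Rnn^{N_j(S)}$. The facets of this cone are exactly $C(W)\cap\{x_i=0\}$ for $i\in N_j(S)$; when $|N_j(S)|=1$ the cone is a ray, its only facet is $\{0\}$, and the representative coordinate is the single one. In either case the facet signature of a nonnegative vector $x\in W$ is
\[
\mathcal A_W(x)=\{F_i:i\in\supp(x)\}.
\]
Hence $\mathcal A_W(a_r^{(j)})\cap\mathcal A_W(a_s^{(j)})\neq\varnothing$ exactly when the supports of $a_r^{(j)}$ and $a_s^{(j)}$ intersect. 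So if all supports indexed by pairs in $F$ intersect, $W=H_j(S)$ is feasible. This gives $\kappa_j(F)\le h_j(S)-d_j(S)$, and nonnegativity of the cost is immediate from $U_j(S)\subseteq W$.

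\emph{Necessity.} Suppose some admissible $W$ between $U_j(S)$ and $H_j(S)$ makes the signatures of $x=a_r^{(j)}$ and $y=a_s^{(j)}$ intersect. We must show $\supp x\cap\supp y\neq\varnothing$. The key point is that signatures refine supports: if $F\in\mathcal A_W(x)\cap\mathcal A_W(y)$, then by definition $x_{i(F)}>0$ and $y_{i(F)}>0$. Thus $i(F)\in\supp x\cap\supp y$, because the representative $i(F)$ is a genuine ambient coordinate by Lemma~\ref{lem:intrinsic-cone}(ii). The case $F=\varnothing$ was declared to have cost $0$, so it is trivial.

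\emph{Main obstacle.} The only delicate step is the facet description of the orthant $C(H_j(S))$, in particular the degenerate case $h_j(S)=1$ where the facet is $\{0\}$. I would check it directly from Lemma~\ref{lem:intrinsic-cone}(ii)--(iii). Everything else follows from unwinding the definitions.
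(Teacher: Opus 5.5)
Your proposal is correct and follows the paper's proof essentially verbatim. For necessity, the coordinate representative $i(F)$ of a common facet is positive for both factors, so their supports meet. For sufficiency, you take $W=H_j(S)$: its intrinsic cone is the coordinate orthant on $N_j(S)$, facet signatures coincide with ordinary supports, and this gives the bound $h_j(S)-d_j(S)$. Your separate check of the degenerate case $h_j(S)=1$ is a harmless refinement that the paper leaves implicit.
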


\begin{proof}
Suppose first that $\kappa_j(F)<\infty$, and let $W$ be admissible for
which all required signature intersections are nonempty.  For
$\{r,s\}\in F$, choose
\[
F_0\in
\mathcal A_W\bigl(a_r^{(j)}\bigr)
\cap
\mathcal A_W\bigl(a_s^{(j)}\bigr).
\]
The corresponding coordinate is strictly positive for both factors, so
their ordinary supports intersect.

Conversely, suppose all required pairs have intersecting ordinary
supports.  Take
\[
W=H_j(S).
\]
Then $C(W)$ is the full nonnegative orthant on the support coordinates
$N_j(S)$, and its facets are exactly the coordinate hyperplanes
$x_i=0$, $i\in N_j(S)$.  Hence facet signatures coincide with ordinary
supports.  All required signature intersections therefore hold, and the
cost is
\[
h_j(S)-d_j(S).
\]
The lower bound is immediate from $W\supseteq U_j(S)$.
\end{proof}

It is useful to view a pair as being resolved in a mode when its two
signatures intersect.  The next result shows that only a spanning tree of
such pairwise requirements is needed.

For a spanning tree $T$ of the complete graph on $S$ and a map
\[
\varepsilon:E(T)\to[d],
\]
call $\varepsilon(e)$ the \emph{exempted mode} of edge $e$.  Define
\[
F_j(T,\varepsilon)
=
\{e\in E(T):\varepsilon(e)\neq j\}.
\]
Thus mode $j$ is required to resolve every tree edge except those exempted
to $j$.

\begin{proposition}[Tree formula]\label{prop:tau-tree}
For every $S\subseteq[R]$ with $|S|\ge2$,
\begin{equation}
\tau(S)
=
\min_{(T,\varepsilon)}
\sum_{j=1}^d
\kappa_j\bigl(F_j(T,\varepsilon)\bigr),
\label{eq:tau-tree}
\end{equation}
where $(T,\varepsilon)$ ranges over all spanning trees of the complete graph
on $S$ and all exemption maps.  Both sides may equal $+\infty$.
\end{proposition}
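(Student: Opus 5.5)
We prove the two inequalities separately. Throughout, the key observation is that the bridge-graph condition and the cost $\sum_j(\dim W_j-d_j(S))$ both decompose over modes once a witnessing spanning tree is fixed. Both directions are also compatible with the value $+\infty$.

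\emph{Step 1: $\tau(S)\ge$ right-hand side.} Let $(W_1,\dots,W_d)$ be feasible for $S$, so its bridge graph $\Gamma$ is connected. Choose a spanning tree $T$ of $\Gamma$; it is also a spanning tree of the complete graph on $S$. Each edge $e=\{r,s\}\in E(T)$ is a bridge edge, so by Definition~\ref{def:bridge} the pair $r,s$ is facet-opposite in at most one mode. Define $\varepsilon(e)$ to be that mode if it exists, and an arbitrary mode (say $1$) otherwise. Then for every $j$ and every $e\in F_j(T,\varepsilon)$, the $W_j$-signatures of the two endpoints intersect. Each $W_j$ is admissible with $U_j(S)\subseteq W_j\subseteq H_j(S)$, so $W_j$ is a competitor in the definition of $\kappa_j(F_j(T,\varepsilon))$. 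Hence
\[
\dim W_j-d_j(S)\ge\kappa_j\bigl(F_j(T,\varepsilon)\bigr)
\]
for each $j$. Summing over $j$ and minimizing over feasible tuples gives $\tau(S)\ge\min_{(T,\varepsilon)}\sum_j\kappa_j(F_j(T,\varepsilon))$. If no feasible tuple exists, the inequality holds trivially.

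\emph{Step 2: $\tau(S)\le$ right-hand side.} Fix $(T,\varepsilon)$ with $\sum_j\kappa_j(F_j(T,\varepsilon))<\infty$; otherwise there is nothing to prove. For each $j$, choose a minimizer $W_j$ for $\kappa_j(F_j(T,\varepsilon))$. If $F_j=\varnothing$, take $W_j=U_j(S)$, which is admissible because it is spanned by the nonnegative vectors $a_r^{(j)}$ and has cost $0$, matching $\kappa_j(\varnothing)=0$. The tuple $(W_j)_j$ is admissible for $S$.

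We check that every tree edge $e$ is a bridge edge of this tuple. For every mode $j\ne\varepsilon(e)$ we have $e\in F_j$, so the mode-$j$ signatures of the endpoints intersect. Thus the endpoints are facet-opposite at most in mode $\varepsilon(e)$. Hence $T\subseteq\Gamma((W_j)_j)$, the bridge graph is connected, and the tuple is feasible. Its cost is exactly $\sum_j\kappa_j(F_j(T,\varepsilon))$. Minimizing over $(T,\varepsilon)$ gives the reverse inequality.

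\emph{The delicate point.} The signature $\mathcal A_W(a_r^{(j)})$ depends on $W$ and not only on the vector. So we must make sure that, in each mode, the signature intersections used in the bridge graph are computed relative to the same space $W_j$ that enters the definition of $\kappa_j$. This is automatic here, because the bridge graph of a tuple is defined mode by mode with respect to $W_j$, and $\kappa_j$ constrains a single space $W$ in mode $j$ alone. The modes therefore decouple completely. We must also confirm that the minima defining $\kappa_j$ are attained, since the costs are nonnegative integers, exactly as in Remark~\ref{rem:tau-well-defined}.
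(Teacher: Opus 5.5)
Your proof is correct and follows the paper's argument essentially step for step. In one direction you extract a spanning tree and exemption map from a connected bridge graph to get $\kappa_j(F_j(T,\varepsilon))\le\dim W_j-d_j(S)$; in the other you assemble mode-wise minimizers into a tuple whose bridge graph contains $T$. You also spell out the case $F_j=\varnothing$ (taking $W_j=U_j(S)$) and the attainment of the minima defining $\kappa_j$, which the paper leaves implicit.
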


\begin{proof}
Let $\rho$ denote the right-hand side.

\emph{$\rho\le\tau(S)$.}
Assume $\tau(S)<\infty$ and let
$(W_j)_j$ be a feasible tuple attaining the minimum.  Its bridge graph is
connected, so it contains a spanning tree $T$.  For every edge
$e=\{r,s\}\in E(T)$, choose an exempted mode $\varepsilon(e)$ in which
$r,s$ are allowed to be facet-opposite.  Such a mode exists because
$e$ is a bridge edge.  Hence, for every $j\neq\varepsilon(e)$,
\[
\mathcal A_{W_j}\bigl(a_r^{(j)}\bigr)
\cap
\mathcal A_{W_j}\bigl(a_s^{(j)}\bigr)
\neq\varnothing.
\]
Thus $W_j$ is admissible in the definition of
$\kappa_j(F_j(T,\varepsilon))$, and
\[
\kappa_j(F_j(T,\varepsilon))
\le
\dim W_j-d_j(S).
\]
Summing over $j$ gives
\[
\rho\le\tau(S).
\]

\emph{$\tau(S)\le\rho$.}
Suppose $\rho<\infty$ and fix $(T,\varepsilon)$ attaining the minimum.
For each mode choose an admissible $W_j$ attaining
\[
\kappa_j(F_j(T,\varepsilon)).
\]
For every edge $e=\{r,s\}\in E(T)$ and every
$j\neq\varepsilon(e)$, the signatures of
$a_r^{(j)}$ and $a_s^{(j)}$ intersect.  Thus $e$ is facet-opposite in at
most the one mode $\varepsilon(e)$, so every edge of $T$ is a bridge edge
for the tuple $(W_j)_j$.  The resulting bridge graph contains the spanning
tree $T$ and is therefore connected.  Hence the tuple is feasible and
\[
\tau(S)
\le
\sum_{j=1}^d
\bigl(\dim W_j-d_j(S)\bigr)
=
\rho.
\]
\end{proof}

For two components, the formula takes a particularly simple form.  If
$S=\{r,s\}$, the unique spanning tree consists of the single edge
$e=\{r,s\}$.  Exempting mode $j_0$ gives
\[
F_j=
\begin{cases}
\{e\},&j\neq j_0,\\
\varnothing,&j=j_0,
\end{cases}
\]
and therefore
\begin{equation}
\tau(\{r,s\})
=
\min_{j_0\in[d]}
\sum_{j\neq j_0}
\kappa_j\bigl(\{e\}\bigr).
\label{eq:tau-pair}
\end{equation}

\subsection{Exact Mode Costs}\label{subsec:exact-kappa}

The remaining question is how difficult the mode costs
$\kappa_j(F)$ are to compute.  The answer is particularly simple:
for each pair, the cost is always $0$, $1$, or $+\infty$.  We establish
this in one mode and suppress the mode index and the set $S$.

Delete coordinates that vanish identically on the support hull and write
\[
H=\R^m,
\qquad
U=\spann\{a_r:r\in S\}\subseteq H,
\qquad
p=\sum_{r\in S}a_r.
\]
By construction,
\[
p_i>0,\qquad i\in[m].
\]
For an intermediate space $U\subseteq W\subseteq H$, define
\[
E(W)
=
\{i\in[m]:
C(W)\cap\{x_i=0\}
\text{ is a facet of }C(W)\}.
\]
If several coordinates define the same facet, all such coordinates belong
to $E(W)$.

\begin{lemma}[Automatic admissibility]\label{lem:auto-admissible}
Every intermediate subspace
\[
U\subseteq W\subseteq H
\]
is admissible.
\end{lemma}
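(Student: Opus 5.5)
The plan is to exhibit explicitly a spanning set of nonnegative vectors inside $W$. The vector $p=\sum_{r\in S}a_r$ lies in $U\subseteq W$, and after the reduction every coordinate of $p$ is strictly positive, so $p$ lies in the interior of the orthant $\Rnn^m$. The strategy is to use $p$ as an interior anchor: every vector of $W$ becomes nonnegative once a large enough multiple of $p$ is added to it.

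Concretely, fix any $w\in W$ and set
\[
t>\max_{i\in[m]}\frac{|w_i|}{p_i},
\]
which is finite because each $p_i>0$. Then $w+tp\in W$ is entrywise strictly positive, so $w+tp\in C(W)=W\cap\Rnn^m$. Likewise $tp\in C(W)$. Writing
\[
w=(w+tp)-tp
\]
expresses $w$ as a difference of two elements of $W\cap\Rnn^m$. Since $w$ was arbitrary, this gives $W\subseteq\spann(W\cap\Rnn^m)$. The reverse inclusion holds trivially, so $W$ is admissible.

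The only point that needs care is the reduction step that makes $p_i>0$ for every $i$. We deleted the coordinates outside $N(S)=\bigcup_{r\in S}\supp(a_r)$, and by definition each remaining coordinate lies in the support of some $a_r$. Hence $p_i\ge a_r(i)>0$ there, because all $a_r$ are nonnegative. One should also note that deleting these identically-zero coordinates does not change admissibility. This follows because $H=H_j(S)$ is exactly the coordinate subspace on $N_j(S)$, so $W\subseteq H$ already vanishes on the deleted coordinates.

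With this reduction in place, nothing in the argument is a real obstacle. A side consequence is that $C(W)$ contains the interior point $p$ of the orthant, which will presumably be used later to describe the facets in $E(W)$.
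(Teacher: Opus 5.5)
Your proposal is correct and matches the paper's proof: you anchor at the strictly positive vector $p=\sum_{r\in S}a_r\in U\subseteq W$ and write $w=(w+tp)-tp$ with $t$ large enough that $w+tp\ge0$. The only differences are cosmetic. The paper takes $t\ge\max_{i:\,w_i<0}(-w_i)/p_i$ where you take the strict bound with $|w_i|$. You also spell out why the coordinate reduction gives $p_i>0$, which the paper simply asserts.
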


\begin{proof}
Let $w\in W$.  Since every coordinate of $p$ is strictly positive, there is
$t>0$ such that
\[
tp_i+w_i\ge0,
\qquad i\in[m].
\]
For example, one may take
\[
t\ge
\max_{i:\,w_i<0}\frac{-w_i}{p_i},
\]
with the maximum over the empty set interpreted as zero.  Then both
$tp$ and $tp+w$ belong to $W\cap\Rnn^m$, and
\[
w=(tp+w)-tp.
\]
Thus every vector of $W$ lies in the linear span of its nonnegative part.
\end{proof}

For each coordinate, define the normalized functional
\[
\ell_i^W=e_i^*|_W,
\qquad
q_i^W=\frac{\ell_i^W}{p_i}.
\]
Since $q_i^W(p)=1$, all such functionals lie in the affine hyperplane
\[
\{f\in W^*:f(p)=1\}.
\]

\begin{proposition}[Normalized dual polytope]
\label{prop:normalized-dual-polytope}
Let $U\subseteq W\subseteq H$.
\begin{enumerate}[(i)]
\item
\[
C(W)^*
=
\cone\{\ell_i^W:i\in[m]\}.
\]
\item The section
\[
P(W)=\{f\in C(W)^*:f(p)=1\}
\]
satisfies
\[
P(W)=\conv\{q_i^W:i\in[m]\}.
\]
\item A coordinate $i$ belongs to $E(W)$ if and only if $q_i^W$ is a vertex
of $P(W)$.
\item For nonzero $x,y\in C(W)$,
\[
\mathcal A_W(x)\cap\mathcal A_W(y)\neq\varnothing
\quad\Longleftrightarrow\quad
E(W)\cap\supp(x)\cap\supp(y)\neq\varnothing.
\]
\end{enumerate}
\end{proposition}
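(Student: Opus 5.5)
The four parts will be proved in order, each resting on the previous ones and on Lemma~\ref{lem:intrinsic-cone}.

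\emph{Part (i).} By definition $C(W)=\{x\in W:\ell_i^W(x)\ge0,\ i\in[m]\}$, so $C(W)$ is the polyhedral cone in $W$ cut out by the finitely many functionals $\ell_i^W\in W^*$. Farkas' lemma, or equivalently polyhedral cone duality in the finite-dimensional space $W$ \citep{Schrijver1986}, then gives $C(W)^*=\cone\{\ell_i^W\}$. Admissibility of $W$ (Lemma~\ref{lem:auto-admissible}) is not even needed here.

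\emph{Part (ii).} Every $\ell_i^W$ satisfies $\ell_i^W(p)=p_i>0$. Hence any $f=\sum_i\mu_i\ell_i^W$ with $\mu_i\ge0$ and $f(p)=1$ can be rewritten as $f=\sum_i(\mu_ip_i)q_i^W$, where the weights $\mu_ip_i$ sum to $f(p)=1$. This is a convex combination, and the reverse inclusion is immediate.

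\emph{Part (iii).} This is the step I expect to need the most care. I will use the standard face duality between a full-dimensional pointed cone $C(W)\subseteq W$ and its dual. Since $p\in\relint C(W)$ (all $p_i>0$), the section $P(W)$ is a compact base of $C(W)^*$, so the extreme rays of $C(W)^*$ correspond to the vertices of $P(W)$. Facets of $C(W)$ correspond bijectively to extreme rays of $C(W)^*$, a facet $F$ corresponding to the ray of functionals that are nonnegative on $C(W)$ and vanish on $F$.

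For the forward direction, suppose $i\in E(W)$, so $F=C(W)\cap\{x_i=0\}$ is a facet. By Lemma~\ref{lem:intrinsic-cone}(iii) and the proof of (ii) there, $\ell_i^W$ is nonzero and spans the extreme ray attached to $F$. Hence $q_i^W$ is a vertex.

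For the converse, suppose $q_i^W$ is a vertex of $P(W)$. Then $\ell_i^W$ spans an extreme ray of $C(W)^*$, and the face $C(W)\cap\{x_i=0\}$ of $C(W)$ is dual to that ray, hence is a facet. The technical points I need to check are that $\ell_i^W\neq0$, which holds because $\ell_i^W(p)>0$, and that the duality applies when several $q_k^W$ coincide. The latter is harmless: coinciding normalized functionals define the same face.

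\emph{Part (iv).} By Lemma~\ref{lem:intrinsic-cone}(iii), every coordinate $k$ representing a facet $F$ has $x_k>0$ if and only if $x_{i(F)}>0$ for $x\in W$. Therefore
\[
\mathcal A_W(x)=\{F:\exists k\in E(W)\text{ defining }F,\ x_k>0\}.
\]
If $F$ is a common facet in both signatures, then its representative $i(F)\in E(W)$ lies in $\supp(x)\cap\supp(y)$. Conversely, if $k\in E(W)\cap\supp(x)\cap\supp(y)$, then the facet defined by $k$ lies in both signatures.
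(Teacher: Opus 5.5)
Your proposal is correct and follows essentially the same route as the paper: cone duality for (i) (the paper phrases it as a separation argument), rescaling the weights by $p_i$ for (ii), and for (iii) the facet--extreme-ray duality combined with $P(W)$ being a base of $C(W)^*$ because $p\in\relint C(W)$. For (iv), both you and the paper rely on the positive proportionality of coordinate representatives from Lemma~\ref{lem:intrinsic-cone}(iii); your (iii) is slightly more explicit than the paper's, since you argue each direction separately.
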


\begin{proof}
\emph{(i).}
The inclusion
\[
\cone\{\ell_i^W:i\in[m]\}\subseteq C(W)^*
\]
is immediate because each coordinate functional is nonnegative on $C(W)$.
For the reverse inclusion, let
\[
K=\cone\{\ell_i^W:i\in[m]\}
\]
and suppose $f\in C(W)^*\setminus K$.  Since $K$ is a closed polyhedral
cone, there exists $x\in W$ such that
\[
g(x)\ge0\quad\text{for all }g\in K,
\qquad
f(x)<0.
\]
In particular,
\[
x_i=\ell_i^W(x)\ge0
\qquad\text{for all }i,
\]
so $x\in C(W)$.  This contradicts $f\in C(W)^*$.  Hence
$C(W)^*=K$.

\emph{(ii).}
Write
\[
f=\sum_i\alpha_i\ell_i^W,
\qquad
\alpha_i\ge0.
\]
Since $f(p)=1$,
\[
1=\sum_i\alpha_i p_i.
\]
Putting
\[
\beta_i=\alpha_i p_i
\]
gives $\beta_i\ge0$, $\sum_i\beta_i=1$, and
\[
f=\sum_i\beta_iq_i^W.
\]
Thus
\[
P(W)=\conv\{q_i^W:i\in[m]\}.
\]

\emph{(iii).}
Because every coordinate of $p$ is strictly positive, $p$ lies in
$\relint C(W)$.  Hence every nonzero element of $C(W)^*$ is strictly
positive at $p$, and the section $P(W)$ meets each nonzero ray of
$C(W)^*$ exactly once.  Under this correspondence, extreme rays of
$C(W)^*$ are precisely the vertices of $P(W)$.  Since facets of the
full-dimensional pointed cone $C(W)$ correspond to extreme rays of its
dual cone, and the coordinate ray generated by $\ell_i^W$ defines a facet
exactly when $i\in E(W)$, the equivalence follows.

\emph{(iv).}
A facet belongs to both signatures precisely when some coordinate
representative $i\in E(W)$ is strictly positive for both $x$ and $y$.
If multiple coordinates define the same facet, their restrictions to $W$
are positive multiples of one another by Lemma~\ref{lem:intrinsic-cone}(iii),
so the condition is independent of the representative.
\end{proof}

The key geometric fact is that, once $U$ is enlarged at all, one additional
dimension is enough to make every effective coordinate define a facet.
Geometrically, the normalized coordinate functionals form a finite
configuration in an affine hyperplane.  One additional height coordinate can
be chosen so that all relevant points become exposed vertices of the lifted
convex hull.  This is what turns the apparently continuous enlargement
problem into the discrete $0/1/+\infty$ trichotomy below.

\begin{proposition}[One-dimensional facet saturation]
\label{prop:saturation}
If
\[
U\subsetneq H,
\]
there exists $z\in H\setminus U$ such that
\[
W=U+\R z
\]
satisfies
\[
\dim W=\dim U+1
\qquad\text{and}\qquad
E(W)=[m].
\]
\end{proposition}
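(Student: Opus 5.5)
The plan is to prescribe the values of the new vector $z$ coordinatewise, so that after adjoining $z$ the normalized functionals $q_i^W$ become the vertices of a lifted convex configuration. We then read off $E(W)=[m]$ from Proposition~\ref{prop:normalized-dual-polytope}\,(iii).

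\emph{Setup.} Let $k=\dim U$. By Proposition~\ref{prop:normalized-dual-polytope}\,(i) the restrictions $\ell_i^U$ span $U^*$, since $C(U)$ is pointed and full-dimensional. Hence the points $q_i^U$ affinely span the $(k-1)$-dimensional affine hyperplane $A=\{f\in U^*:f(p)=1\}$. For any $z\in H\setminus U$, put $W=U+\R z$ and identify $W^*\cong U^*\times\R$ via $f\mapsto(f|_U,f(z))$. Then
\[
q_i^W\;\longleftrightarrow\;\bigl(q_i^U,\;z_i/p_i\bigr).
\]
So $P(W)$ is the convex hull of the points $q_i^U\in A$, each lifted to the height $z_i/p_i$.

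\emph{Construction.} Fix a strictly convex function $g$ on $A$, for instance the squared Euclidean norm in any inner product. Take $z$ with $z_i=p_i\,g(q_i^U)$. All lifted points then lie on the graph of a strictly convex function. Each lifted point is therefore exposed by a supporting affine functional of that graph, namely the tangent at $q_i^U$, and this functional touches the graph only there. Consequently each $q_i^W$ is a vertex of $P(W)$. Coordinates whose functionals $q_i^U$ coincide are lifted to the same point. This is harmless: they give the same vertex, hence the same facet, and $E(W)$ includes all representatives of a facet. Proposition~\ref{prop:normalized-dual-polytope}\,(iii) then yields $E(W)=[m]$. Lemma~\ref{lem:auto-admissible} gives admissibility of $W$ for free.

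\emph{Main obstacle: ensuring $z\notin U$.} If $z\in U$, then $z_i/p_i=q_i^U(z)$ is an affine function of $q_i^U$ on $A$, and $W=U$. The issue is therefore that the configuration $\{q_i^U\}$ could be so small that every assignment of heights is affine, e.g.\ an affinely independent set. My first attempt would use the following observation. The set of height vectors $(h_i)$ that are constant on classes of coincident $q_i^U$ and put the lifted points in strictly convex position is open and nonempty in the space $V$ of such class-constant vectors. It also contains the non-affine vectors coming from strictly convex $g$ whenever there are more distinct points than $k$.
\begin{itemize}
\item If $p\circ V\neq U$, an open nonempty subset of $p\circ V$ cannot lie inside the proper subspace $U$. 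So a valid $z$ exists outside $U$.
\item If $p\circ V=U$, then every height assignment is realized by $U$. In this case I would argue directly that every coordinate already defines a facet of $C(U)$, or that a generic $z\in H\setminus U$ works. Here the extra dimension only separates coincident functionals, and the two-point fibres over each $q_i^U$ remain vertices, since the base points are already in convex position.
\end{itemize}
This second case is the step I would need to check most carefully.
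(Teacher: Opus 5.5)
Your treatment of the case where the number $g$ of distinct points among the $q_i^U$ exceeds $k=\dim U$ is sound and matches the paper. The paper lifts by $\|b\|^2$ and, if those heights happen to be affine, perturbs by a class-constant vector outside the affine height vectors. Your openness argument achieves the same thing: an open subset of $p\circ V$ cannot lie in the proper subspace $U$. Your side remark that a strictly convex $g$ by itself gives non-affine heights is not true in general; cocircular points with $g=\|\cdot\|^2$ give affine heights. The openness argument does not need that remark.

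The gap is the case $p\circ V=U$, i.e.\ $g=k$, where the distinct base points $b^{(1)},\dots,b^{(k)}$ are affinely independent. There every class-constant height vector is affine, so getting $z\notin U$ forces you to break class-constancy. Your claim that a generic $z\in H\setminus U$ then works is false. Each $b^{(\alpha)}$ is a vertex of the simplex $\conv\{q_i^U\}$, so the part of $P(W)$ lying over it is a face of $P(W)$. That face is the vertical segment between the smallest and largest heights $h_i=z_i/p_i$ in that class. A generic $z$ gives the coordinates of a class pairwise distinct heights. As soon as a class has three or more coordinates, the intermediate lifted points are not vertices, and those coordinates are missing from $E(W)$.

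For example, take $U=\spann\{(1,1,1)\}\subset\R^3$ (so $k=g=1$) and $z_1<z_2<z_3$. Then $C(W)=W\cap\{x_1\ge0,\ x_3\ge0\}$ and $C(W)\cap\{x_2=0\}=\{0\}$, so $E(W)=\{1,3\}\neq[3]$. Your other option, that every coordinate already defines a facet of $C(U)$, is true in this case. But it does not produce the strictly larger $W$ the statement requires. The activation formula later uses that extension in every active mode with $U_j(S)\subsetneq H_j(S)$.

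The missing step is a deliberate, non-generic choice of $z$: at most two height levels per class, and not class-constant. Since $m>k=g$, some class contains two coordinates $i_0\neq i_1$. Put $h_{i_0}=1$ and $h_i=0$ for $i\neq i_0$, i.e.\ $z=p_{i_0}e_{i_0}$. These heights are not class-constant, hence not affine, so $z\notin U$. The distinct lifted points $(b^{(1)},1),(b^{(1)},0),(b^{(2)},0),\dots,(b^{(k)},0)$ are affinely independent in the $k$-dimensional space $A\times\R$. They are therefore the vertices of a $k$-simplex, every $q_i^W$ is a vertex, and $E(W)=[m]$. This is exactly the paper's construction. Your ``two-point fibres'' intuition is the right one, but it has to be imposed by the choice of $z$ rather than obtained generically.
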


\begin{proof}
Put
\[
k=\dim U
\]
and consider the affine hyperplane
\[
A=\{b\in U^*:b(p)=1\}.
\]
For each coordinate define
\[
b_i=\frac{e_i^*|_U}{p_i}\in A.
\]
These points affinely span $A$. Let $\operatorname{Aff}(A)$ denote the vector space of real affine functions
on $A$. Indeed, the map
\[
\Phi:U\longrightarrow\operatorname{Aff}(A),
\qquad
\Phi(u)(b)=b(u),
\]
is injective: if $\Phi(u)=0$, then
\[
0=b_i(u)=\frac{u_i}{p_i}
\]
for every $i$, hence $u=0$.  Since both spaces have dimension $k$, $\Phi$
is an isomorphism.  Thus a nonzero affine function cannot vanish at every
$b_i$, which proves that the $b_i$ affinely span $A$.

Let
\[
b^{(1)},\ldots,b^{(g)}
\]
be the distinct points among the $b_i$.  Since they affinely span the
$(k-1)$-dimensional space $A$, we have $g\ge k$.

Assign heights $h_i$ and set
\[
z_i=p_i h_i.
\]
If $z\notin U$, then
\[
W=U\oplus\R z.
\]
Under the identification
\[
\{f\in W^*:f(p)=1\}\cong A\times\R,
\]
the normalized coordinate functional $q_i^W$ corresponds to the lifted
point
\[
(b_i,h_i).
\]

If $g>k$, choose an inner product on $A$ and set
\[
h_\alpha^0=\|b^{(\alpha)}\|^2.
\]
For each $\alpha$, the affine function
\[
L_\alpha(x)
=
2\langle b^{(\alpha)},x\rangle
-
\|b^{(\alpha)}\|^2
\]
satisfies
\[
\|b^{(\beta)}\|^2-L_\alpha(b^{(\beta)})
=
\|b^{(\beta)}-b^{(\alpha)}\|^2>0,
\qquad
\beta\neq\alpha.
\]
Thus every lifted point $(b^{(\alpha)},h_\alpha^0)$ is strictly exposed
from below.  If these heights are already non-affine in the base points,
we are done.  Otherwise, choose a class-height perturbation
$\delta=(\delta_\alpha)$ outside the space of affine height vectors and put
\[
h_\alpha=h_\alpha^0+\varepsilon\delta_\alpha
\]
for sufficiently small $\varepsilon>0$.  The strict exposure inequalities
persist, while the perturbed heights are no longer affine.

If $g=k$, then the distinct base points are affinely independent.  Since
$U\subsetneq H$, we have $m>k$, so some base point occurs for at least two
coordinates; say
\[
b_{i_0}=b_{i_1}.
\]
Set
\[
h_{i_0}=1,
\qquad
h_i=0\quad(i\neq i_0).
\]
The distinct lifted points are then
\[
(b^{(1)},1),\quad
(b^{(1)},0),\quad
(b^{(2)},0),\ldots,(b^{(k)},0),
\]
after relabeling.  Their difference vectors are linearly independent, so
they are the vertices of a $k$-simplex.  Again the heights are not affine.

In either case, $z\notin U$ and every normalized coordinate functional is
a vertex of $P(W)$.  Proposition~\ref{prop:normalized-dual-polytope}
therefore gives
\[
E(W)=[m].
\]
\end{proof}

\begin{proposition}[Exact mode-cost trichotomy]
\label{prop:kappa-trichotomy}
For every finite set $F$ of unordered pairs,
\[
\kappa(F)=
\begin{cases}
+\infty,
&
\text{if some }\{r,s\}\in F
\text{ has disjoint ordinary supports},\\[4pt]
0,
&
\text{if }
\mathcal A_U(a_r)\cap\mathcal A_U(a_s)\neq\varnothing
\text{ for every }\{r,s\}\in F,\\[4pt]
1,
&
\text{otherwise}.
\end{cases}
\]
Here $\kappa(\varnothing)=0$.
\end{proposition}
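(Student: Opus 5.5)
The plan is to separate the three cases and handle each with a result already proved in this section. The case $F=\varnothing$ is the convention $\kappa(\varnothing)=0$. For $F\neq\varnothing$, the $+\infty$ case is exactly Lemma~\ref{lem:kappa-finite}. If some pair $\{r,s\}\in F$ has disjoint ordinary supports, no admissible intermediate $W$ can make its signatures intersect, so $\kappa(F)=+\infty$. Conversely, if all pairs have intersecting supports, then $\kappa(F)$ is finite, and it remains to show that it is $0$ or $1$.

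The value $0$ occurs exactly when $W=U$ itself works. Note that $\dim W-\dim U=0$ forces $W=U$, since $U\subseteq W$. The choice $W=U$ is admissible: either $U$ is spanned by the nonnegative vectors $a_r$, or one invokes Lemma~\ref{lem:auto-admissible}. Hence $\kappa(F)=0$ if and only if $\mathcal A_U(a_r)\cap\mathcal A_U(a_s)\neq\varnothing$ for every $\{r,s\}\in F$, which is the stated middle condition.

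For the remaining case, all supports intersect but some pair has disjoint $U$-signatures, so $\kappa(F)\ge1$. I will produce a $W$ of cost exactly $1$. First, $U\subsetneq H$ must hold. If $U=H$ after deleting identically-zero coordinates, then $C(U)$ is the full orthant, every coordinate defines a facet, and signatures equal supports; intersecting supports would then give cost $0$, a contradiction. So Proposition~\ref{prop:saturation} applies and gives $W=U+\R z$ with $\dim W=\dim U+1$ and $E(W)=[m]$. This $W$ is admissible by Lemma~\ref{lem:auto-admissible}, and it lies between $U$ and $H$. Proposition~\ref{prop:normalized-dual-polytope}(iv) then yields, for every pair in $F$,
\[
\mathcal A_W(a_r)\cap\mathcal A_W(a_s)\neq\varnothing
\iff
[m]\cap\supp(a_r)\cap\supp(a_s)\neq\varnothing ,
\]
and the right-hand side holds by assumption. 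Hence $\kappa(F)\le1$, so $\kappa(F)=1$.

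The main obstacle is bookkeeping rather than substance: I need to check that the reduction \emph{delete coordinates vanishing on the support hull} is compatible with the definitions of signatures and of $\kappa$. Deleted coordinates vanish on every intermediate $W$, so by Lemma~\ref{lem:intrinsic-cone}(ii) they never define facets and do not affect any signature. I also need to confirm that $a_r\in C(W)$ is nonzero, so that part (iv) applies; this holds because the $a_r$ are nonzero nonnegative vectors lying in $U\subseteq W$. The substantive geometric work, namely exposing all normalized coordinate functionals as vertices with a single extra dimension, is already done in Proposition~\ref{prop:saturation}.
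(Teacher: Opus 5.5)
Your proposal is correct and follows the paper's proof essentially step for step. It uses Lemma~\ref{lem:kappa-finite} for the $+\infty$ case and $W=U$ for the zero case; in the remaining case it argues that $U=H$ would force signatures to equal supports, and then applies Proposition~\ref{prop:saturation} and Proposition~\ref{prop:normalized-dual-polytope}(iv). Your extra bookkeeping (deleted coordinates never represent facets, admissibility via Lemma~\ref{lem:auto-admissible}, and each $a_r$ being a nonzero element of $C(W)$) only makes explicit what the paper leaves implicit.
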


\begin{proof}
If some required pair has disjoint ordinary supports, Lemma~\ref{lem:kappa-finite}
gives $\kappa(F)=+\infty$.

If all required signatures already intersect in $U$, then $W=U$ is feasible
with cost zero, so $\kappa(F)=0$.

It remains to consider the case in which every required pair has nonempty
ordinary support intersection but at least one pair has disjoint signatures
in $U$.  Then cost zero is impossible because a zero-cost space must equal
$U$.  Moreover $U\neq H$: if $U=H$, the signatures in $U$ are simply ordinary
supports, contradicting the assumption that some required pair has
disjoint signatures.  Hence Proposition~\ref{prop:saturation} provides a
one-dimensional extension with $E(W)=[m]$.  Every pair with intersecting
ordinary supports then has intersecting signatures by
Proposition~\ref{prop:normalized-dual-polytope}, so $\kappa(F)=1$.
\end{proof}

Since the only finite values are $0$ and $1$, the cost for a collection of
pairs is determined by the worst pair:
\begin{equation}
\kappa_j(F)
=
\max_{e\in F}\kappa_j(e),
\qquad
F\neq\varnothing,
\label{eq:kappa-max}
\end{equation}
where
\[
\kappa_j(e)
=
\kappa_j(\{e\})
\in\{0,1,+\infty\}.
\]

\subsection{The Activation Formula}\label{subsec:activation}

The tree formula and the trichotomy together turn the definition of
$\tau(S)$ into a finite graph problem.  For
$A\subseteq[d]$, interpret the modes in $A$ as \emph{activated}: in an
activated mode we allow the one-dimensional extension supplied by
Proposition~\ref{prop:saturation}.

Define $G_A(S)$ to be the graph on $S$ in which
$e=\{r,s\}$ is an edge whenever
\[
\#\Bigl\{
j\in[d]:
\kappa_j(e)=+\infty
\ \text{or}\
\bigl(\kappa_j(e)=1\text{ and }j\notin A\bigr)
\Bigr\}
\le1.
\]
Thus a pair is a bridge precisely when, after activation of the modes in
$A$, there is at most one mode in which its connectivity remains
unresolved.

\begin{theorem}[Activation formula]\label{thm:tau-activation}
Let $S\subseteq[R]$ with $|S|\ge2$.
\begin{enumerate}[(i)]
\item
\[
\tau(S)
=
\min\bigl\{
|A|:A\subseteq[d],\ G_A(S)\text{ is connected}
\bigr\},
\]
with the minimum over an empty family interpreted as $+\infty$.
\item
\[
G_\varnothing(S)=\Gamma_0(S),
\]
and hence
\[
\tau(S)=0
\quad\Longleftrightarrow\quad
\Gamma_0(S)\text{ is connected}.
\]
\item $\tau(S)<\infty$ if and only if $G_{[d]}(S)$ is connected.  In
particular,
\[
\tau(S)
\le
\#\{j:U_j(S)\neq H_j(S)\}
\le d
\]
whenever $\tau(S)<\infty$.
\end{enumerate}
\end{theorem}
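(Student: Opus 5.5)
The plan is to combine the tree formula (Proposition~\ref{prop:tau-tree}) with the max rule \eqref{eq:kappa-max} and the trichotomy of Proposition~\ref{prop:kappa-trichotomy}.

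\emph{Upper bound $\tau(S)\le\min|A|$.} Let $A$ be a set of modes with $G_A(S)$ connected, and choose a spanning tree $T$ of $G_A(S)$. Each tree edge $e$ has at most one mode $j$ with $\kappa_j(e)=+\infty$, or with $\kappa_j(e)=1$ and $j\notin A$. Define the exemption $\varepsilon(e)$ to be that mode if it exists, and an arbitrary mode otherwise. Then every $e\in F_j(T,\varepsilon)$ satisfies $\kappa_j(e)\in\{0,1\}$, and the value $1$ can occur only when $j\in A$. By \eqref{eq:kappa-max}, $\kappa_j(F_j)\le\mathbb 1[j\in A]$, so the tree formula gives $\tau(S)\le|A|$.

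\emph{Lower bound.} Take $(T,\varepsilon)$ attaining a finite minimum in \eqref{eq:tau-tree}, and set
\[
A=\{j:\kappa_j(F_j(T,\varepsilon))=1\}.
\]
Since each finite $\kappa_j(F_j)$ lies in $\{0,1\}$, the minimum equals $|A|$. Fix a tree edge $e$ and a mode $j\neq\varepsilon(e)$. Then $e\in F_j$, so $\kappa_j(e)\le\kappa_j(F_j)$ is finite, and it equals $1$ only if $j\in A$. Hence $e$ is unresolved in at most the single mode $\varepsilon(e)$, so $e$ is an edge of $G_A(S)$. Therefore $T\subseteq G_A(S)$, the graph $G_A(S)$ is connected, and $\min|A|\le\tau(S)$. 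The two directions together also show that $\tau(S)=+\infty$ exactly when no $G_A(S)$ is connected.

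\emph{Part (ii).} We have $\kappa_j(e)=0$ exactly when the signatures intersect in $U_j(S)$; otherwise $\kappa_j(e)\in\{1,+\infty\}$. So with $A=\varnothing$, the edge condition says that $r,s$ are facet-opposite in at most one mode with respect to $(U_j(S))_j$. This is precisely the definition of $\Gamma_0(S)$. The equivalence $\tau(S)=0\iff\Gamma_0(S)$ connected then follows from (i), since $|A|=0$ forces $A=\varnothing$.

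\emph{Part (iii).} Edge sets of $G_A(S)$ are monotone in $A$, so some $G_A(S)$ is connected if and only if $G_{[d]}(S)$ is. For the bound, suppose $U_j(S)=H_j(S)$. Then signatures in $U_j(S)$ coincide with ordinary supports, as in the proof of Lemma~\ref{lem:kappa-finite}, so $\kappa_j(e)\ne1$ for every pair $e$. Activating such a mode therefore changes nothing, and $G_{A'}(S)=G_{[d]}(S)$ for $A'=\{j:U_j(S)\neq H_j(S)\}$. This gives $\tau(S)\le|A'|\le d$.

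The only delicate step is the lower bound. One must check that finiteness of $\kappa_j(F_j)$ makes each individual $\kappa_j(e)$ finite, which follows from the max formula \eqref{eq:kappa-max}. One must also handle the case $F_j=\varnothing$, where $\kappa_j=0$, which is harmless. Everything else is bookkeeping.
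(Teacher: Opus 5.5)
Your argument is correct, but for part (i) it follows a different route from the paper.

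The paper does not use the tree formula here; it works directly with the subspace definition of $\tau(S)$. For the upper bound it builds an explicit tuple: $W_j=U_j(S)$ for $j\notin A$, and a one-dimensional saturated extension (Proposition~\ref{prop:saturation}) for $j\in A$. It then checks that the bridge graph of this tuple is exactly $G_A(S)$. For the lower bound it starts from an arbitrary feasible tuple and sets $A=\{j:W_j\neq U_j(S)\}$. It then replaces each enlarged $W_j$ by a saturated extension and argues that no bridge edge is lost, so the new bridge graph is $G_A(S)$ and is still connected.

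You instead take Proposition~\ref{prop:tau-tree}, the trichotomy of Proposition~\ref{prop:kappa-trichotomy} and the max rule \eqref{eq:kappa-max} as given. Everything then reduces to bookkeeping with a spanning tree and an exemption map. A spanning tree of $G_A(S)$ yields exemptions with $\kappa_j(F_j)\le 1$ for $j\in A$ and $\kappa_j(F_j)=0$ for $j\notin A$. Conversely, an optimal $(T,\varepsilon)$ yields $A=\{j:\kappa_j(F_j(T,\varepsilon))=1\}$ with $T\subseteq G_A(S)$.

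Your route is more modular, treats the infinite case uniformly, and never has to re-examine facet signatures in a modified subspace. The paper's route is more constructive: it exhibits a concrete optimal tuple. It also shows that one may always take saturated one-dimensional extensions in the activated modes and $U_j(S)$ elsewhere. Both routes ultimately rest on Proposition~\ref{prop:saturation}; yours uses it through the trichotomy. Your treatment of parts (ii) and (iii) is essentially the paper's.
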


\begin{proof}
\emph{Part (i).}
Suppose $G_A(S)$ is connected.  For every inactive mode $j\notin A$, take
\[
W_j=U_j(S).
\]
For every active mode $j\in A$ with $U_j(S)\subsetneq H_j(S)$, choose a
one-dimensional saturated extension from
Proposition~\ref{prop:saturation}; if $U_j(S)=H_j(S)$, retain
$W_j=U_j(S)$.  The resulting tuple has cost at most $|A|$.

In an inactive mode, a pair with $\kappa_j(e)=0$ has intersecting
signatures, while a pair with $\kappa_j(e)\ge1$ is facet-opposite.  In an
active mode, saturation resolves every pair having nonempty ordinary
support intersection, leaving only pairs with
$\kappa_j(e)=+\infty$.  Therefore the bridge graph of the constructed tuple
is exactly $G_A(S)$ and is connected.  Hence
\[
\tau(S)\le |A|.
\]
Minimizing over connected $G_A(S)$ gives
\[
\tau(S)
\le
\min\{|A|:G_A(S)\text{ connected}\}.
\]

Conversely, let $(W_j)_j$ be a feasible tuple and define
\[
A=\{j:W_j\neq U_j(S)\}.
\]
Every active mode contributes at least one dimension, so
\[
|A|
\le
\sum_j\bigl(\dim W_j-d_j(S)\bigr).
\]
Replace each active space by a one-dimensional saturated extension of
$U_j(S)$.  This cannot destroy any bridge edge: if a pair had intersecting
signatures in the original space, its ordinary supports intersect, and
saturation preserves that intersection; if its ordinary supports are
disjoint, no admissible extension can resolve the pair.  Hence the resulting
bridge graph is $G_A(S)$ and remains connected.  Therefore
\[
\min\{|A|:G_A(S)\text{ connected}\}
\le
\sum_j\bigl(\dim W_j-d_j(S)\bigr).
\]
Minimizing over feasible tuples proves (i).

\emph{Part (ii).}
By the trichotomy,
\[
\kappa_j(e)=0
\]
holds exactly when the signatures of $a_r^{(j)}$ and $a_s^{(j)}$ intersect
in the minimal space $U_j(S)$.  Consequently an edge of
$G_\varnothing(S)$ is precisely a pair that is facet-opposite in at most
one mode, which is the definition of an edge of $\Gamma_0(S)$.

\emph{Part (iii).}
When $A=[d]$, the cost-one obstruction is always activated, so an edge
fails to occur only if
\[
\kappa_j(e)=+\infty
\]
in at least two modes.  By Lemma~\ref{lem:kappa-finite}, this is exactly the
case in which the ordinary supports are disjoint in at least two modes.
Thus $G_{[d]}(S)$ is the graph obtained by requiring ordinary support
intersections in all but at most one mode.

If $\tau(S)<\infty$, then some $G_A(S)$ is connected, hence so is
$G_{[d]}(S)$ because the graphs are monotone in $A$.  Conversely, suppose
$G_{[d]}(S)$ is connected and put
\[
A^*=\{j:U_j(S)\neq H_j(S)\}.
\]
If $U_j(S)=H_j(S)$, then the signatures in mode $j$ equal ordinary supports,
so no pair has $\kappa_j(e)=1$ in that mode.  Therefore
\[
G_{A^*}(S)=G_{[d]}(S),
\]
which is connected.  Part (i) gives
\[
\tau(S)\le |A^*|
=
\#\{j:U_j(S)\neq H_j(S)\}
\le d.
\]
\end{proof}

\begin{remark}[Exact computation]
\label{rem:effective}
Theorem~\ref{thm:tau-activation} gives an exact finite procedure for
computing $\tau(S)$.  First, for each mode, ordinary support intersections
determine which pair costs are $+\infty$.  For a support-intersecting pair,
the distinction between costs $0$ and $1$ is determined by whether the
corresponding normalized coordinate functionals are vertices of
$P(U_j(S))$, by Proposition~\ref{prop:normalized-dual-polytope}.  After this
preprocessing, $\tau(S)$ is obtained by checking activation sets
$A\subseteq[d]$ in increasing order of cardinality until
$G_A(S)$ becomes connected.

For rational input data, the vertex tests can be formulated as linear
programming feasibility problems.  Thus, for fixed tensor order $d$, the
postprocessing after the vertex tests consists of at most $2^d$ graph
connectivity problems and is polynomial in $|S|$.  The full identifiability
certificate still quantifies over all subsets $S\subseteq[R]$, so the
criterion as a whole need not be polynomial-time in $R$.
\end{remark}

The resulting computation is summarized in
Algorithm~\ref{alg:compute-tau}.  The algorithm first determines the
facet-coordinate sets for the minimal spaces $U_j(S)$, then computes the
pair costs $\kappa_j(e)$, and finally searches over activation sets.

\begin{algorithm}[htbp]
\caption{Exact computation of $\tau(S)$}
\label{alg:compute-tau}
\begin{algorithmic}[1]
\Require $S\subseteq[R]$, $|S|\ge2$, and the factors $a_r^{(j)}$.
\Ensure $\tau(S)$.

\For{$j\in[d]$}
\State $p^{(j)}\gets\sum_{r\in S}a_r^{(j)}$ and
$q_i^{(j)}\gets(e_i^*|_{U_j(S)})/p_i^{(j)}$ for $i\in N_j(S)$.
\State Group equal $q_i^{(j)}$'s and determine, by LP vertex tests,
\Statex \hspace{\algorithmicindent}
$E_j\gets
\{i\in N_j(S):
q_i^{(j)}
\in\operatorname{vert}\conv\{q_\ell^{(j)}:\ell\in N_j(S)\}\}$.
\ForAll{$e=\{r,s\}\in\binom{S}{2}$}
\State $I_j(e)\gets
\supp(a_r^{(j)})\cap\supp(a_s^{(j)})$.
\State
$\displaystyle
\kappa_j(e)\gets
\begin{cases}
	+\infty,&I_j(e)=\varnothing,\\
	0,&I_j(e)\cap E_j\ne\varnothing,\\
	1,&\text{otherwise}.
\end{cases}$
\EndFor
\EndFor

\ForAll{$A\subseteq[d]$, in nondecreasing order of $|A|$}
\State $G_A(S)\gets
\bigl(S,\{e\in\binom{S}{2}:b_A(e)\le1\}\bigr)$, where
\Statex \hspace{\algorithmicindent}
$b_A(e)=
\#\{j:\kappa_j(e)=+\infty
\text{ or }(\kappa_j(e)=1\text{ and }j\notin A)\}$.
\If{$G_A(S)$ is connected}
\State \Return $|A|$.
\EndIf
\EndFor
\State \Return $+\infty$.
\end{algorithmic}
\end{algorithm}



\section{Proof of the Main Criterion}\label{sec:proofs}

The proof of Theorem~\ref{thm:main} combines the two mechanisms developed
above.  The Lovitz--Petrov splitting theorem gives a constraint on the
linear dimension of an irreducible exchange, while the support geometry
forces the factor spaces of that exchange to pay the additional scattering
cost $\tau(S)$.  The two effects combine in the positive splitting
inequality below.  The minimality and uniqueness statements then follow by
applying this inequality to an irreducible block of a competing
decomposition.

\subsection{The Positive Splitting Inequality}\label{subsec:splitting}

Fix $S\subseteq[R]$ with $|S|\ge2$ and consider an irreducible positive
exchange
\begin{equation}
\sum_{r\in S}c_rP_r
=
\sum_{s\in J}Q_s,
\qquad
c_r>0,
\label{eq:exchange}
\end{equation}
where the left-hand side consists of positive multiples of the prescribed
terms indexed by $S$ and the right-hand side consists of nonzero
nonnegative rank-one tensors
\[
Q_s=b_s^{(1)}\otimes\cdots\otimes b_s^{(d)}
\]
with nonnegative factors.

Put
\[
U=\spann\{P_r:r\in S\},
\qquad
V=\spann\{Q_s:s\in J\},
\qquad
h=\dim(U\cap V),
\]
and, for each mode,
\[
V_j=\spann\{b_s^{(j)}:s\in J\},
\qquad
W_j=U_j(S)+V_j.
\]
By the support-confinement lemma, every $b_s^{(j)}$ belongs to
$H_j(S)$.  Hence
\[
U_j(S)\subseteq W_j\subseteq H_j(S).
\]
Moreover, each $W_j$ is admissible because it is spanned by nonnegative
vectors.  Thus $(W_j)_j$ is an admissible tuple for $S$.

Finally, $h\ge1$: the common value of the two sides of
\eqref{eq:exchange} is a nonzero nonnegative tensor and therefore belongs
to both $U$ and $V$.

\begin{theorem}[Positive splitting inequality]
\label{thm:positive-splitting}
For every irreducible positive exchange \eqref{eq:exchange},
\begin{equation}
\beta(S)+\tau(S)
\le
\dim U+\dim V-2.
\label{eq:positive-splitting}
\end{equation}
In particular, if the two sides contain
$p=|S|$ and $q=|J|$ terms, respectively, then
\[
\beta(S)+\tau(S)\le p+q-2.
\]
\end{theorem}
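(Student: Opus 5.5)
The plan is to apply the Lovitz--Petrov splitting theorem (Theorem~\ref{thm:lp-splitting}) to the signed family of all terms in the exchange, using the admissible tuple $(W_j)_j$ with $W_j=U_j(S)+V_j$. The dimension budget that theorem produces will then be split into the $\beta(S)$ part and an excess part, and the excess will be bounded below by $\tau(S)$ via Lemma~\ref{lem:rect-split}.

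\emph{Step 1: the family is connected.} Consider the multiset
\[
E=\{c_rP_r:r\in S\}\cup\{Q_s:s\in J\}
\]
of nonzero product tensors. Suppose $E$ splits, say $\spann E=\spann F\oplus\spann(E\setminus F)$ with $F$ a nonempty proper submultiset. The relation $\sum_{r\in S}c_rP_r-\sum_{s\in J}Q_s=0$ has all coefficients equal to $\pm1$ on the elements of $E$. Its $F$-part and its $(E\setminus F)$-part therefore lie in complementary subspaces and sum to zero, so each vanishes separately. With $I'=\{r:c_rP_r\in F\}$ and $J'=\{s:Q_s\in F\}$, this gives $\sum_{r\in I'}c_rP_r=\sum_{s\in J'}Q_s$. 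That is a subexchange, and it is neither empty nor full because $F$ is nonempty and proper. This contradicts irreducibility, so $E$ is connected. Repeated tensors cause no trouble here, since the argument works at the level of multiset elements.

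\emph{Step 2: the Lovitz--Petrov bound.} The mode-$j$ factors occurring in $E$ are positive multiples of the $a_r^{(j)}$ together with the $b_s^{(j)}$, so they span exactly $W_j$. Because $E$ does not split, the contrapositive of Theorem~\ref{thm:lp-splitting} gives
\[
\sum_{j=1}^d(\dim W_j-1)<\dim\spann E=\dim(U+V)=\dim U+\dim V-h.
\]
Since $h\ge1$ and the quantities are integers, this yields $\sum_j(\dim W_j-1)\le\dim U+\dim V-2$. The left-hand side equals
\[
\beta(S)+\sum_j\bigl(\dim W_j-d_j(S)\bigr).
\]

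\emph{Step 3: the tuple $(W_j)_j$ is feasible.} This is the step that brings in positivity. We already know $(W_j)_j$ is admissible for $S$: by support confinement (Lemma~\ref{lem:support-confinement}) we have $U_j(S)\subseteq W_j\subseteq H_j(S)$, and each $W_j$ is spanned by nonnegative vectors. Apply Lemma~\ref{lem:rect-split} to the exchange $\sum_r c_rP_r=\sum_s Q_s$, with nonnegative factors chosen by absorbing $c_r$ into the first factor. Each $W_j$ contains all mode-$j$ factors of both sides, and the exchange is irreducible. Hence the bridge graph of $(a_r^{(j)})$ with respect to $(W_j)_j$ is connected. Positive rescaling does not change signatures (Lemma~\ref{lem:facet-map}), so this is the same bridge graph as for the $a_r^{(j)}$ themselves. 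Thus the tuple is feasible, and Definition~\ref{def:tau} gives $\tau(S)\le\sum_j(\dim W_j-d_j(S))$. Combining with Step~2 yields \eqref{eq:positive-splitting}. The final claim follows from $\dim U\le p$ and $\dim V\le q$.

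I expect the main obstacle to be Step~1, namely translating irreducibility into the non-splitting needed by Theorem~\ref{thm:lp-splitting}. The care is in making sure the splitting decomposition really separates the single relation into two relations with positive coefficients, so that they form a genuine subexchange. Remark~\ref{rem:no-one-sided} then guarantees both sides of that subexchange are nonempty, which keeps the argument consistent. Everything else is bookkeeping between $\beta$, $\tau$ and the dimensions of the $W_j$.
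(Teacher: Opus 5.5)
Your proposal is correct and follows the paper's proof essentially step for step. Irreducibility gives non-splitting of the family of all terms. Lemma~\ref{lem:rect-split} makes $(W_j)_j$ feasible, so $\tau(S)\le\sum_j(\dim W_j-d_j(S))$. The contrapositive of Theorem~\ref{thm:lp-splitting}, together with $h\ge1$, then closes the count. The differences are only cosmetic: you keep $+Q_s$ in $E$ and work with a $\pm1$ relation instead of negating the $Q_s$ so the elements sum to zero, and you do the dimension count before the feasibility step; neither matters, since splitting depends only on the spans of the elements.
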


\begin{proof}[Proof of Theorem~\ref{thm:positive-splitting}]

\emph{Step 1: Irreducibility implies connectedness.}

Absorb a minus sign into one factor of each $Q_s$, say the mode-$1$ factor,
and consider the signed multiset of nonzero product tensors
\[
E
=
\{c_rP_r:r\in S\}
\cup
\{-Q_s:s\in J\}.
\]
The sum of all elements of $E$ is zero.

We claim that $E$ is connected in the sense of
Definition~\ref{def:connected}.  Suppose instead that
\[
E=E_1\sqcup E_2,
\qquad
\spann E
=
\spann E_1\oplus\spann E_2,
\]
with both $E_1$ and $E_2$ nonempty.  Let $\sigma_k$ denote the sum of the
elements of $E_k$.  Then
\[
\sigma_1+\sigma_2=0,
\qquad
\sigma_k\in\spann E_k.
\]
Since the sum is direct,
\[
\sigma_1=\sigma_2=0.
\]
Let $I_k\subseteq S$ and $J_k\subseteq J$ be the indices of the terms of
$E_k$ originating from the two sides of the exchange.  Then
\[
\sum_{r\in I_k}c_rP_r
=
\sum_{s\in J_k}Q_s,
\qquad k=1,2.
\]
Neither $E_k$ can contain terms from only one side: if, for instance,
$J_1=\varnothing$, then
\[
\sum_{r\in I_1}c_rP_r=0,
\]
which is impossible because all terms are nonzero and nonnegative.
Thus
\[
I_1,J_1,I_2,J_2\neq\varnothing.
\]
Consequently $(I_1,J_1)$ is a nontrivial subexchange of
\eqref{eq:exchange}, contradicting irreducibility.  Hence $E$ is connected.

\medskip
\emph{Step 2: The exchange pays the scattering cost.}

The mode-$j$ factors of the left side of \eqref{eq:exchange} may be taken
to be $c_ra_r^{(1)}$ in mode $1$ and $a_r^{(j)}$ in the remaining modes.
Together with the factors $b_s^{(j)}$ on the right, they all belong to
$W_j$.  Positive rescaling does not change facet signatures.  If the
bridge graph of the prescribed factors with respect to $(W_j)_j$ were
disconnected, the rectangular splitting lemma
(Lemma~\ref{lem:rect-split}) would imply that the exchange is reducible.
Hence this bridge graph is connected, so $(W_j)_j$ is feasible for $S$.
By the definition of $\tau(S)$,
\[
\tau(S)
\le
\sum_{j=1}^d
\bigl(\dim W_j-d_j(S)\bigr).
\tag{6.1}\label{eq:tau-upper-exchange}
\]

\medskip
\emph{Step 3: Dimension counting.}

The span of the signed family $E$ is
\[
\spann E=U+V.
\]
In each mode, the factors appearing in $E$ span exactly $W_j$, so the
mode-$j$ rank of $E$ is
\[
r_j=\dim W_j.
\]

Since $E$ is connected, it does not split.  The contrapositive of the
Lovitz--Petrov splitting theorem therefore gives
\[
\dim\spann E
>
\sum_{j=1}^d(r_j-1)
=
\sum_{j=1}^d(\dim W_j-1).
\]
All quantities are integers, hence
\[
\sum_{j=1}^d(\dim W_j-1)
\le
\dim(U+V)-1.
\]
Using
\[
\dim(U+V)
=
\dim U+\dim V-h,
\]
we obtain
\[
\sum_{j=1}^d(\dim W_j-1)
\le
\dim U+\dim V-h-1.
\]
Since
\[
\beta(S)
=
\sum_{j=1}^d(d_j(S)-1),
\]
this can be rewritten as
\[
\beta(S)
+
\sum_{j=1}^d
\bigl(\dim W_j-d_j(S)\bigr)
\le
\dim U+\dim V-h-1.
\]
Combining this with \eqref{eq:tau-upper-exchange} and $h\ge1$ gives
\[
\beta(S)+\tau(S)
\le
\dim U+\dim V-h-1
\le
\dim U+\dim V-2,
\]
which proves \eqref{eq:positive-splitting}.  Since
\[
\dim U\le |S|=p,
\qquad
\dim V\le |J|=q,
\]
the final bound follows.
\end{proof}

\begin{remark}[Mechanism]
\label{rem:mechanism}
The positive splitting inequality combines two distinct obstructions to an
irreducible exchange.  The Lovitz--Petrov argument bounds the linear
dimension of a connected family of product tensors and produces the
dimension budget $\beta(S)$.  Nonnegativity provides a second obstruction:
support confinement restricts the competing factors to the support hulls,
and irreducibility forces the corresponding bridge graph to be connected.
The minimum dimension increment needed to achieve this connectivity is
exactly $\tau(S)$.  Finally, positivity implies that the two sides of the
exchange have a common nonzero tensor, so $h=\dim(U\cap V)\ge1$, producing
the additional unit in the final bound.
\end{remark}

\subsection{Minimality and Uniqueness}\label{subsec:main-thm}

\begin{proof}[Proof of Theorem~\ref{thm:main}]

\emph{Part (i): Minimality.}

Suppose, to the contrary, that
\[
\rankp(\T)<R.
\]
Let
\[
\T=\sum_{s=1}^{q}Q_s
\]
be a minimal nonnegative decomposition, where $q<R$.  Then
\[
\sum_{r=1}^{R}P_r
=
\sum_{s=1}^{q}Q_s
\]
is a positive exchange.  Decompose it into irreducible blocks
\[
(I_k,J_k),
\qquad
k=1,\dots,m,
\]
using Lemma~\ref{lem:exchange-decomposition}(i).

Every $J_k$ is nonempty by Remark~\ref{rem:no-one-sided}.  Since
\[
\sum_{k=1}^m|I_k|
=
R
>
q
=
\sum_{k=1}^m|J_k|,
\]
there exists a block for which
\[
p_k:=|I_k|>|J_k|=:q_k.
\]
In particular,
\[
p_k\ge2.
\]

Applying Theorem~\ref{thm:positive-splitting} to this irreducible block
gives
\[
\beta(I_k)+\tau(I_k)
\le
p_k+q_k-2
\le
2p_k-3.
\]
On the other hand, condition~\eqref{eq:M} applied to $S=I_k$ gives
\[
\beta(I_k)+\tau(I_k)
\ge
2p_k-2,
\]
a contradiction.  Hence
\[
\rankp(\T)=R.
\]
The prescribed decomposition is therefore minimal, and its terms are
linearly independent by Lemma~\ref{lem:minimal-independent}.

\medskip
\emph{Part (ii): Uniqueness.}

Assume condition~\eqref{eq:U}.  Since
\[
2|S|-1\ge2|S|-2,
\]
condition~\eqref{eq:M} also holds, so Part (i) implies
\[
\rankp(\T)=R.
\]

Let
\[
\T=\sum_{s=1}^{R}Q_s
\]
be any nonnegative decomposition of length $R$.  It is minimal, as is the
prescribed decomposition.  Hence Lemma~\ref{lem:exchange-decomposition}(ii)
implies that every irreducible block
\[
(I_k,J_k)
\]
in the exchange
\[
\sum_{r=1}^{R}P_r
=
\sum_{s=1}^{R}Q_s
\]
is balanced:
\[
|I_k|=|J_k|=:p_k.
\]

If some block had $p_k\ge2$, then Theorem~\ref{thm:positive-splitting}
would yield
\[
\beta(I_k)+\tau(I_k)
\le
2p_k-2,
\]
whereas condition~\eqref{eq:U} requires
\[
\beta(I_k)+\tau(I_k)
\ge
2p_k-1.
\]
This is impossible.  Hence every irreducible block is a singleton:
\[
P_r=Q_{s(r)}
\]
for a bijection $r\mapsto s(r)$.  The two decompositions therefore have
the same multiset of rank-one terms and are equivalent.
\end{proof}

\begin{remark}[The two thresholds]
\label{rem:thresholds}
The two parts of Theorem~\ref{thm:main} differ only in the possible size of
an irreducible competing block.  A shorter decomposition necessarily
produces an unbalanced block with $|J_k|\le |I_k|-1$, so the positive
splitting inequality yields the upper bound
\[
\beta(I_k)+\tau(I_k)\le2|I_k|-3.
\]
For two decompositions of the same minimal length, every block is balanced;
a nontrivial block then has
\[
\beta(I_k)+\tau(I_k)\le2|I_k|-2.
\]
This is why the minimality and uniqueness thresholds differ by exactly one.
\end{remark}

\section{Structural Consequences}\label{sec:structure}

The positive-scattering criterion is compatible with several natural
structural operations on a tensor decomposition.  We first consider
\emph{reshaping}, which changes the grouping of the tensor modes and can
increase the dimension budget.  We then consider \emph{appending modes},
which adds nonnegative factors and yields a monotonicity property for the
combined dimension--scattering criterion.

\subsection{Reshaping}\label{subsec:reshaping}

Fix, for each $S\subseteq[R]$, a partition
\[
G_1\sqcup\cdots\sqcup G_t=[d],
\qquad
G_\ell\neq\varnothing,
\]
where the partition may depend on $S$.  Group the factors within each block:
\[
\widehat a_r^{(\ell)}
=
\bigotimes_{j\in G_\ell}a_r^{(j)}
\in
\Rnn^{N_\ell}\setminus\{0\},
\qquad
N_\ell=\prod_{j\in G_\ell}n_j .
\]
Under the canonical isomorphism
\[
\bigotimes_{j=1}^d\R^{n_j}
\cong
\bigotimes_{\ell=1}^t\R^{N_\ell},
\]
the tensor therefore admits the $t$-mode nonnegative decomposition
\[
\T
=
\sum_{r=1}^R
\widehat a_r^{(1)}
\otimes\cdots\otimes
\widehat a_r^{(t)}.
\]
Because every grouped factor is nonzero and nonnegative, the theory of
Sections~\ref{sec:cones}--\ref{sec:tau} applies verbatim to the reshaped
tensor whenever $t\ge2$.  For the partition chosen for a given $S$, write
\[
\widehat\beta_S(\cdot),
\qquad
\widehat\tau_S(\cdot)
\]
for the corresponding dimension budget and scattering term.

The following corollary allows the partition used to certify a given subset
$S$ to be chosen independently of the partitions used for other subsets.

\begin{corollary}[Reshaped positive criterion]
\label{cor:reshaped}
Suppose that for every $S\subseteq[R]$ with $|S|\ge2$, there exists a
partition of the modes into $t\ge2$ nonempty groups such that
\[
\widehat\beta_S(S)+\widehat\tau_S(S)
\ge
2|S|-2.
\]
Then
\[
\rankp(\T)=R.
\]
If the partitions can moreover be chosen so that
\[
\widehat\beta_S(S)+\widehat\tau_S(S)
\ge
2|S|-1,
\]
then the decomposition \eqref{eq:decomposition} is unique among
nonnegative decompositions of length $R$.
\end{corollary}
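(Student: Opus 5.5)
The plan is to rerun the proof of Theorem~\ref{thm:main} almost verbatim, changing only which splitting inequality is invoked on each irreducible block. The point is that the block structure of a competing exchange does not depend on how the modes are grouped, so the grouping may be chosen after the offending block has been found.

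\emph{Step 1 (blocks are grouping-independent).} Suppose $\T=\sum_{s}Q_s$ is a competing nonnegative decomposition, with $Q_s=b_s^{(1)}\otimes\cdots\otimes b_s^{(d)}$ and nonnegative factors. Apply Lemma~\ref{lem:exchange-decomposition}(i) to the exchange $\sum_r P_r=\sum_s Q_s$ in the original $d$-mode space. This produces irreducible blocks $(I_k,J_k)$. Irreducibility and subexchanges are defined purely through equalities of sums of tensors, and these are preserved under the canonical isomorphism $\bigotimes_{j}\R^{n_j}\cong\bigotimes_\ell\R^{N_\ell}$. Hence each block remains an irreducible positive exchange after any regrouping. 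On the competing side, write the regrouped factors as $\widehat b_s^{(\ell)}=\bigotimes_{j\in G_\ell}b_s^{(j)}$. These are nonzero and nonnegative, so each $Q_s$ is still a nonzero nonnegative rank-one tensor in the $t$-mode format.

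\emph{Step 2 (regrouped splitting inequality).} Fix a block with $S=I_k$ and $|S|\ge2$. Choose the partition supplied by the hypothesis for this particular $S$. Theorem~\ref{thm:positive-splitting} was proved for an arbitrary order $\ge2$ and an arbitrary nonnegative decomposition. We therefore apply it to the $t$-mode decomposition of $\T$ with components $\widehat a_r^{(\ell)}$, and to the block exchange with $c_r=1$. This gives
\[
\widehat\beta_S(S)+\widehat\tau_S(S)\le |I_k|+|J_k|-2 .
\]
Here I must check that $\widehat\tau_S$ is computed with the regrouped support hulls, i.e.\ the unions of supports of $\widehat a_r^{(\ell)}$, $r\in S$, and that support confinement (Lemma~\ref{lem:support-confinement}) applies to the regrouped competing factors. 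Both hold because the $t$-mode format is itself an instance of \eqref{eq:decomposition}.

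\emph{Step 3 (conclude).} For minimality, suppose $\rankp(\T)<R$ and take a shorter competing decomposition. Some block then has $|J_k|<|I_k|$, which forces $|I_k|\ge2$. Step~2 gives the bound $\le 2|I_k|-3$, which contradicts the hypothesis $\ge 2|I_k|-2$. For uniqueness, take a competing decomposition of length $R$. Both decompositions are then minimal, so every block is balanced by Lemma~\ref{lem:exchange-decomposition}(ii). A block of size $p_k\ge2$ would give the bound $\le 2p_k-2$, which contradicts the hypothesis $\ge 2p_k-1$. So every block is a singleton and the two decompositions are equivalent.

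The only step needing care is Step~2. It relies on the fact that the splitting inequality requires no compatibility between the grouping and the other blocks. Since the partition is chosen after the block $S$ is identified, and Theorem~\ref{thm:positive-splitting} involves only the terms of that single block, no obstacle arises.
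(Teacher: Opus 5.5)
Your proposal is correct and follows the paper's proof essentially step for step. You decompose the competing exchange into irreducible blocks in the original format, note that equality of sums, irreducibility, and nonnegativity of the grouped rank-one terms all survive regrouping, and then apply Theorem~\ref{thm:positive-splitting} in the format chosen for the offending block $S$; this gives $\widehat\beta_S(S)+\widehat\tau_S(S)\le 2|S|-3$ for an unbalanced block, or $\le 2|S|-2$ for a balanced block of size at least two, which contradicts the hypotheses. The one step you leave implicit is how the uniqueness part gets minimality of both length-$R$ decompositions: the $2|S|-1$ hypothesis implies the $2|S|-2$ one, so the first part gives $\rankp(\T)=R$, and the paper handles it exactly this way.
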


\begin{proof}
Suppose first that $\rankp(\T)<R$.  By the block decomposition lemma,
there is an irreducible positive exchange
\[
\sum_{r\in S}c_rP_r
=
\sum_{s\in J}Q_s,
\qquad
c_r>0,
\qquad
|S|\ge2,
\qquad
|J|\le |S|-1.
\]
Group the modes according to the partition chosen for this set $S$.  The
equality of the two sums is unchanged by this regrouping, and every term
remains a nonzero nonnegative rank-one tensor in the reshaped tensor
format.  Moreover, irreducibility is a property of the exchange as an
equality of sums and is therefore unchanged by regrouping the modes.

Applying Theorem~\ref{thm:positive-splitting} in the grouped tensor format
gives
\[
\widehat\beta_S(S)+\widehat\tau_S(S)
\le
|S|+|J|-2
\le
2|S|-3,
\]
contradicting the assumed lower bound
\[
\widehat\beta_S(S)+\widehat\tau_S(S)\ge2|S|-2.
\]
Hence $\rankp(\T)=R$.

Now assume the stronger hypothesis and suppose that
\[
\T=\sum_{s=1}^RQ_s
\]
is a nonnegative decomposition of length $R$ that is not equivalent to
\eqref{eq:decomposition}.  By the first part, the prescribed decomposition
is minimal, and the competing decomposition is also minimal.  Hence their
exchange decomposes into irreducible balanced blocks.  Inequivalence implies
that at least one block $(S,J)$ satisfies
\[
|J|=|S|\ge2.
\]
Regroup the modes according to the partition chosen for this $S$.  Applying
Theorem~\ref{thm:positive-splitting} in the grouped format yields
\[
\widehat\beta_S(S)+\widehat\tau_S(S)
\le
|S|+|J|-2
=
2|S|-2,
\]
contradicting the assumed bound
\[
\widehat\beta_S(S)+\widehat\tau_S(S)\ge2|S|-1.
\]
Therefore every nonnegative length-$R$ decomposition is equivalent to
\eqref{eq:decomposition}.
\end{proof}

Thus the certificate need not be evaluated in only the original tensor
format: different subsets $S$ may use different groupings of the modes.
This is useful because grouping can increase the dimensions of the grouped
factor spans even when the individual mode spans are relatively small.

\begin{remark}[The one-block partition]
\label{rem:trivial-partition}
The restriction $t\ge2$ is natural but causes no loss in the criterion.
If all modes are grouped into a single block, then the grouped factors are
the rank-one tensors $P_r$ themselves.  The resulting one-mode budget is at
most $|S|-1$, while the bridge graph is complete because, with only one
mode, every pair is facet-opposite in at most one mode.  Thus the scattering
term is zero and
\[
\widehat\beta_S(S)+\widehat\tau_S(S)
\le
|S|-1
<
2|S|-2
\qquad (|S|\ge2).
\]
Hence the one-block grouping can never by itself certify either main
criterion.
\end{remark}

Reshaping can genuinely strengthen the dimension budget.  A standard
example comes from the Khatri--Rao product, namely the columnwise product
of two matrices \citep[pp.~169--170]{KhatriRao1968}.  Recall that the
Kruskal rank of a matrix is the largest integer $k$ such that every set of
$k$ columns is linearly independent \citep[p.~102]{Kruskal1977}.  If two
factor matrices with $R$ columns have Kruskal ranks whose sum is at least
$R+1$, then their Khatri--Rao product has full column rank
\citep[Lemma~1]{SidiropoulosBro2000}; this may occur even when neither
factor matrix has full column rank.  Thus grouping modes can create linear
independence that is not visible in the individual modes.

We do not establish a general monotonicity relation between the scattering
term before and after reshaping.  In particular, the grouped scattering
term may interact with the changed factor geometry in ways that are not
captured by the dimension budget alone.

\subsection{Appending Modes}\label{subsec:appending}

The effect of adding new nonnegative modes is different.  Here the original
modes are retained, while additional factor vectors are appended to each
term:
\[
\widetilde P_r
=
a_r^{(1)}\otimes\cdots\otimes a_r^{(d)}
\otimes
a_r^{(d+1)}
\otimes\cdots\otimes
a_r^{(d')},
\]
where
\[
a_r^{(j)}\in\Rnn^{n_j}\setminus\{0\},
\qquad
j=d+1,\dots,d'.
\]
Write $\beta_M(S)$ and $\tau_M(S)$ for the budget and scattering term
computed using only the modes in a nonempty set $M\subseteq[d']$.

The dimension budget is additive across disjoint sets of modes.  More
importantly, the scattering term is superadditive.

\begin{proposition}[Superadditivity of scattering]
\label{prop:superadditivity}
Let $I,J\subseteq[d']$ be disjoint and nonempty.  Then, for every
$S\subseteq[R]$ with $|S|\ge2$,
\[
\beta_{I\cup J}(S)
=
\beta_I(S)+\beta_J(S)
\]
and
\[
\tau_{I\cup J}(S)
\ge
\tau_I(S)+\tau_J(S).
\]
Consequently,
\[
\beta_{I\cup J}(S)+\tau_{I\cup J}(S)
\ge
\bigl(\beta_I(S)+\tau_I(S)\bigr)
+
\bigl(\beta_J(S)+\tau_J(S)\bigr).
\]
\end{proposition}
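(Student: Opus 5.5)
The identity for the budget is immediate from the definition: $\beta_M(S)=\sum_{j\in M}(d_j(S)-1)$ is a sum of mode-wise terms, and the spaces $U_j(S)$ depend only on the mode-$j$ factors. So $\beta_{I\cup J}(S)=\beta_I(S)+\beta_J(S)$ for disjoint $I,J$. The final displayed inequality then follows by adding this identity to the superadditivity of $\tau$. The substance is therefore the inequality $\tau_{I\cup J}(S)\ge\tau_I(S)+\tau_J(S)$, which I would prove directly from Definition~\ref{def:tau} by restricting a feasible tuple.

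If $\tau_{I\cup J}(S)=+\infty$ there is nothing to prove, so assume it is finite. Take a feasible tuple $(W_j)_{j\in I\cup J}$ attaining the minimum, as permitted by Remark~\ref{rem:tau-well-defined}. Admissibility for $S$ is a mode-by-mode condition, $U_j(S)\subseteq W_j\subseteq H_j(S)$ with $W_j$ admissible. Hence the restricted tuples $(W_j)_{j\in I}$ and $(W_j)_{j\in J}$ are admissible for $S$ in the $I$-format and the $J$-format respectively. Moreover, the facet signatures $\mathcal A_{W_j}(a_r^{(j)})$ in each mode are unchanged by the restriction, since they depend only on $W_j$ and the mode-$j$ factor.

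The key step is feasibility of the restrictions. Let $\{r,s\}$ be an edge of the bridge graph $\Gamma_{I\cup J}$. Then $r$ and $s$ are facet-opposite in at most one mode of $I\cup J$, so a fortiori in at most one mode of $I$. Hence $\{r,s\}$ is also an edge of the bridge graph $\Gamma_I$ of $(W_j)_{j\in I}$. Thus $\Gamma_{I\cup J}\subseteq\Gamma_I$ as graphs on the common vertex set $S$. Since $\Gamma_{I\cup J}$ is connected, so is $\Gamma_I$, and likewise $\Gamma_J$. If $I$ is a single mode, $\Gamma_I$ is complete, which is consistent with this argument. Both restricted tuples are therefore feasible. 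Since the cost is additive over modes,
\[
\tau_I(S)+\tau_J(S)\le\sum_{j\in I}(\dim W_j-d_j(S))+\sum_{j\in J}(\dim W_j-d_j(S))=\tau_{I\cup J}(S).
\]

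I expect no serious obstacle. The only point needing care is the direction of the graph monotonicity: passing to fewer modes can only \emph{add} bridge edges, because the condition ``facet-opposite in at most one mode'' becomes weaker. As a cross-check, one could also argue via Theorem~\ref{thm:tau-activation}. An activation set $A\subseteq I\cup J$ with $G_A$ connected yields sets $A\cap I$ and $A\cap J$ with $G_{A\cap I}$ and $G_{A\cap J}$ connected in the restricted formats, by the same edge-counting argument, and $|A|=|A\cap I|+|A\cap J|$.
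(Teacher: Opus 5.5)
Your proposal is correct and follows the paper's proof essentially verbatim: both restrict an optimal feasible tuple for $I\cup J$ to the modes in $I$ and in $J$, observe that dropping modes can only add bridge edges (so $\Gamma_{I\cup J}\subseteq\Gamma_I,\Gamma_J$ and connectivity is preserved), and then add the mode-wise costs. Your alternative cross-check via the activation formula, splitting $A$ into $A\cap I$ and $A\cap J$, is also valid, though the paper does not use it.
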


\begin{proof}
The identity for $\beta$ follows immediately from its definition.

For the scattering term, suppose first that
\[
\tau_{I\cup J}(S)<\infty
\]
and let $(W_j)_{j\in I\cup J}$ be a feasible tuple attaining
$\tau_{I\cup J}(S)$.  Its bridge graph
$\Gamma_{I\cup J}$ is connected.

Restrict this tuple to the modes in $I$.  Any edge of
$\Gamma_{I\cup J}$ is facet-opposite in at most one mode among
$I\cup J$, and hence also in at most one mode among $I$.  Therefore the
bridge graph $\Gamma_I$ of the restricted tuple contains
$\Gamma_{I\cup J}$ and is connected.  Thus the restricted tuple is feasible
for $I$, and
\[
\tau_I(S)
\le
\sum_{j\in I}
\bigl(\dim W_j-d_j(S)\bigr).
\]
The same argument for $J$ gives
\[
\tau_J(S)
\le
\sum_{j\in J}
\bigl(\dim W_j-d_j(S)\bigr).
\]
Adding,
\[
\tau_I(S)+\tau_J(S)
\le
\sum_{j\in I\cup J}
\bigl(\dim W_j-d_j(S)\bigr)
=
\tau_{I\cup J}(S).
\]
If $\tau_{I\cup J}(S)=+\infty$, the inequality is immediate.
\end{proof}

This yields the following monotonicity result.

\begin{corollary}[Appending nonnegative modes]
\label{cor:more-modes}
Suppose the decomposition \eqref{eq:decomposition} satisfies condition
\eqref{eq:M}, respectively condition \eqref{eq:U}.  Extend each term by
nonzero nonnegative factors in new modes $d+1,\dots,d'$ as above.  Then the
extended decomposition
\[
\widetilde\T
=
\sum_{r=1}^R\widetilde P_r
\]
satisfies the same condition.  Consequently,
\[
\rankp(\widetilde\T)=R
\]
under \eqref{eq:M}, while under \eqref{eq:U} the extended decomposition is
unique among nonnegative decompositions of length $R$.
\end{corollary}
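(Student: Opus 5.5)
The plan is to apply Proposition~\ref{prop:superadditivity} with $I=[d]$ and $J=\{d+1,\dots,d'\}$. This reduces the corollary to checking that the added modes contribute a nonnegative amount to $\beta+\tau$, and then invoking Theorem~\ref{thm:main} for the extended decomposition.

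Fix $S\subseteq[R]$ with $|S|\ge2$. We must first be clear about which quantities we compare. For $j\le d$ the spaces $U_j(S)$ and $H_j(S)$ in the extended decomposition are the same as in the original one. Hence the mode-restricted quantities $\beta_{[d]}(S)$ and $\tau_{[d]}(S)$, computed from the extended family, coincide with $\beta(S)$ and $\tau(S)$ of \eqref{eq:decomposition}. Here $\tau_M$ is understood as the minimum of Definition~\ref{def:tau} over tuples indexed by the modes in $M$, with the bridge graph formed from those modes only.

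One point needs care. When $J$ contains a single mode, the sub-format $\tau_J$ is not a tensor of order at least two. Still, the definition of the bridge graph makes sense with one mode: every pair is facet-opposite in at most one mode, so the graph is complete and $\tau_J(S)=0$. In general $\tau_J(S)\ge0$ as a minimum of nonnegative costs, or $+\infty$. Likewise $\beta_J(S)=\sum_{j\in J}(d_j(S)-1)\ge0$, since every $d_j(S)\ge1$. The proof of Proposition~\ref{prop:superadditivity} uses only restriction of feasible tuples and never uses the order, so it applies for any nonempty disjoint $I,J$.

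Combining these facts gives
\[
\beta_{[d']}(S)+\tau_{[d']}(S)\ge \beta(S)+\tau(S)+\beta_J(S)+\tau_J(S)\ge\beta(S)+\tau(S).
\]
Consequently condition \eqref{eq:M}, respectively \eqref{eq:U}, for the original decomposition implies the same condition for $\widetilde\T$. Then Theorem~\ref{thm:main}, applied to the order-$d'$ decomposition $\widetilde\T=\sum_r\widetilde P_r$, gives $\rankp(\widetilde\T)=R$, and under \eqref{eq:U} it gives uniqueness among length-$R$ nonnegative decompositions.

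The main obstacle is bookkeeping rather than mathematics: one must check that restricting to the modes in $[d]$ reproduces exactly the original $\tau(S)$. This holds because support hulls, factor spans and facet signatures are computed mode by mode, so the feasible tuples for $[d]$ are literally the same. One must also make sure superadditivity is legitimately invoked when $J$ is a single mode, which is handled by the observation above that the one-mode bridge graph is complete.
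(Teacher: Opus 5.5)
Your proposal is correct and follows the paper's proof: you apply Proposition~\ref{prop:superadditivity} with $I=[d]$ and $J=\{d+1,\dots,d'\}$, use $\beta_J(S)+\tau_J(S)\ge0$, and then invoke Theorem~\ref{thm:main} for the extended decomposition. Your extra bookkeeping is accurate. The paper leaves it implicit: $\beta_{[d]}$ and $\tau_{[d]}$ reproduce the original quantities, and a single appended mode gives a complete bridge graph, so $\tau_J(S)=0$ in that case.
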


\begin{proof}
Fix $S\subseteq[R]$ with $|S|\ge2$ and put
\[
I=[d],
\qquad
J=\{d+1,\dots,d'\}.
\]
By Proposition~\ref{prop:superadditivity},
\[
\beta_{[d']}(S)+\tau_{[d']}(S)
\ge
\bigl(\beta_{[d]}(S)+\tau_{[d]}(S)\bigr)
+
\bigl(\beta_J(S)+\tau_J(S)\bigr).
\]
Since
\[
\beta_J(S)\ge0,
\qquad
\tau_J(S)\ge0,
\]
we obtain
\[
\beta_{[d']}(S)+\tau_{[d']}(S)
\ge
\beta_{[d]}(S)+\tau_{[d]}(S).
\]
Thus whichever of the two thresholds is satisfied by the original
decomposition is also satisfied by the extended decomposition.  Applying
Theorem~\ref{thm:main} proves the result.
\end{proof}

The superadditivity result gives a precise sense in which additional
nonnegative modes can only increase the amount of structural information
available to the criterion.  Unlike reshaping, which changes the mode
structure and therefore requires a fresh geometric analysis, appending a
mode preserves the existing certificate and adds a nonnegative contribution
to the combined dimension--scattering budget.

\section{Examples}\label{sec:examples}

This section illustrates the two principal consequences of the
positive-scattering criterion.  We first give explicit deterministic
families showing that the new criterion can certify nonnegative uniqueness
where every dimension-based Lovitz--Petrov criterion fails, even after
reshaping.  We then examine the size of this gain on random sparse
decompositions and search empirically for nonnegative alternatives when the
criterion fails.

\subsection{Explicit Strictness Examples}\label{subsec:strictness}

We begin with two families for which the scattering term is infinite.
Thus the identifiability certificate is driven entirely by support geometry:
the dimension budget may fall strictly below the Lovitz--Petrov threshold,
but nonnegative decompositions cannot exchange mass across the disconnected
support pattern.

\begin{example}[$W$ tensor]\label{ex:W}
Let $n_1=n_2=n_3=2$ and
\[
\T
=
e_1\otimes e_1\otimes e_2
+
e_1\otimes e_2\otimes e_1
+
e_2\otimes e_1\otimes e_1.
\]
This is, up to normalization, the three-qubit $W$ state
\citep[Eq.~(2)]{DurVidalCirac2000}.
\end{example}

For every pair $S$ of terms, two factor vectors are parallel in one mode
and span a two-dimensional space in the other two modes.  Hence
\[
\beta(S)=0+1+1=2,
\]
so the pairwise Lovitz--Petrov uniqueness threshold
$2|S|-1=3$ fails.  For the full set,
\[
d_j([3])=2,\qquad j=1,2,3,
\]
and therefore
\[
\beta([3])=3<5.
\]

The scattering contribution is instead infinite.  The three terms are
coordinate tensors with labels
\[
(1,1,2),\qquad
(1,2,1),\qquad
(2,1,1).
\]
Any two labels differ in exactly two coordinates.  Hence every pair of
terms has disjoint supports in two modes.  By
Theorem~\ref{thm:tau-activation}(iii), the fully activated graph
$G_{[d]}(S)$ has no edges for every $S$ with $|S|\ge2$, so
\[
\tau(S)=+\infty.
\]
Consequently condition~\eqref{eq:U} holds for every nontrivial subset
$S$, and Theorem~\ref{thm:main} gives
\[
\rankp(\T)=3
\]
and uniqueness among nonnegative decompositions of length three.

The failure of all reshaped Lovitz--Petrov conditions can also be checked
explicitly.  For a pair, every partition of the three modes produces a
grouped dimension budget at most $2<3$.  For the full set, the three
two-block partitions produce grouped budgets at most $3<5$, while grouping
all three modes gives budget $2<5$.  Thus no reshaping recovers the
Lovitz--Petrov uniqueness threshold.

The distinction is genuinely caused by nonnegativity.  Over $\R$, the
decomposition is not unique.  Indeed,
\[
(e_1+te_2)^{\otimes3}-(e_1-te_2)^{\otimes3}
=
2t\,\T+2t^3e_2^{\otimes3},
\]
and hence, for every $t>0$,
\[
\T
=
\frac1{2t}(e_1+te_2)^{\otimes3}
-
\frac1{2t}(e_1-te_2)^{\otimes3}
-
t^2e_2^{\otimes3}.
\]
Thus the same tensor admits a continuum of real rank-three
decompositions, while the displayed nonnegative decomposition is unique.
The gain from the positive-scattering criterion is therefore not a
reformulation of unrestricted CP uniqueness.

For completeness, the real rank of $\T$ is also three.  Suppose otherwise
that
\[
\T=\sum_{i=1}^{2}u_i\otimes v_i\otimes w_i.
\]
The two mode-$1$ slices
\[
T_1=
\begin{pmatrix}
0&1\\
1&0
\end{pmatrix},
\qquad
T_2=
\begin{pmatrix}
1&0\\
0&0
\end{pmatrix}
\]
would lie in the two-dimensional span of
$v_1w_1^\top,v_2w_2^\top$.  Since $T_1,T_2$ are linearly independent, that
span would equal
\[
\{\alpha T_1+\beta T_2:\alpha,\beta\in\R\}.
\]
But
\[
\det(\alpha T_1+\beta T_2)=-\alpha^2,
\]
so the rank-one matrices in this pencil form only the one-dimensional
subspace spanned by $T_2$, a contradiction.

The next example shows that the same strictness phenomenon persists for
arbitrary decomposition length.

\begin{example}[Arbitrary nonnegative rank]\label{ex:all-rank}
Fix $R\ge3$, let
\[
n_1=R,\qquad n_2=R-1,\qquad n_3=2,
\]
and define
\[
\T_R
=
\sum_{r=1}^{R-1}e_r\otimes e_r\otimes e_1
+
e_R\otimes e_1\otimes e_2.
\]
\end{example}

Every term is a coordinate tensor.  Any two labels differ in at least two
coordinates: two diagonal labels $(r,r,1)$ and $(r',r',1)$ differ in
modes $1$ and $2$, while $(r,r,1)$ and $(R,1,2)$ differ in modes $1$ and
$3$ (and also in mode $2$ unless $r=1$).  Hence every pair has disjoint
supports in at least two modes.  It follows from
Theorem~\ref{thm:tau-activation}(iii) that
\[
\tau(S)=+\infty
\qquad\text{for every }S\subseteq[R],\quad |S|\ge2.
\]
Condition~\eqref{eq:U} therefore holds for every $S$, and
\[
\rankp(\T_R)=R
\]
with uniqueness among nonnegative decompositions of length $R$.

For the full set $S=[R]$,
\[
d_1(S)=R,\qquad
d_2(S)=R-1,\qquad
d_3(S)=2,
\]
so
\[
\beta([R])
=
(R-1)+(R-2)+1
=
2R-2
<
2R-1.
\]
Thus the unreshaped Lovitz--Petrov condition fails.  It also fails after
every reshaping.  The grouped dimension budgets for the five partitions of
the three modes are
\[
\begin{array}{c|c|c}
\text{partition} & \text{grouped ranks} & \text{budget}\\
\hline
\{1\},\{2\},\{3\}
&
R,\ R-1,\ 2
&
2R-2
\\
\{1,2\},\{3\}
&
R,\ 2
&
R
\\
\{1,3\},\{2\}
&
R,\ R-1
&
2R-3
\\
\{2,3\},\{1\}
&
R,\ R
&
2R-2
\\
\{1,2,3\}
&
R
&
R-1.
\end{array}
\]
The maximum is $2R-2$, still strictly below the uniqueness threshold
$2R-1$.

The example again separates real and nonnegative identifiability.
Flattening $\T_R$ along mode $1$ gives
\[
\sum_{r=1}^{R-1}
e_r(e_r\otimes e_1)^\top
+
e_R(e_1\otimes e_2)^\top,
\]
whose $R$ rows are distinct coordinate vectors.  Thus the real rank is at
least $R$, while the displayed decomposition has length $R$, so its real
rank is exactly $R$.

At the same time, the first $R-1$ terms can be replaced by an arbitrary
rank factorization of the identity.  If
\[
G=[g_1\ \cdots\ g_{R-1}]
\]
is invertible and $h_s$ denotes the $s$th column of $G^{-\top}$, then
\[
I_{R-1}
=
\sum_{s=1}^{R-1}g_sh_s^\top,
\]
and therefore, writing $\iota:\R^{R-1}\to\R^{R}$ for the embedding onto the
first $R-1$ coordinates,
\[
\T_R
=
\sum_{s=1}^{R-1}
\iota(g_s)\otimes h_s\otimes e_1
+
e_R\otimes e_1\otimes e_2
\]
is another real decomposition of length $R$.  These decompositions are
generically inequivalent.  By the classical characterization of matrices
whose inverse is also nonnegative \citep{BermanPlemmons1994}, such a
factorization is nonnegative only in the monomial case.  Thus the real
alternatives do not contradict the nonnegative uniqueness established
above.

Figure~\ref{fig:examples} summarizes the support obstruction common to the
two constructions.

\begin{figure}[htbp]
\centering
\begin{tikzpicture}[scale=0.98,>=Stealth]
\fill[orange!32,fill opacity=0.9] (0.460,0.330) -- (1.460,0.330) -- (1.460,1.330) -- (0.460,1.330) -- cycle;
\fill[orange!16,fill opacity=0.9] (0.460,1.330) -- (1.460,1.330) -- (1.920,1.660) -- (0.920,1.660) -- cycle;
\fill[orange!48,fill opacity=0.9] (1.460,0.330) -- (1.920,0.660) -- (1.920,1.660) -- (1.460,1.330) -- cycle;
\draw[orange!70!black,thin] (0.460,0.330) -- (1.460,0.330);
\draw[orange!70!black,thin] (1.460,0.330) -- (1.460,1.330);
\draw[orange!70!black,thin] (1.460,1.330) -- (0.460,1.330);
\draw[orange!70!black,thin] (0.460,1.330) -- (0.460,0.330);
\draw[orange!70!black,thin] (0.460,1.330) -- (0.920,1.660);
\draw[orange!70!black,thin] (1.460,1.330) -- (1.920,1.660);
\draw[orange!70!black,thin] (0.920,1.660) -- (1.920,1.660);
\draw[orange!70!black,thin] (1.460,0.330) -- (1.920,0.660);
\draw[orange!70!black,thin] (1.920,0.660) -- (1.920,1.660);
\draw[black!40,thin,dashed] (0.920,0.660) -- (2.920,0.660);
\draw[black!40,thin,dashed] (0.920,0.660) -- (0.000,0.000);
\draw[black!40,thin,dashed] (0.920,0.660) -- (0.920,2.660);
\fill[blue!32,fill opacity=0.9] (0.000,1.000) -- (1.000,1.000) -- (1.000,2.000) -- (0.000,2.000) -- cycle;
\fill[blue!16,fill opacity=0.9] (0.000,2.000) -- (1.000,2.000) -- (1.460,2.330) -- (0.460,2.330) -- cycle;
\fill[blue!48,fill opacity=0.9] (1.000,1.000) -- (1.460,1.330) -- (1.460,2.330) -- (1.000,2.000) -- cycle;
\draw[blue!70!black,thin] (0.000,1.000) -- (1.000,1.000);
\draw[blue!70!black,thin] (1.000,1.000) -- (1.000,2.000);
\draw[blue!70!black,thin] (1.000,2.000) -- (0.000,2.000);
\draw[blue!70!black,thin] (0.000,2.000) -- (0.000,1.000);
\draw[blue!70!black,thin] (0.000,2.000) -- (0.460,2.330);
\draw[blue!70!black,thin] (1.000,2.000) -- (1.460,2.330);
\draw[blue!70!black,thin] (0.460,2.330) -- (1.460,2.330);
\draw[blue!70!black,thin] (1.000,1.000) -- (1.460,1.330);
\draw[blue!70!black,thin] (1.460,1.330) -- (1.460,2.330);
\fill[teal!32,fill opacity=0.9] (1.000,0.000) -- (2.000,0.000) -- (2.000,1.000) -- (1.000,1.000) -- cycle;
\fill[teal!16,fill opacity=0.9] (1.000,1.000) -- (2.000,1.000) -- (2.460,1.330) -- (1.460,1.330) -- cycle;
\fill[teal!48,fill opacity=0.9] (2.000,0.000) -- (2.460,0.330) -- (2.460,1.330) -- (2.000,1.000) -- cycle;
\draw[teal!70!black,thin] (1.000,0.000) -- (2.000,0.000);
\draw[teal!70!black,thin] (2.000,0.000) -- (2.000,1.000);
\draw[teal!70!black,thin] (2.000,1.000) -- (1.000,1.000);
\draw[teal!70!black,thin] (1.000,1.000) -- (1.000,0.000);
\draw[teal!70!black,thin] (1.000,1.000) -- (1.460,1.330);
\draw[teal!70!black,thin] (2.000,1.000) -- (2.460,1.330);
\draw[teal!70!black,thin] (1.460,1.330) -- (2.460,1.330);
\draw[teal!70!black,thin] (2.000,0.000) -- (2.460,0.330);
\draw[teal!70!black,thin] (2.460,0.330) -- (2.460,1.330);
\draw[black!25,very thin] (1.000,0.000) -- (1.000,2.000);
\draw[black!25,very thin] (0.000,1.000) -- (2.000,1.000);
\draw[black!25,very thin] (1.000,2.000) -- (1.920,2.660);
\draw[black!25,very thin] (0.460,2.330) -- (2.460,2.330);
\draw[black!25,very thin] (2.460,0.330) -- (2.460,2.330);
\draw[black!25,very thin] (2.000,1.000) -- (2.920,1.660);
\draw[black!60,thin] (0.000,0.000) -- (2.000,0.000);
\draw[black!60,thin] (2.000,0.000) -- (2.000,2.000);
\draw[black!60,thin] (2.000,2.000) -- (0.000,2.000);
\draw[black!60,thin] (0.000,2.000) -- (0.000,0.000);
\draw[black!60,thin] (0.000,2.000) -- (0.920,2.660);
\draw[black!60,thin] (2.000,2.000) -- (2.920,2.660);
\draw[black!60,thin] (0.920,2.660) -- (2.920,2.660);
\draw[black!60,thin] (2.000,0.000) -- (2.920,0.660);
\draw[black!60,thin] (2.920,0.660) -- (2.920,2.660);
\node[blue!70!black,font=\scriptsize,above left] at (0.300,2.000) {$(1,1,2)$};
\node[teal!60!black,font=\scriptsize,anchor=west] at (2.350,0.020) {$(2,1,1)$};
\draw[orange!85!black,thin] (0.960,0.880) -- (-0.75,0.35);
\node[orange!85!black,font=\scriptsize,left] at (-0.72,0.35) {$(1,2,1)$};
\node[font=\scriptsize,align=center] at (1.35,-0.74) {any two labels differ\\[2.5pt]in two coordinates};
\node[font=\small] at (-1.15,3.05) {(a)};
\end{tikzpicture}
\hfill
\begin{tikzpicture}[scale=0.95,>=Stealth]
\fill[black!4] (0.550,1.550) -- (3.050,1.550) -- (4.010,2.750) -- (1.510,2.750) -- cycle;
\fill[orange!45] (2.550,1.550) -- (3.050,1.550) -- (3.290,1.850) -- (2.790,1.850) -- cycle;
\draw[black!30,very thin] (0.550,1.550) -- (1.510,2.750);
\draw[black!30,very thin] (1.050,1.550) -- (2.010,2.750);
\draw[black!30,very thin] (1.550,1.550) -- (2.510,2.750);
\draw[black!30,very thin] (2.050,1.550) -- (3.010,2.750);
\draw[black!30,very thin] (2.550,1.550) -- (3.510,2.750);
\draw[black!30,very thin] (3.050,1.550) -- (4.010,2.750);
\draw[black!30,very thin] (0.550,1.550) -- (3.050,1.550);
\draw[black!30,very thin] (0.790,1.850) -- (3.290,1.850);
\draw[black!30,very thin] (1.030,2.150) -- (3.530,2.150);
\draw[black!30,very thin] (1.270,2.450) -- (3.770,2.450);
\draw[black!30,very thin] (1.510,2.750) -- (4.010,2.750);
\draw[orange!85!black,thin] (2.550,1.550) -- (3.050,1.550) -- (3.290,1.850) -- (2.790,1.850) -- cycle;
\fill[white,fill opacity=0.94] (0.000,0.000) -- (2.500,0.000) -- (3.460,1.200) -- (0.960,1.200) -- cycle;
\fill[black!4] (0.000,0.000) -- (2.500,0.000) -- (3.460,1.200) -- (0.960,1.200) -- cycle;
\fill[blue!40] (0.000,0.000) -- (0.500,0.000) -- (0.740,0.300) -- (0.240,0.300) -- cycle;
\fill[blue!40] (0.740,0.300) -- (1.240,0.300) -- (1.480,0.600) -- (0.980,0.600) -- cycle;
\fill[blue!40] (1.480,0.600) -- (1.980,0.600) -- (2.220,0.900) -- (1.720,0.900) -- cycle;
\fill[blue!40] (2.220,0.900) -- (2.720,0.900) -- (2.960,1.200) -- (2.460,1.200) -- cycle;
\draw[black!45,very thin] (0.000,0.000) -- (0.960,1.200);
\draw[black!45,very thin] (0.500,0.000) -- (1.460,1.200);
\draw[black!45,very thin] (1.000,0.000) -- (1.960,1.200);
\draw[black!45,very thin] (1.500,0.000) -- (2.460,1.200);
\draw[black!45,very thin] (2.000,0.000) -- (2.960,1.200);
\draw[black!45,very thin] (2.500,0.000) -- (3.460,1.200);
\draw[black!45,very thin] (0.000,0.000) -- (2.500,0.000);
\draw[black!45,very thin] (0.240,0.300) -- (2.740,0.300);
\draw[black!45,very thin] (0.480,0.600) -- (2.980,0.600);
\draw[black!45,very thin] (0.720,0.900) -- (3.220,0.900);
\draw[black!45,very thin] (0.960,1.200) -- (3.460,1.200);
\node[font=\scriptsize,black!65,anchor=east] at (-0.140,0.550) {slice $x_3=1$};
\node[font=\scriptsize,black!65,above] at (2.310,2.800) {slice $x_3=2$};
\node[blue!70!black,font=\scriptsize,anchor=west] at (3.050,0.400) {$(r,r,1)$};
\node[orange!85!black,font=\scriptsize,right] at (3.195,1.700) {$(5,1,2)$};
\node[font=\scriptsize,align=center] at (2.25,-0.62) {diagonal vs.\ diagonal: modes $1,2$ differ\\[2.5pt]diagonal vs.\ $(5,1,2)$: modes $1,3$ differ};
\node[font=\small] at (-0.30,3.05) {(b)};
\end{tikzpicture}
\hfill
\begin{tikzpicture}[scale=0.95,>=Stealth]
\node[circle,draw,inner sep=1.3pt,font=\scriptsize] (v1) at (0,0.30) {1};
\node[circle,draw,inner sep=1.3pt,font=\scriptsize] (v2) at (1.7,0.30) {2};
\node[circle,draw,inner sep=1.3pt,font=\scriptsize] (v3) at (0.85,1.70) {3};
\draw[dotted,black!35] (v1) -- (v3);
\draw[thick] (v1) -- (v2);
\draw[thick] (v2) -- (v3);
\node[red!65!black,font=\footnotesize] at (0.85,0.02) {$\infty$};
\node[red!65!black,font=\footnotesize] at (1.58,1.10) {$\infty$};
\node[font=\scriptsize] at (0.85,2.22) {any tree $T$, any $\varepsilon$};
\node[font=\scriptsize,align=center] at (0.85,-1.00)
{each edge keeps an unexempted\\[2.5pt]mode with disjoint supports\\[3.5pt]$\Rightarrow\ \tau(S)=+\infty$};
\node[font=\small] at (-0.62,2.22) {(c)};
\end{tikzpicture}
\caption{The combinatorics behind the infinite scattering of both examples.
(a) The three terms of the $W$ tensor (Example~\ref{ex:W}) as cells of the
$2\times2\times2$ array: any two of the labels $(1,1,2)$, $(1,2,1)$,
$(2,1,1)$ differ in two coordinates, so the corresponding factor pairs
have disjoint supports in two modes.
(b) The terms of $\T_R$ (Example~\ref{ex:all-rank}, drawn for $R=5$) in the two
slices $x_3=1,2$: the diagonal labels $(r,r,1)$ and the isolated label
$(R,1,2)$ again differ pairwise in at least two coordinates.
(c) Two equivalent readings of the obstruction.  In the tree formula
\eqref{eq:tau-tree}, whichever spanning tree and exemption map
$\varepsilon$ one chooses, each tree edge keeps at least one unexempted
mode in which the two supports are disjoint, so
$\kappa_j(F_j(T,\varepsilon))=+\infty$ there by Lemma~\ref{lem:kappa-finite}.
Equivalently, no pair is an edge of the activation graph $G_{[d]}(S)$, so
$G_{[d]}(S)$ is disconnected and Theorem~\ref{thm:tau-activation}(iii) gives
$\tau(S)=+\infty$ for every $|S|\ge2$, whereas every reshaped budget
falls short of the thresholds.}
\label{fig:examples}
\end{figure}

\subsection{Numerical Comparison}\label{subsec:numerics}

We next examine the gain from the scattering term on random sparse
nonnegative decompositions.  Fix $d=3$ and $R=5$, let all mode dimensions
equal $n$, and draw the support of each factor coordinatewise from
$\operatorname{Bernoulli}(p)$, conditioning on nonempty factors.  On each
support, assign independent integer values uniformly from
$\{1,\ldots,999\}$.

For each realization we compute the mode dimensions, the pairwise
scattering costs, and the resulting $\tau(S)$ exactly.  Conditions
\eqref{eq:M} and \eqref{eq:U} are then checked for every nontrivial subset
$S\subseteq[R]$.  Thus the reported certification outcomes do not depend
on numerical tolerances.

Figure~\ref{fig:phase} compares condition~\eqref{eq:U} with Kruskal's
condition \citep[Theorem~4a]{Kruskal1977} and the Lovitz--Petrov condition
\[
\beta(S)\ge2|S|-1
\qquad
\text{for every }S\subseteq[R],\quad |S|\ge2.
\]
Kruskal's condition implies the Lovitz--Petrov condition
\citep[p.~3]{LovitzPetrov2023}, and the latter implies~\eqref{eq:U}
because $\tau(S)\ge0$.  Thus the positive-scattering criterion can only
expand the certified region.

The difference is largest in sparse regimes.  There, support disjointness
creates large or infinite scattering costs even when the factor-span
dimensions remain too small to meet the dimension-only threshold.  At full
support, ordinary support disjointness never occurs, so the infinite-cost
obstruction disappears.

\begin{figure*}[htbp]
\centering
\includegraphics[width=\textwidth]{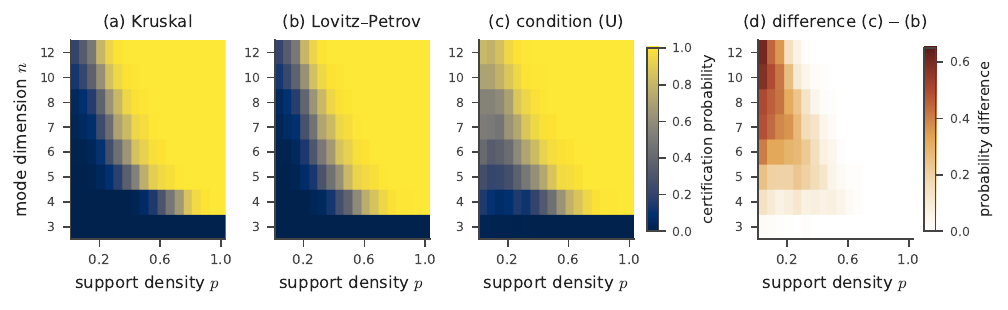}
\caption{Certification probabilities for random nonnegative decompositions
with $d=3$ and $R=5$.  Factor supports are drawn coordinatewise from
$\operatorname{Bernoulli}(p)$, conditioned on being nonempty, and nonzero
entries are independent uniform integers in $\{1,\ldots,999\}$.  Each cell
aggregates $500$ realizations.  (a)~Kruskal's condition
$\sum_j(k_j-1)\ge2R-1$.  (b)~The Lovitz--Petrov condition
$\beta(S)\ge2|S|-1$ for every $S$.  (c)~The positive-scattering condition
\eqref{eq:U}.  (d)~The increase in certification probability from (b) to
(c).  The largest observed increase is $0.60$, at $(n,p)=(12,0.04)$.
At full support, the infinite support-separation obstruction is absent.}
\label{fig:phase}
\end{figure*}

\subsection{Searching for Alternatives}\label{subsec:search}

The previous experiment asks when the criterion certifies uniqueness.  We
now ask the converse question: when \eqref{eq:U} fails, how often can we
find an explicit nonnegative alternative?

We restrict attention to the slice $n=6$ and search every realization that
violates \eqref{eq:U}.  The search proceeds in two stages.  First, we test
explicit pair constructions.  If two prescribed terms can be combined into
a shorter nonnegative rank-one representation, minimality fails.  If they
are nonparallel in exactly two modes and have nested supports in one of
those modes, the identity
\[
x_1\otimes y_1+x_2\otimes y_2
=
x_1\otimes(y_1+t y_2)
+
(x_2-tx_1)\otimes y_2
\]
produces an inequivalent nonnegative decomposition whenever
\[
0<t<
\min_{i\in\supp(x_1)}
\frac{x_2(i)}{x_1(i)}.
\]
All such constructions are checked exactly over $\mathbb Q$.

When no pair construction applies, we use multi-start nonnegative
alternating least squares \citep[p.~1148]{KimParkElden2007} only as a
heuristic to locate candidate alternatives.  A candidate is counted only
after an exact rational decomposition has been constructed and verified.

Among the $5621$ realizations violating \eqref{eq:U}, an exact pair
construction was found in $5497$, and an additional $10$ realizations
admitted an exact construction on a larger subset.  The remaining $114$
cases are inconclusive.  Thus an exact nonnegative alternative was found in
\[
\frac{5497+10}{5621}\approx0.98
\]
of the violating instances.  This provides empirical evidence that the
criterion may be close to necessary for this sparse ensemble, although the
experiment does not establish necessity.

\begin{figure*}[htbp]
\centering
\includegraphics[width=\textwidth]{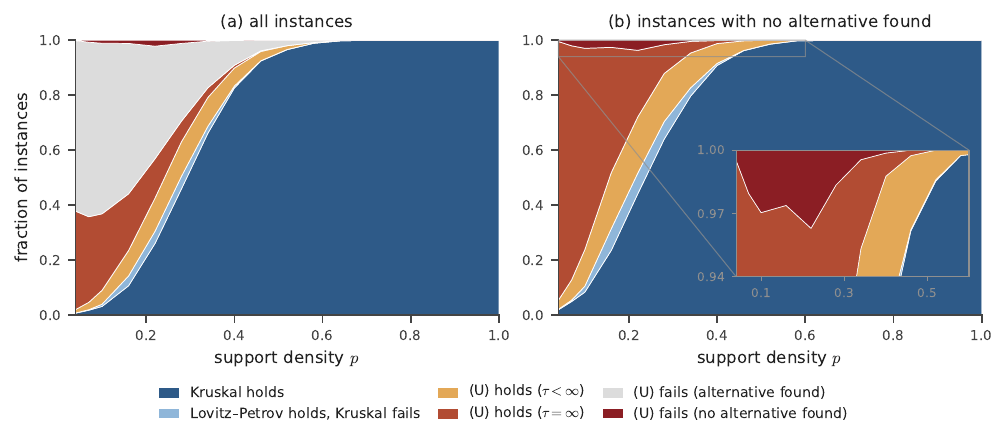}
\caption{Search for nonnegative alternatives on the $n=6$ slice of the
ensemble in Figure~\ref{fig:phase}, using $1600$ realizations for each
value of $p$.  Every realization violating \eqref{eq:U} is searched.
Exact pair constructions and larger-subset constructions are verified
over $\mathbb Q$; candidate decompositions found by alternating least
squares are not counted unless an exact decomposition is subsequently
verified.  (a)~The search outcomes.  (b)~The same data after removing the
exact-alternative layer.  The unresolved cases are therefore genuinely
inconclusive rather than evidence of uniqueness.}
\label{fig:gap}
\end{figure*}

\section{Matrix Specialization}\label{sec:matrix-case}

The order-two case provides a useful boundary case for the general theory.
Identifying $\T$ with a matrix $X$, write
\begin{equation}
X=\T=\sum_{r=1}^{R}a_rb_r^{\top}=AB^{\top},
\qquad
A=[a_1\ \cdots\ a_R]\in\Rnn^{n_1\times R},
\qquad
B=[b_1\ \cdots\ b_R]\in\Rnn^{n_2\times R},
\label{eq:matrix-decomposition}
\end{equation}
where every column of $A$ and $B$ is nonzero. We call a set of columns of a matrix a \emph{circuit} if it is
linearly dependent and every proper subset of it is linearly independent \citep[p.~510]{Whitney1935}. For a nonempty
$S\subseteq[R]$, let $A_S$ and $B_S$ denote the corresponding column
submatrices.  Then
\[
d_1(S)=\rank A_S,
\qquad
d_2(S)=\rank B_S,
\]
and therefore
\begin{equation}
\beta(S)
=
\rank A_S+\rank B_S-2.
\label{eq:matrix-beta}
\end{equation}

The matrix case is especially revealing because neither Kruskal's nor
Lovitz--Petrov's dimension budget can reach the thresholds in
Theorem~\ref{thm:main}.  Hence the matrix content of the present criterion
comes entirely from the positivity-induced scattering term.

\begin{remark}[Dimension budgets in two modes]
\label{rem:matrix-budget}
The usual Kruskal and Lovitz--Petrov theorems are formulated for tensors
with at least three modes.  Here we only examine the corresponding
dimension-budget inequalities after formally setting $d=2$.  For every
$S$ with $|S|\ge2$,
\[
\beta(S)
\le
2|S|-2
<
2|S|-1,
\]
so the Lovitz--Petrov uniqueness threshold can never hold in two modes.
Likewise, if $k_1$ and $k_2$ are the Kruskal ranks of the two factor
matrices, then
\[
(k_1-1)+(k_2-1)
\le
2R-2
<
2R-1,
\]
so the corresponding Kruskal threshold is also unattainable.

Thus, in the matrix case, the budget alone cannot certify uniqueness.
The difference between the minimality and uniqueness criteria is entirely
accounted for by the positive-scattering term.
\end{remark}

Condition~\eqref{eq:M} requires
\[
\tau(S)
\ge
\bigl(|S|-\rank A_S\bigr)
+
\bigl(|S|-\rank B_S\bigr),
\]
while condition~\eqref{eq:U} requires one additional unit.  The next
results show that these inequalities have particularly simple
interpretations.  Minimality reduces exactly to ordinary matrix rank,
whereas uniqueness reduces exactly to two-sided separability.

\subsection{Overlap Graphs and Matrix Scattering}
\label{subsec:matrix-graphs}

For a pair $\{r,t\}\subseteq S$, the two mode costs
$\kappa_1(\{r,t\})$ and $\kappa_2(\{r,t\})$ are computed relative to the
set $S$, using the spaces and support hulls defined in Section~\ref{sec:tau}.
By Proposition~\ref{prop:kappa-trichotomy}, each belongs to
$\{0,1,+\infty\}$.  The cases $0$ and $+\infty$ can be expressed directly
through two natural graphs.

\begin{definition}[Overlap graphs]
\label{def:matrix-overlap-graphs}
Let $S\subseteq[R]$ with $|S|\ge2$.  The \emph{facet-overlap graph}
$\mathsf F_A(S)$ has vertex set $S$ and an edge $\{r,t\}$ whenever
\[
\mathcal A_{U_1(S)}(a_r)
\cap
\mathcal A_{U_1(S)}(a_t)
\neq\varnothing .
\]
The \emph{support-overlap graph} $\mathsf O_A(S)$ has vertex set $S$ and
an edge $\{r,t\}$ whenever
\[
\supp(a_r)\cap\supp(a_t)\neq\varnothing .
\]
Define $\mathsf F_B(S)$ and $\mathsf O_B(S)$ analogously for the columns
of $B$.

All graph unions below are taken on the common vertex set $S$.
\end{definition}

By Proposition~\ref{prop:kappa-trichotomy},
\[
\kappa_1(\{r,t\})=0
\quad\Longleftrightarrow\quad
\{r,t\}\in E(\mathsf F_A(S)),
\]
while Lemma~\ref{lem:kappa-finite} gives
\[
\kappa_1(\{r,t\})<+\infty
\quad\Longleftrightarrow\quad
\{r,t\}\in E(\mathsf O_A(S)).
\]
The corresponding statements hold for $B$.  In particular,
\[
\mathsf F_A(S)\subseteq\mathsf O_A(S),
\qquad
\mathsf F_B(S)\subseteq\mathsf O_B(S).
\]

\begin{proposition}[Matrix activation formula]
\label{prop:matrix-activation}
For every $S\subseteq[R]$ with $|S|\ge2$, the activation graphs
$G_M(S)$ of Section~\ref{subsec:activation} are
\begin{align}
G_{\varnothing}(S)
&=
\mathsf F_A(S)\cup\mathsf F_B(S),
\label{eq:matrix-G0}\\
G_{\{1\}}(S)
&=
\mathsf O_A(S)\cup\mathsf F_B(S),
\label{eq:matrix-G1}\\
G_{\{2\}}(S)
&=
\mathsf F_A(S)\cup\mathsf O_B(S),
\label{eq:matrix-G2}\\
G_{\{1,2\}}(S)
&=
\mathsf O_A(S)\cup\mathsf O_B(S).
\label{eq:matrix-G12}
\end{align}
Consequently,
\begin{equation}
\tau(S)=
\begin{cases}
	0,
	&
	\mathsf F_A(S)\cup\mathsf F_B(S)
	\text{ is connected},\\[2pt]
	1,
	&
	\text{otherwise, if }
	\mathsf O_A(S)\cup\mathsf F_B(S)
	\text{ or }
	\mathsf F_A(S)\cup\mathsf O_B(S)
	\text{ is connected},\\[2pt]
	2,
	&
	\text{otherwise, if }
	\mathsf O_A(S)\cup\mathsf O_B(S)
	\text{ is connected},\\[2pt]
	+\infty,
	&
	\text{otherwise}.
\end{cases}
\label{eq:matrix-tau-four-values}
\end{equation}
In particular,
\[
\tau(S)\in\{0,1,2,+\infty\}.
\]
For a pair $S=\{r,t\}$,
\begin{equation}
\tau(\{r,t\})
=
\min\bigl\{
\kappa_1(\{r,t\}),
\kappa_2(\{r,t\})
\bigr\}.
\label{eq:matrix-pair-tau}
\end{equation}
\end{proposition}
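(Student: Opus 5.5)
The plan is to specialize the definition of $G_A(S)$ from Section~\ref{subsec:activation} to $d=2$ and then read off $\tau(S)$ from Theorem~\ref{thm:tau-activation}(i).

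\emph{Identifying the graphs.} For $d=2$, a pair $e=\{r,t\}$ is an edge of $G_A(S)$ exactly when the set of ``unresolved'' modes
\[
\{j\in\{1,2\}:\kappa_j(e)=+\infty\ \text{or}\ (\kappa_j(e)=1,\ j\notin A)\}
\]
has at most one element. Equivalently, at least one mode is resolved. Mode $j$ is resolved iff $\kappa_j(e)=0$, or $\kappa_j(e)=1$ with $j\in A$. By Proposition~\ref{prop:kappa-trichotomy} and Lemma~\ref{lem:kappa-finite}, this means:
\begin{itemize}
\item for $j\notin A$, mode $j$ is resolved iff $\kappa_j(e)=0$, i.e.\ iff $e$ is an edge of $\mathsf F_A(S)$ (resp.\ $\mathsf F_B(S)$);
\item for $j\in A$, mode $j$ is resolved iff $\kappa_j(e)<\infty$, i.e.\ iff $e$ is an edge of $\mathsf O_A(S)$ (resp.\ $\mathsf O_B(S)$).
\end{itemize}
Since ``at least one mode resolved'' is a disjunction, the edge set of $G_A(S)$ is the union of the mode-wise graphs. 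Running over $A=\varnothing,\{1\},\{2\},\{1,2\}$ gives \eqref{eq:matrix-G0}--\eqref{eq:matrix-G12}.

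\emph{Reading off $\tau$.} Theorem~\ref{thm:tau-activation}(i) states that $\tau(S)$ is the minimum of $|A|$ over $A\subseteq\{1,2\}$ with $G_A(S)$ connected, or $+\infty$ if none exists. Checking the cardinalities $0,1,2$ in order gives exactly the four-case formula \eqref{eq:matrix-tau-four-values}, and hence $\tau(S)\in\{0,1,2,+\infty\}$. Monotonicity in $A$ (via $\mathsf F\subseteq\mathsf O$) is consistent with this ordering but is not needed.

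\emph{The pair formula.} For $S=\{r,t\}$ I use \eqref{eq:tau-pair} with $d=2$:
\[
\tau(\{r,t\})=\min_{j_0\in\{1,2\}}\sum_{j\neq j_0}\kappa_j(\{e\}).
\]
Exempting mode $1$ leaves $\kappa_2$, and exempting mode $2$ leaves $\kappa_1$, which yields \eqref{eq:matrix-pair-tau}. As a cross-check, this agrees with the graph formula: a one-edge graph on two vertices is connected iff that edge is present. For instance, $\tau=1$ iff neither $\kappa$ is $0$ but some $\kappa$ equals $1$.

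The only delicate point is the first step: one must state the edge criterion carefully as ``at most one unresolved mode'' $=$ ``some mode resolved'' in two modes. One must also confirm that the resolution notion in an active mode matches support overlap even when $U_j(S)=H_j(S)$. In that case signatures equal supports, so $\mathsf F=\mathsf O$ in that mode and the identification still holds.
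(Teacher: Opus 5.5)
Your proposal is correct and follows the paper's proof essentially step for step. Like the paper, you read off the four graphs from the edge criterion for $G_M(S)$ using $\kappa_j(e)=0\Leftrightarrow e\in\mathsf F$ and $\kappa_j(e)<\infty\Leftrightarrow e\in\mathsf O$, obtain the four-case formula from Theorem~\ref{thm:tau-activation}(i), and get the pair case from \eqref{eq:tau-pair}. Two small remarks: the paper writes the activation set as $M$ precisely to avoid your notational clash with the factor matrix $A$, and your concern about $U_j(S)=H_j(S)$ is unnecessary, since $G_M(S)$ is defined directly through the values $\kappa_j(e)$.
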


\begin{proof}
For $M\subseteq\{1,2\}$, recall from
Theorem~\ref{thm:tau-activation}(i) that a pair
$e=\{r,t\}$ is an edge of $G_M(S)$ precisely when
\[
\#\Bigl\{
j:\kappa_j(e)=+\infty
\text{ or }
\bigl(\kappa_j(e)=1\text{ and }j\notin M\bigr)
\Bigr\}
\le1.
\]

If $M=\varnothing$, the counted modes are exactly those with
$\kappa_j(e)\ge1$.  Hence $e$ is an edge if and only if at least one of the
two costs is zero, which gives \eqref{eq:matrix-G0}.  If $M=\{1\}$, mode
$1$ contributes to the count only when $\kappa_1(e)=+\infty$, while mode
$2$ contributes whenever $\kappa_2(e)\ge1$.  Thus $e$ is an edge precisely
when $\kappa_1(e)<+\infty$ or $\kappa_2(e)=0$, giving
\eqref{eq:matrix-G1}.  The case $M=\{2\}$ is symmetric, and for
$M=\{1,2\}$ only the infinite costs remain, giving
\eqref{eq:matrix-G12}.

The four graphs are monotone in $M$ because
\[
\mathsf F_A\subseteq\mathsf O_A,
\qquad
\mathsf F_B\subseteq\mathsf O_B.
\]
The activation formula now follows directly from
Theorem~\ref{thm:tau-activation}(i).  For $S=\{r,t\}$, the tree formula
\eqref{eq:tau-pair} leaves exactly one mode nonexempted, giving
\eqref{eq:matrix-pair-tau}.
\end{proof}

\subsection{Row-Separability}\label{subsec:matrix-separability}

We next identify the matrix condition associated with persistent failure
of facet-overlap connectivity.

\begin{definition}[Row-separability]
\label{def:row-separable}
Let $A\in\Rnn^{n_1\times R}$ and let $S\subseteq[R]$ be nonempty.  A row
index $i$ is \emph{pure on $r$ relative to $S$} if
\[
A_{ir}>0,
\qquad
A_{it}=0
\quad\text{for every }t\in S\setminus\{r\}.
\]
The matrix $A$ is \emph{row-separable on $S$} if every $r\in S$ has a row
that is pure on $r$ relative to $S$, and \emph{row-separable} if it is
row-separable on $[R]$.  The factorization \eqref{eq:matrix-decomposition}
is \emph{two-sided separable} if both $A$ and $B$ are row-separable.
\end{definition}

In the conventional notation $X=WH$ with $W=A$ and $H=B^\top$,
row-separability of $B$ means that for every $r$ some column of $H$ is a positive multiple of the coordinate vector $e_r$. This is the separability
condition of Donoho and Stodden \citep{DonohoStodden2004}; row-separability
of $A$ is the analogous condition for $X^\top=BA^\top$.

\begin{proposition}[Facet characterization of separability]
\label{prop:matrix-separability}
Let $A$ have full column rank.  The following are equivalent:
\begin{enumerate}[(i)]
\item $A$ is row-separable;
\item $\mathsf F_A(S)$ is disconnected for every
$S\subseteq[R]$ with $|S|\ge2$;
\item $\mathsf F_A(S)$ is edgeless for every
$S\subseteq[R]$ with $|S|\ge2$.
\end{enumerate}
The same equivalences hold with $A$ and $\mathsf F_A$ replaced by $B$ and
$\mathsf F_B$.
\end{proposition}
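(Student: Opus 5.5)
The plan is to work in the coordinates supplied by full column rank and to use the normalized dual polytope of Proposition~\ref{prop:normalized-dual-polytope}. Fix $S$ with $|S|\ge2$. Since $A_S$ has full column rank, every $x\in U_1(S)$ is uniquely $x=\sum_{r\in S}c_ra_r$, so $U_1(S)^*\cong\R^S$. Put $p=\sum_{r\in S}a_r$ and delete the rows outside $N_1(S)$, so that $p_i>0$. In these coordinates the normalized functional of row $i$ is
\[
q_i=\Bigl(\tfrac{A_{ir}}{p_i}\Bigr)_{r\in S}.
\]
It is a nonnegative vector with coordinate sum $1$, i.e.\ a point of the standard simplex $\Delta_S$. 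Moreover $\supp(q_i)=\{r\in S:A_{ir}>0\}$, and $q_i=e_r$ exactly when row $i$ is pure on $r$ relative to $S$. By Proposition~\ref{prop:normalized-dual-polytope}(iii)--(iv), $\{r,t\}$ is an edge of $\mathsf F_A(S)$ iff some vertex $q_i$ of $P=\conv\{q_i\}$ has both $r$ and $t$ in its support. Every vertex has nonempty support, consistent with Lemma~\ref{lem:facet-map}.

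\emph{(i)$\Rightarrow$(iii).} A row that is pure on $r$ relative to $[R]$ stays pure relative to any $S\ni r$. Hence all $e_r$, $r\in S$, occur among the $q_i$. Then $P=\Delta_S$, whose vertices are exactly the $e_r$. Every vertex therefore has a singleton support, so no pair shares a vertex and $\mathsf F_A(S)$ is edgeless. \emph{(iii)$\Rightarrow$(ii)} is trivial because $|S|\ge2$.

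\emph{(ii)$\Rightarrow$(i)} is the main step, argued by contrapositive. Suppose some $r$ has no pure row relative to $[R]$. Choose $S\ni r$ minimal (by inclusion) such that $r$ has no pure row relative to $S$; such sets exist since $S=[R]$ qualifies. Because $a_r\neq0$, the set $S=\{r\}$ fails, so $|S|\ge2$. For each $t\in S\setminus\{r\}$, minimality yields a row $i_t$ that is pure on $r$ relative to $S\setminus\{t\}$. This row is not pure relative to $S$, so $A_{i_tt}>0$. Thus $q_{i_t}$ lies on the edge $[e_r,e_t]$ of $\Delta_S$ with both coordinates positive.

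Next I use that $[e_r,e_t]$ is a face of $\Delta_S$, so $P\cap[e_r,e_t]$ is a face of $P$ and is a nonempty segment of points $q_i$ supported in $\{r,t\}$. Its endpoint nearest $e_r$ is a vertex of $P$ and has positive $r$-coordinate (at least as large as that of $q_{i_t}$). It cannot equal $e_r$, since $r$ has no pure row relative to $S$, so its $t$-coordinate is also positive. Hence $\{r,t\}\in E(\mathsf F_A(S))$ for every $t\ne r$, and $\mathsf F_A(S)$ contains a star centered at $r$ and is connected, contradicting (ii).

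The delicate point I expect is precisely this vertex-on-an-edge argument: pairwise sets $S=\{r,t\}$ alone do not suffice, which is why the minimal $S$ is needed. The statement for $B$ follows by the identical argument with the roles of the two factors swapped.
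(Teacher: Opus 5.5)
Your proof is correct and follows the paper's argument essentially step by step. You use the same normalized-row polytope $P\subseteq\Delta_S$, the same simplex argument for (i)$\Rightarrow$(iii), and for (ii)$\Rightarrow$(i) the same inclusion-minimal subset that produces rows $q_{i_t}$ in the relative interior of $[e_r,e_t]$. The only differences are cosmetic. You minimize over sets $S\ni r$ for a fixed $r$ instead of over sets on which $A$ fails row-separability. You also find the vertex with both coordinates positive through the exposed face $P\cap[e_r,e_t]$, whereas the paper writes $q_{i_t}$ as a convex combination of vertices and excludes $e_r$.
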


\begin{proof}
Because $A$ has full column rank, every $A_S$ has full column rank.  Fix
$S\subseteq[R]$ with $|S|\ge2$ and put
\[
U=U_1(S).
\]
For each active row $i\in N_1(S)$ define the normalized row
\[
q_i
=
\frac{(A_{ir})_{r\in S}}
{\sum_{r\in S}A_{ir}}
\in\R^S,
\]
and let
\[
P=\conv\{q_i:i\in N_1(S)\}.
\]
By Proposition~\ref{prop:normalized-dual-polytope}, a row $i$ defines a
facet of $C(U)$ precisely when $q_i$ is a vertex of $P$.  Moreover,
two columns $a_r,a_t$ have intersecting facet signatures if and only if
some vertex of $P$ has both $r$th and $t$th coordinates positive.  Thus
\begin{equation}
\{r,t\}\in E(\mathsf F_A(S))
\quad\Longleftrightarrow\quad
\text{$P$ has a vertex with positive $r$th and $t$th coordinates.}
\label{eq:matrix-facet-polytope}
\end{equation}

We use this characterization in both directions.

\emph{Row-separable implies edgeless.}
Suppose $A$ is row-separable.  For every $r\in S$, choose a row that is
pure on $r$ relative to $[R]$.  Since $A$ has full column rank, such a row
gives the normalized vector $q_i=e_r$.  Hence the convex hull $P$
contains all coordinate vectors $e_r$, while every $q_i$ is a probability
vector on $S$ and therefore lies in their simplex.  Thus
\[
P=\conv\{e_r:r\in S\},
\]
whose vertices have exactly one positive coordinate.  By
\eqref{eq:matrix-facet-polytope}, $\mathsf F_A(S)$ is edgeless.

\emph{Edgeless implies disconnected.}
This is immediate because $|S|\ge2$.

\emph{Disconnected for every $S$ implies row-separable.}
Suppose instead that $A$ is not row-separable.  Choose an inclusion-minimal
nonempty subset $S\subseteq[R]$ on which $A$ is not row-separable.  Then
$|S|\ge2$.  Choose $r\in S$ for which no row is pure on $r$ relative to $S$.

For every $t\in S\setminus\{r\}$, minimality of $S$ implies that $A$ is
row-separable on $S\setminus\{t\}$.  Hence there exists a row $i_t$ such
that
\[
A_{i_tr}>0,
\qquad
A_{i_tu}=0
\quad
(u\in S\setminus\{r,t\}).
\]
Because the row is not pure on $r$ relative to $S$, necessarily
$A_{i_tt}>0$.  Therefore
\[
q_{i_t}\in\relint\conv\{e_r,e_t\}.
\]

We next claim that $e_r\notin P$.  Otherwise $e_r$ would be a convex
combination of the points $q_i$.  Since all coordinates are nonnegative
and $e_r$ vanishes outside coordinate $r$, every $q_i$ appearing with
positive weight would have to equal $e_r$.  The corresponding row would be
pure on $r$ relative to $S$, a contradiction.

Write $q_{i_t}$ as a convex combination of vertices of $P$.  Since the
coordinates outside $\{r,t\}$ vanish, every vertex appearing with positive
weight lies on $\conv\{e_r,e_t\}$.  Because $q_{i_t}$ has both its $r$th
and $t$th coordinates positive, it cannot be represented using only the
vertex $e_t$.  Moreover, $e_r\notin P$, so no vertex in the representation
can equal $e_r$.  Hence at least one vertex has both the $r$th and $t$th
coordinates positive.  By
\eqref{eq:matrix-facet-polytope},
\[
\{r,t\}\in E(\mathsf F_A(S)).
\]
This holds for every $t\in S\setminus\{r\}$, so $r$ is adjacent to every
other vertex and $\mathsf F_A(S)$ is connected, contradicting the
assumption.  Hence $A$ is row-separable.
\end{proof}

\subsection{Complete Matrix Characterization}\label{subsec:matrix-reduction}

We can now identify both main criteria completely.

\begin{theorem}[Matrices: complete characterization]
\label{thm:matrix-reduction}
For the nonnegative matrix decomposition \eqref{eq:matrix-decomposition}:
\begin{enumerate}[(i)]
\item condition~\eqref{eq:M} holds if and only if
\[
\rank A=\rank B=R,
\]
equivalently,
\[
\rank X=R;
\]
\item condition~\eqref{eq:U} holds if and only if the factorization is
two-sided separable.
\end{enumerate}
\end{theorem}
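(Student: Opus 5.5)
For part (i), I would first use \eqref{eq:matrix-beta} to rewrite \eqref{eq:M} as
\[
\tau(S)\ge(|S|-\rank A_S)+(|S|-\rank B_S)\qquad\text{for all }|S|\ge2.
\]
If $\rank A=\rank B=R$, every $A_S,B_S$ has full column rank, the right side is $0$, and \eqref{eq:M} holds because $\tau\ge0$. The equivalence with $\rank X=R$ is standard: $X=AB^\top$ with $R$ inner columns has rank $R$ exactly when both factors have full column rank.

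For the converse of (i), suppose $\rank A<R$; the case of $B$ is symmetric. Choose a circuit $S$ among the columns of $A$. Then $|S|\ge2$ (columns are nonzero), and there is a relation $\sum_{r\in S}c_ra_r=0$ with all $c_r\neq0$. Since $\rank A_S=|S|-1$, \eqref{eq:M} would require $\tau(S)\ge1$. I will show instead that $\mathsf F_A(S)$ is connected, which gives $\tau(S)=0$ by \eqref{eq:matrix-tau-four-values}, a contradiction.

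To do this I use the normalized rows $q_i\in\R^S$ and the polytope $P=\conv\{q_i\}$ exactly as in the proof of Proposition~\ref{prop:matrix-separability}. Two columns $r,t$ are adjacent in $\mathsf F_A(S)$ iff some vertex of $P$ has positive $r$th and $t$th coordinates; this is \eqref{eq:matrix-facet-polytope}, whose derivation via Proposition~\ref{prop:normalized-dual-polytope} does not use full rank. Consequently the support of each vertex is a clique of $\mathsf F_A(S)$ and lies inside a single connected component.

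Suppose $\mathsf F_A(S)$ had a component $K\subsetneq S$, and put $c_K=(c_r\mathbf 1_{r\in K})_r$. Every row satisfies $q_i\cdot c=0$, because $q_i$ is a positive multiple of row $i$ of $A_S$ and $A_Sc=0$. Each vertex $v$ of $P$ is a convex combination of the $q_i$, so $v\cdot c=0$ as well. If $v$ is supported in $K$, then $v\cdot c_K=v\cdot c=0$; if $v$ is supported outside $K$, then $v\cdot c_K=0$ trivially. Hence $v\cdot c_K=0$ for every vertex. Since each $q_i$ is a convex combination of vertices, $q_i\cdot c_K=0$ for every row $i$, which means $\sum_{r\in K}c_ra_r=0$. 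This is a nontrivial relation on a proper subset of $S$, contradicting minimality of the circuit. So $\mathsf F_A(S)$ is connected.

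This circuit-connectivity step is the main obstacle. The key point is that vertex supports never straddle components, so the relation $c$ can be restricted to a component.

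For part (ii), first assume two-sided separability. Row-separability forces full column rank, because the pure rows give an $R\times R$ diagonal positive submatrix. Hence $\beta(S)=2|S|-2$, and \eqref{eq:U} reduces to $\tau(S)\ge1$. By Proposition~\ref{prop:matrix-separability}, $\mathsf F_A(S)$ and $\mathsf F_B(S)$ are edgeless, so $G_\varnothing(S)$ is edgeless, hence disconnected, and $\tau(S)\ge1$.

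Conversely, \eqref{eq:U} implies \eqref{eq:M}, so part (i) gives full column rank and again $\beta(S)=2|S|-2$. Condition \eqref{eq:U} then says $\tau(S)\ge1$, i.e.\ $\mathsf F_A(S)\cup\mathsf F_B(S)$ is disconnected for every $S$. Suppose $A$ were not row-separable. The last part of the proof of Proposition~\ref{prop:matrix-separability} produces an $S$ with $|S|\ge2$ and a vertex $r$ adjacent in $\mathsf F_A(S)$ to all of $S$. Then $\mathsf F_A(S)$, and hence the union, is connected, so $\tau(S)=0$, a contradiction. The same argument applies to $B$, which proves that the factorization is two-sided separable.
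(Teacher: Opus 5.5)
Your proposal is correct and follows the paper's proof essentially step for step: full column rank gives the forward direction of (i); for the converse you take a circuit of $A$, force $\mathsf F_A(S)$ to be connected (hence $\tau(S)=0$), and contradict \eqref{eq:M}; and for (ii) you combine Proposition~\ref{prop:matrix-separability} with $\beta(S)=2|S|-2$. The only difference is cosmetic and lies in the circuit step: you restrict the linear relation at the vertices of the normalized polytope $P$ and lift it back using ``every $q_i$ is a convex combination of vertices,'' whereas the paper applies the injective facet map $L_U$ coordinatewise and lifts back by injectivity. These are dual formulations of the same fact, and your remark that \eqref{eq:matrix-facet-polytope} needs no full rank is right, since $f\mapsto(f(a_r))_{r\in S}$ embeds $U^*$ injectively into $\R^S$.
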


\begin{proof}
\emph{Part (i).}
If $\rank A=\rank B=R$, then every $A_S$ and $B_S$ has full column rank.
Hence
\[
\beta(S)
=
2|S|-2
\]
for every $S$ with $|S|\ge2$, and condition~\eqref{eq:M} follows from
$\tau(S)\ge0$.

Conversely, suppose $\rank A<R$.  Choose a circuit
$S\subseteq[R]$ among the columns of $A$.  Then
\[
\rank A_S=|S|-1,
\qquad
|S|\ge2.
\]
Let $U=U_1(S)$.  By circuit minimality, there is a linear dependence
\[
\sum_{r\in S}c_ra_r=0
\]
with every $c_r\neq0$.  Applying the injective map
$L_U$ gives
\[
\sum_{r\in S}c_rL_Ua_r=0.
\]
If $\mathsf F_A(S)$ were disconnected, choose a nonempty proper connected
component $C\subsetneq S$.  For every $r\in C$ and
$t\in S\setminus C$, the supports of $L_Ua_r$ and $L_Ua_t$ are disjoint.
Evaluating the linear relation coordinatewise therefore shows
\[
\sum_{r\in C}c_rL_Ua_r=0.
\]
Injectivity of $L_U$ gives
\[
\sum_{r\in C}c_ra_r=0,
\]
contradicting the fact that $S$ is a circuit.  Thus
$\mathsf F_A(S)$ is connected.  Therefore
\[
\mathsf F_A(S)\cup\mathsf F_B(S)
\]
is connected, so Proposition~\ref{prop:matrix-activation} and
Theorem~\ref{thm:tau-activation}(ii) give
\[
\tau(S)=0.
\]
Since
\[
\rank A_S=|S|-1,
\qquad
\rank B_S\le|S|,
\]
we have
\[
\beta(S)
\le
2|S|-3,
\]
and hence
\[
\beta(S)+\tau(S)
\le
2|S|-3
<
2|S|-2.
\]
Thus condition~\eqref{eq:M} fails.  The same argument with $A$ and $B$
interchanged shows that \eqref{eq:M} implies
\[
\rank A=\rank B=R.
\]

Finally,
\[
\rank X\le\min\{\rank A,\rank B\}\le R.
\]
If $\rank A=\rank B=R$, choose left inverses
$L_AA=I_R$ and $L_BB=I_R$.  Then
\[
L_AXL_B^\top
=
L_AAB^\top L_B^\top
=
I_R,
\]
so $\rank X\ge R$, and therefore $\rank X=R$.  Conversely,
$\rank X=R$ forces both $\rank A$ and $\rank B$ to equal $R$.

\emph{Part (ii).}
By Part (i), condition~\eqref{eq:U} can be rewritten as
\[
\tau(S)\ge1
\qquad
\text{for every }S\subseteq[R],\ |S|\ge2,
\]
because then
\[
\beta(S)=2|S|-2.
\]
By Proposition~\ref{prop:matrix-activation},
\[
\tau(S)=0
\quad\Longleftrightarrow\quad
\mathsf F_A(S)\cup\mathsf F_B(S)
\text{ is connected}.
\]
Thus \eqref{eq:U} holds if and only if
\[
\mathsf F_A(S)\cup\mathsf F_B(S)
\]
is disconnected for every $S$ with $|S|\ge2$.

Since each $\mathsf F_A(S)$ and $\mathsf F_B(S)$ is a subgraph of this
union, both must themselves be disconnected.  Proposition~\ref{prop:matrix-separability}
therefore implies that $A$ and $B$ are row-separable.

Conversely, suppose $A$ and $B$ are row-separable.  Then each has full
column rank, since the pure rows for the $R$ columns are necessarily
distinct and produce a positive diagonal $R\times R$ submatrix.
By Proposition~\ref{prop:matrix-separability},
\[
\mathsf F_A(S)
\quad\text{and}\quad
\mathsf F_B(S)
\]
are edgeless for every $|S|\ge2$.  Hence their union is disconnected, so
$\tau(S)\ge1$ by Proposition~\ref{prop:matrix-activation}.  Since
$\beta(S)=2|S|-2$, we obtain
\[
\beta(S)+\tau(S)\ge2|S|-1,
\]
which is condition~\eqref{eq:U}.
\end{proof}

\subsection{Separability and Explicit Recovery}\label{subsec:matrix-recovery}

The matrix uniqueness criterion therefore has a completely observable
form.

\begin{corollary}[Separability, diagonal pattern, and recovery]
\label{cor:matrix-identifiability}
For the nonnegative matrix decomposition \eqref{eq:matrix-decomposition},
the following are equivalent:
\begin{enumerate}[(i)]
\item condition~\eqref{eq:U} holds;
\item the factorization is two-sided separable;
\item there exist pairwise distinct row indices
$i_1,\dots,i_R$ and pairwise distinct column indices
$j_1,\dots,j_R$ such that
\begin{equation}
	X_{i_rj_r}>0,
	\qquad
	X_{i_rj_t}=0,
	\qquad
	r\neq t.
	\label{eq:matrix-diagonal-pattern}
\end{equation}
\end{enumerate}
Under these conditions,
\[
\rankp(X)=R,
\]
and every nonnegative length-$R$ decomposition is equivalent to
\eqref{eq:matrix-decomposition}.  Moreover, if $i_r$ and $j_r$ are pure
indices for $r$ in $A$ and $B$, respectively, then
\begin{equation}
a_rb_r^\top
=
\frac{(Xe_{j_r})(e_{i_r}^{\top}X)}
{X_{i_rj_r}},
\qquad
r\in[R],
\label{eq:matrix-term-readout}
\end{equation}
and hence
\[
X
=
\sum_{r=1}^{R}
\frac{(Xe_{j_r})(e_{i_r}^{\top}X)}
{X_{i_rj_r}}.
\]
\end{corollary}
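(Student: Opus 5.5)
The equivalence of (i) and (ii) is Theorem~\ref{thm:matrix-reduction}(ii), so I will only prove (ii)$\Leftrightarrow$(iii), the consequences, and the readout formula.

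For (ii)$\Rightarrow$(iii), I take for each $r$ a row $i_r$ pure on $r$ in $A$ and a row $j_r$ pure on $r$ in $B$. Pure indices for distinct $r$ are distinct, since a row pure on $r$ vanishes in column $t\neq r$. Then
\[
X_{i_rj_t}=\sum_u A_{i_ru}B_{j_tu}=A_{i_rr}B_{j_tr}.
\]
This is positive when $t=r$ and zero when $t\neq r$, because $B_{j_tr}=0$.

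For (iii)$\Rightarrow$(ii), I would argue from supports. Fix $r$. The condition $X_{i_rj_r}>0$ gives some $u$ with $A_{i_ru}B_{j_ru}>0$. For every $t\neq r$, the condition $X_{i_rj_t}=0$ and nonnegativity give $A_{i_ru}B_{j_tu}=0$ for all $u$. In matrix terms, the $R\times R$ nonnegative matrix $M=(X_{i_rj_t})$ equals $\tilde A\tilde B^\top$, where $\tilde A$ and $\tilde B$ are the row submatrices of $A$ and $B$ indexed by the $i_r$ and the $j_t$. Since $M$ is positive diagonal, it is invertible and has nonnegative rank $R$.

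The key step, and the one I expect to be the main obstacle, is to show that $\tilde A$ and $\tilde B$ are monomial. The standard fact I would invoke is that if a positive diagonal matrix is written as $\tilde A\tilde B^\top$ with nonnegative $R\times R$ factors, then both factors are monomial. A direct argument: set $S_r=\{u:A_{i_ru}>0\}$ and $T_t=\{u:B_{j_tu}>0\}$. Diagonality gives $S_r\cap T_t=\varnothing$ for $t\neq r$, and positivity gives $S_r\cap T_r\neq\varnothing$. Choose $u_r\in S_r\cap T_r$. For $r\neq t$ we have $u_r\in S_r$ and $u_t\in T_t$; if $u_r=u_t$, then $S_r\cap T_t\neq\varnothing$, a contradiction. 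So $r\mapsto u_r$ is a bijection of $[R]$. Now $u_t\in T_t$ is disjoint from $S_r$ for every $r\neq t$, so $A_{i_ru_t}=0$. Hence row $i_r$ of $A$ is supported only on $u_r$, and it is pure on $u_r$. Because the map is surjective, every column has a pure row, so $A$ is row-separable; the same argument applies to $B$.

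Once (ii) holds, the statement that $\rankp(X)=R$ and that every length-$R$ decomposition is equivalent follows from Theorem~\ref{thm:main}, via (i).

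For the readout formula with pure indices, I would compute directly:
\[
Xe_{j_r}=\sum_t a_t B_{j_rt}=B_{j_rr}\,a_r,\qquad
e_{i_r}^\top X=A_{i_rr}\,b_r^\top,\qquad
X_{i_rj_r}=A_{i_rr}B_{j_rr}.
\]
Substituting these into \eqref{eq:matrix-term-readout} gives $a_rb_r^\top$, and summing over $r$ gives the final identity.
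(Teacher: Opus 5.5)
Your proposal is correct and matches the paper's proof essentially step for step: (i)$\Leftrightarrow$(ii) comes from Theorem~\ref{thm:matrix-reduction}, (ii)$\Rightarrow$(iii) and the readout formula are direct computations, and (iii)$\Rightarrow$(ii) uses the same argument (a common positive index $u_r$, which is the paper's $s(r)$, injectivity from the zero pattern, then vanishing of the off-pattern entries). The only difference is cosmetic: you keep the bijection $r\mapsto u_r$ explicit, whereas the paper relabels so that $s(r)=r$.
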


\begin{proof}
The equivalence of (i) and (ii) is Theorem~\ref{thm:matrix-reduction}.

\emph{(ii) implies (iii).}
For every $r$, choose a row $i_r$ of $A$ and a row $j_r$ of $B$ that are
pure on $r$.  These indices are pairwise distinct within each factor.
Writing
\[
\alpha_r=A_{i_rr}>0,
\qquad
\gamma_r=B_{j_rr}>0,
\]
we obtain
\[
X_{i_rj_t}
=
\sum_{u=1}^{R}
A_{i_ru}B_{j_tu}
=
\alpha_r\gamma_r\,\delta_{rt},
\]
which gives \eqref{eq:matrix-diagonal-pattern}.

\emph{(iii) implies (ii).}
Assume \eqref{eq:matrix-diagonal-pattern}.  For every $r$,
\[
X_{i_rj_r}
=
\sum_{u=1}^{R}A_{i_ru}B_{j_ru}
>0,
\]
so there exists at least one $s(r)$ such that
\[
A_{i_r,s(r)}>0,
\qquad
B_{j_r,s(r)}>0.
\]
The map $r\mapsto s(r)$ is injective.  Indeed, if $s(r)=s(t)=s$ for
$r\neq t$, then the term indexed by $s$ contributes
\[
A_{i_rs}B_{j_ts}>0
\]
to $X_{i_rj_t}$, contradicting the zero pattern.  Hence, after relabeling,
we may assume
\[
s(r)=r.
\]

For $t\neq r$, the diagonal pattern gives
\[
0=X_{i_rj_t}
\ge
A_{i_rt}B_{j_tt}.
\]
Since $B_{j_tt}>0$, we obtain
\[
A_{i_rt}=0.
\]
Thus row $i_r$ of $A$ is pure on $r$.  Symmetrically,
\[
0=X_{i_tj_r}
\ge
A_{i_tt}B_{j_rt}
\]
and $A_{i_tt}>0$ imply that row $j_r$ of $B$ is pure on $r$.  Hence the
factorization is two-sided separable.

The rank and uniqueness statements follow from Theorem~\ref{thm:main}.

\emph{Recovery.}
Since row $j_r$ of $B$ is pure on $r$,
\[
Xe_{j_r}
=
\sum_{u=1}^{R}a_uB_{j_ru}
=
\gamma_ra_r.
\]
Similarly, since row $i_r$ of $A$ is pure on $r$,
\[
e_{i_r}^{\top}X
=
\sum_{u=1}^{R}A_{i_ru}b_u^\top
=
\alpha_rb_r^\top.
\]
Finally,
\[
X_{i_rj_r}
=
\alpha_r\gamma_r.
\]
Substitution gives
\[
\frac{(Xe_{j_r})(e_{i_r}^{\top}X)}
{X_{i_rj_r}}
=
a_rb_r^\top,
\]
which proves \eqref{eq:matrix-term-readout}.
\end{proof}

\begin{remark}[Fast verification in the matrix case]
\label{rem:matrix-fast-verification}
The matrix specialization eliminates the subset enumeration appearing in
the general criterion.  Condition~\eqref{eq:M} is equivalent to
\[
\rank X=R,
\]
and condition~\eqref{eq:U} is equivalent to two-sided separability, which
can be checked directly from the factor matrices.  Thus no enumeration of
subsets and no linear-programming vertex tests are needed for the matrix
certificates.
\end{remark}

%
\section{Conclusion}\label{sec:conclusion}

This paper develops a deterministic identifiability theory for nonnegative
tensor decompositions based on two complementary sources of information.
The Lovitz--Petrov dimension budget captures the linear-algebraic
constraints imposed by the factor spans, while the positive scattering term
captures additional rigidity created by nonnegativity and support geometry.
The positive splitting inequality combines these two effects and yields
separate thresholds for minimality and uniqueness.  Although the scattering
term is defined through an optimization over intermediate factor spaces, its
mode costs reduce exactly to $\{0,1,+\infty\}$, leading to a finite activation
problem on a graph.  The resulting criterion can strictly improve upon
dimension-based uniqueness conditions, including in sparse examples where
reshaping does not recover the Lovitz--Petrov condition.

The theory also clarifies how identifiability behaves under natural
structural operations.  Appending nonnegative modes can only strengthen the
combined dimension--scattering criterion, while reshaping provides
additional flexibility by changing the grouping of the modes.  In the
matrix boundary case, the two criteria admit exact closed-form
interpretations: minimality reduces to ordinary full-rank factorization,
while uniqueness reduces to two-sided separability and admits an explicit
term-recovery formula.

Several directions remain open.  The full certificate still requires
checking all nontrivial subsets of components, motivating the search for
more economical sufficient conditions or algorithms that exploit additional
structure.  The interaction between scattering and reshaping is also not
fully understood: while grouping can increase the dimension budget, we do
not yet have a general comparison between grouped and ungrouped scattering
terms.  Finally, the matrix specialization identifies two-sided
separability as the exact boundary case of the present criterion, leaving
open the question of which broader nonnegative matrix identifiability
phenomena admit genuine higher-order analogues.  More generally, the
positive-scattering perspective suggests a broader program of incorporating
structural constraints beyond linear independence into deterministic
identifiability theory.

\bibliographystyle{plainnat}

\end{document}